%% file: main.tex
\documentclass[runningheads]{llncs}

\usepackage{tcolorbox}
\usepackage{graphicx}
\usepackage{svg}
\usepackage{amsmath}
\DeclareMathOperator*{\argmax}{arg\,max}
\usepackage{booktabs}
\usepackage[table]{xcolor}
\usepackage{multirow}
\usepackage{adjustbox}
\usepackage{wrapfig}
\usepackage{algorithm}
\usepackage{algorithmic}
\usepackage{pifont}
\usepackage{enumitem}
\usepackage{booktabs}
\usepackage{xspace}
\usepackage{colortbl}
\usepackage{makecell}
\usepackage[skip=3pt]{caption}
\newcommand{\cmark}{\ding{51}}
\newcommand{\xmark}{\ding{55}}
\usepackage[accsupp]{axessibility}  
\newcommand{\method}{\textbf{VERDICT}\xspace}

\usepackage[pagebackref,breaklinks,colorlinks]{hyperref}
\usepackage{cleveref}
\usepackage{orcidlink}

\begin{document}

\title{\method: Training-Free Step-Wise Verification of Multimodal Reasoning via Disagreement-Aware Consensus} 

\titlerunning{VERDICT - Training-Free Step-Wise Verification}

\author{Rohit Sinha*\inst{1} \and
Kunal Tilaganji*\inst{1} \and
 Tanuja Ganu \inst{2} \and 
 Nagarajan Natarajan\inst{2} \and
 Amit Sharma\inst{2} \and
Vineeth Balasubramanian\inst{1,2}}

\authorrunning{Sinha et al.}

\institute{Indian Institute of Technology, Hyderabad\\ 
\url{https://www.iith.ac.in/}\and
Microsoft Research\\ 
 \url{https://www.microsoft.com/en-us/research/lab/microsoft-research-india/}}

\maketitle

\begin{abstract}

Multimodal large language models often generate reasoning chains containing subtle errors that lead to incorrect answers. Current verification approaches have notable limitations. Existing approaches either require expensive labelled supervision with inconsistent cross-task performance or aggregate scores from multiple sources by simple aggregations, missing a key insight: when these scores disagree, that disagreement itself carries important information about whether a reasoning step is truly valid or not. We formalise this as a coupled scoring problem among disparate, frozen verifiers, interpretable as a coordination game with a unique closed-form equilibrium where agreement signals valid steps while disagreement reveals instability. Towards this end, we propose a training-free domain-agnostic step-wise verification approach we call \method: \textbf{VER}ification via \textbf{D}isagreement-\textbf{I}nformed \textbf{C}oupled \textbf{T}hresholding. To our knowledge, \method is the first training-free verifier that makes the structure of cross-modal disagreement explicit and actionable. It computes consensus scores through a closed-form solution, enabling both disagreement-aware filtering and stability-conscious ranking of reasoning steps. Evaluated across six benchmarks, \method consistently improves over the base model by up to +5.95\%, and performs competitively with domain-specific critics that demand extensive supervision, demonstrating that cross-modal agreement provides robust verification signals without task-specific adaptation.
  \keywords{Multimodal Reasoning \and Step-wise Verification \and Multi-Agent Consensus \and Training-Free Verification}
\end{abstract}

\input{sections/1_2_introduction}

\input{sections/2_2_related_works}

\input{sections/3_preliminaries}

\input{sections/4_Method}
\input{sections/5_6_Exp_Results}
\input{sections/7_2_Ablation}

\input{sections/8_9_2_discuss_concl}

\clearpage

\textbf{Table of Contents}
\begin{itemize}
  \item S1. Hyperparameters
  \item S2. Implementation Details: VERDICT
  \item S3. Detailed Experimental Setup
  \item S4. Additional Results
  \begin{itemize}
    \item S4.1 Comparison with Test-Time Scaling Methods
    \item S4.2 Cross-Benchmark Dispersion Tolerance Sensitivity
    \item S4.3 Cross-Benchmark Confidence Threshold Analysis
    \item S4.4 Threshold Sensitivity: Extended Analysis
    \item S4.5 Joint Threshold Interaction Analysis
    \item S4.6 Cross-Benchmark Judge Scale Analysis
    \item S4.7 Cross-Benchmark Stubbornness Sensitivity
    \item S4.8 Cross-Benchmark Stubbornness Assignment Analysis
    \item S4.9 Dispersion as a Diagnostic Signal
    \item S4.10 Confidence--Dispersion Landscape
    \item S4.11 Rejection and Selection: Detailed Analysis
    \item S4.12 Fallback Mechanism Analysis
    \item S4.13 Generalization Across Model Families
  \end{itemize}
  \item S5 Detailed Computational Cost Analysis
  \item S6 Closed-Form Derivation
  \begin{itemize}
    \item S6.1 Matrix Construction
    \item S6.2 Invertibility: Formal Proof
    \item S6.3 Scalar Reduction and Explicit Solution
    \item S6.4 Stubbornness-Weighted Sum Invariant
    \item S6.5 Extension to General $m$
    \item S6.6 Numerical Verification
  \end{itemize}
  \item S7 Existence and Uniqueness of the Consensus Fixed Point
  \item S8 Proposition 1: Detailed Proof and Worked Examples
  \item S9 Consensus Dispersion vs.\ Weighted Averaging: Illustrative Scenarios
  \item S10 Connections to Social Choice Theory and Belief Aggregation
  \item S11 Qualitative Samples
  \item S12 Prompt Templates
\end{itemize}

\clearpage

\input{suppl_sections/takeaways}
\input{suppl_sections/hyperparameters}
\input{suppl_sections/app_implementation}
\input{suppl_sections/app_experimental_setup}

\input{suppl_sections/additional_results/additional_result_v2}

\input{suppl_sections/compute_cost_v3}
\input{suppl_sections/closed_form_derivation}
\input{suppl_sections/game_proof}

\input{suppl_sections/prop_proof_v2}
\input{suppl_sections/nash_not_avg}

\input{suppl_sections/social_choice_v2}


\input{suppl_sections/qualitative_samples_v2}
\clearpage
\input{suppl_sections/prompt_template}
\clearpage

%

\bibliographystyle{splncs04}
\bibliography{main}
\end{document}

%% file: sections/1_2_introduction.tex
\section{Introduction}
\vspace{-7pt}

Multimodal large language models (MLLMs)~\cite{bai2025qwen3vltechnicalreport,NEURIPS2023_6dcf277e} have demonstrated impressive multi-step reasoning capabilities over images and text. By decomposing complex questions into chains of intermediate steps, these models can tackle compositional visual reasoning tasks that require both perceptual grounding and logical inference ~\cite{zhang2024multimodalchainofthoughtreasoninglanguage,chen2024m3cotnovelbenchmarkmultidomain}. Yet their reasoning chains often contain subtle errors, unsupported visual claims, logical gaps, and hallucinated spatial relationships that are difficult to detect precisely because each step \emph{appears} locally plausible~\cite{li2023evaluating,liu2024hallucination}. These errors propagate: a single flawed step early in the chain can cascade through subsequent reasoning, ultimately leading to an incorrect final answer. This motivates \emph{step-level} verification, catching errors where they arise rather than after they have shaped the trajectory of the reasoning chain.

\begin{figure}[!t]
    \centering
    \includegraphics[width=\linewidth]{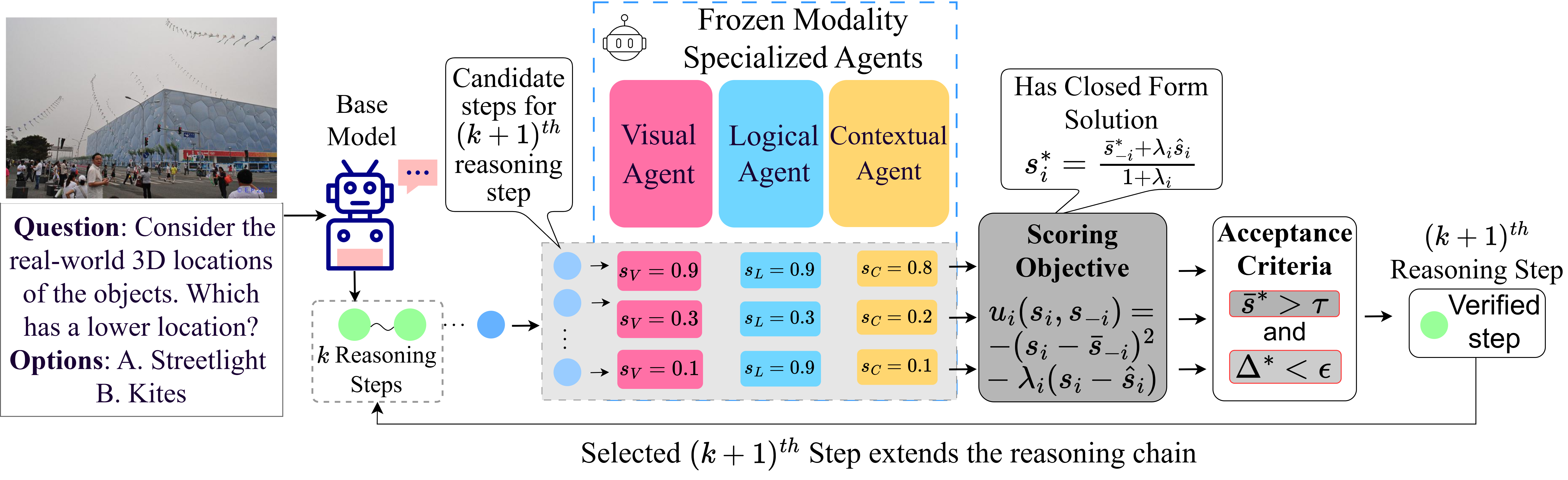}
    \caption{An illustrative overview of \method.Given a multimodal question and partial reasoning trace, the base model generates candidate continuations that are independently scored by three frozen, modality-specialized agents. A closed-form consensus computation transforms the raw scores into disagreement-aware consensus scores; a dual acceptance criterion then filters by mean confidence and consensus dispersion, selecting the most stable step to extend the reasoning chain.}
    \label{fig:methods_illustration}
\end{figure}

\noindent
Existing verification approaches fall into two broad categories, each with notable limitations. \textbf{Domain Specific critics}, including process reward models (PRMs)~\cite{lightman2023prm800k,wang2023mathshepherd,luo2024improvemathematicalreasoninglanguage} and their multimodal extensions such as VisualPRM~\cite{visualprm}, Sherlock~\cite{sherlock}, LLaVA-Critic~\cite{wang2025llavacriticr1criticmodelsecretly}, and MM-Verify~\cite{sun2025mmverify}, score individual reasoning steps using trained models. However, these require carefully labeled supervision, often obtained through expensive Monte Carlo rollouts or human annotations~\cite{luo2024improvemathematicalreasoninglanguage,wang2024visualprm}. More critically, they show inconsistent cross-task performance: while they may improve accuracy on one benchmark, they frequently degrade on others. This limitation is especially acute in data-scarce domains, where the labeled supervision these critics require is prohibitively expensive or simply unavailable.\textbf {Training-free aggregation methods} like Weaver~\cite{weaver} and reward model ensembles~\cite{helpingherdingrewardmodel,coste2024rewardmodelensembleshelp} combine scores from multiple sources, but they miss a critical insight. Consider two scoring patterns: unanimous confidence $(0.7, 0.7, 0.7)$ and conflicting evidence $(0.9, 0.3, 0.9)$. Simple averaging treats both as equivalent, yet they reflect fundamentally different verification states. This problem is compounded by well-documented capability tensions within MLLMs themselves: visual grounding can disrupt reasoning~\cite{kang2025vgentvisualgroundingmodular,zhang2023llavagrounding}, visual instruction tuning degrades language performance~\cite{shen2024multimodal}, and extended reasoning chains cause forgetting of earlier context~\cite{tian2025thoughtaccuracydualnature}. These tensions mean that different evaluation perspectives can legitimately disagree about the same reasoning step and that disagreement carries diagnostic value.

\noindent
Our key insight is that when a reasoning step is truly valid, disparate judges evaluating it from different perspectives, such as visual grounding, logical consistency, and contextual relevance, should be able to converge. When they \emph{cannot} agree, even after accounting for each other's views, the step is likely unstable. Rather than treating verification as a classification or regression task, we treat it as a \emph{coupled scoring problem} among disparate sources of evidence. We formalize this as a coupled scoring framework among frozen, modality-specialized judges, whose consensus yields disagreement-aware scores. The consensus captures how each judge would revise their belief in light of what others think, producing scores that naturally balance collective confidence against residual disagreement. Unlike variance-based approaches (which ignore belief strength) or simple averaging (which buries disagreement), \method is a disagreement-aware consensus framework, which makes the structure of conflict explicit and actionable.

\noindent
Concretely, three frozen agents, Visual, Logical, and Contextual, disparately score each candidate's reasoning step. A closed-form consensus computation then transforms these raw scores into consensus-adjusted scores, without any iterative optimization or learning. A dual acceptance criterion combines a mean confidence check (ensuring sufficient collective endorsement) with a dispersion check (requiring cross-modal agreement). Among accepted candidates, ranking by consensus confidence selects the most stable continuation. The entire framework operates as a plug-in verifier requiring no modification to the base model, no training data, and no task-specific adaptation. This formulation admits a natural game-theoretic interpretation: each agent is a player in a coordination game, and the closed-form solution corresponds to its unique Nash equilibrium \cite{nash1950equilibrium}, grounding the influence structure in equilibrium theory rather than ad hoc design choices.

\noindent
Evaluated across six benchmarks spanning spatial reasoning, visual grounding, and multimodal abstraction, \method achieves consistent improvements of up to $+5.95\%$ over baseline models and performs competitively with or surpasses domain-specific critics that require extensive labeled training data. Crucially, our method never degrades below baseline on any tested benchmark, a property that none of the domain-specific critics we evaluate can claim, as they show significant performance variability across tasks.

\noindent
It is worth noting that Averaging~\cite{coste2024rewardmodelensembleshelp,wangmath} is symmetric as every judge's deviation counts equally, and the pattern of who disagrees with whom is lost. Variance-based filtering~\cite{NIPS2017_9ef2ed4b} treats a stubborn visual agent and a flexible contextual agent identically. Our disagreement-aware consensus formulation introduces \emph{coupling} where each agent's adjusted score depends on
all others, so disagreement from a stubborn agent weighs more heavily: an asymmetry no fixed weighting can recover. This coupling is what distinguishes \method. Our contributions are summarized below:

\vspace{-5pt}
\begin{itemize}

    \item We formalize step-wise verification as a coupled scoring problem among disparate, frozen verifiers, establishing disagreement structure as a first-class verification signal and proposing disagreement-aware verification as a design principle.

    \item We introduce \method, a consensus verifier with a closed-form solution (the unique equilibrium of a coordination game among modality-specialized judges) that models cross-modal agreement, enabling both disagreement-aware filtering and stability-conscious ranking of candidate reasoning steps.

    \item We provide a domain-agnostic, training-free, plug-in implementation that integrates directly into MLLM reasoning without base model modification and demonstrates consistent improvements across six benchmarks spanning spatial reasoning, visual grounding, and multimodal abstraction.

    \item We conduct comprehensive ablation studies revealing that the framework operates through two complementary mechanisms, rejection and selection, and that consensus scores function optimally as ranking tools rather than binary classifiers.

\end{itemize}
\vspace{-15pt}

%% file: sections/2_2_related_works.tex
\section{Related Work}
\vspace{-7pt}

\begin{figure}[!t]
    \centering
    \includegraphics[width=\linewidth]{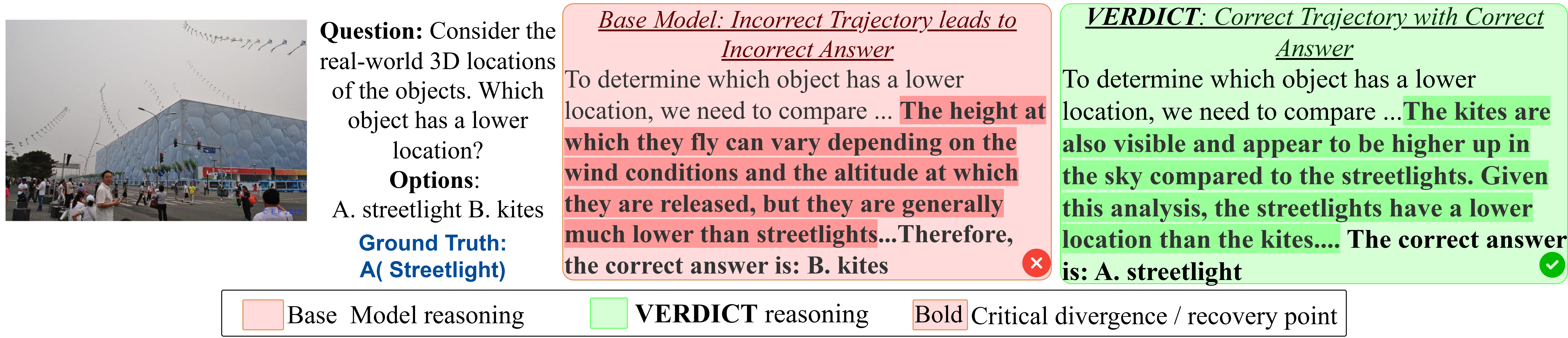}
    \caption{Qualitative examples showing base model reasoning (orange) and 
    VERDICT-verified reasoning (blue). Bold highlighted text marks the critical 
    sentence where the base model commits to an incorrect reasoning trajectory. 
    VERDICT recovers by selecting an alternative candidate step that corrects 
    the judgment, yielding the correct answer. Full reasoning chains are provided 
    in the supplementary material.}
    \label{fig:quali_main}
\vspace{-15pt}
\end{figure}

Step-level verification has largely developed along two tracks. Process reward 
models from PRM800K~\cite{lightman2023prm800k} through 
Math-Shepherd~\cite{wang2023mathshepherd} and OmegaPRM~\cite{luo2024improvemathematicalreasoninglanguage} 
to multimodal extensions like MM-PRM~\cite{mmprm} and 
VisualPRM~\cite{visualprm} demonstrate that intermediate supervision improves 
reasoning reliability, but at the cost of expensive human annotation or Monte 
Carlo rollouts that our approach requires none of. Domain-specific critic 
models~\cite{sherlock,criticv,sun2025mmverify,wang2025llavacriticr1criticmodelsecretly,li2025selfrewardingvisionlanguagemodelreasoning} achieve strong results on specific benchmarks yet implicitly collapse 
heterogeneous evidence into a single score, which, as our experiments show, 
leads to fragile cross-task generalization; a critic that learns to reward one 
reasoning style can quietly penalize another, and the scalar it produces carries 
no record of which modality drove the judgment.

\begin{wraptable}{r}{0.65\linewidth}
\centering
\vspace{-10pt}
\setlength{\tabcolsep}{4pt}
\renewcommand{\arraystretch}{1.2}
\scriptsize
\caption{\scriptsize VERDICT vs.\ prior methods across three design axes (Section~3).}
\begin{tabular}{lccc}
\toprule
\textbf{Method} 
    & \makecell{\textbf{Training}\\\textbf{Free}} 
    & \makecell{\textbf{Disagree.}\\\textbf{Aware}} 
    & \makecell{\textbf{Plug-in}\\\textbf{Compat.}} \\
\midrule
Domain Specific Critics  
    & \xmark & \xmark & \xmark \\
    ~\cite{lightman2023prm800k,visualprm,sherlock,wang2025llavacriticr1criticmodelsecretly,cao2025dreamprm} & & & \\
Reward Ensembles~\cite{helpingherdingrewardmodel}
    & \cmark & \xmark & \cmark \\
Weaver~\cite{weaver}                            
    & \cmark & \xmark & \cmark \\
Variance Filtering~\cite{NIPS2017_9ef2ed4b}     
    & \cmark & \xmark & \cmark \\
\midrule
\rowcolor{green!10}
\textbf{VERDICT (Ours)}                         
    & \cmark & \cmark & \cmark \\
\bottomrule
\end{tabular}

\vspace{-8pt}
\label{tab:positioning}
\end{wraptable}

\noindent This opacity is not incidental; 
it is structural, and it is what motivates an explicit treatment of disagreement 
rather than a better-trained aggregator. Ensemble and multi-agent 
methods~\cite{helpingherdingrewardmodel,weaver,du2023improvingfactualityreasoninglanguage,NIPS2017_9ef2ed4b} 
take a step toward robustness by combining multiple sources of evidence, but 
aggregate through averaging or majority voting, treating all patterns of 
agreement and disagreement as equivalent; a three-way split and a near-unanimous 
vote are indistinguishable once reduced to a mean. The richer question- not 
whether agents disagree, but how and between whom is precisely what averaging 
discards and what our framework is designed to recover. Multi-agent formulations, 
prominent in GANs and adversarial training~\cite{roughgarden2016twenty}, have so 
far relied on adversarial zero-sum games where agents work against one another by 
design. Our formulation departs from all of these: we employ a coupled scoring 
framework in which frozen, modality-specialized judges seek consensus rather than 
oppose one another, and where the closed-form solution makes the structure of 
their disagreement explicit rather than averaging it away: a property that, to 
our knowledge, no existing training-free step-wise verification method provides. Table~\ref{tab:positioning} summarizes how VERDICT relates to prior 
work along the three axes that motivate its design.

%% file: sections/3_preliminaries.tex
\vspace{-15pt}
\section{Preliminaries}
\vspace{-15pt}

In this section, we formalize the verification setting and define the problem addressed by our approach.

\noindent\textbf{Verification Setting.}
We consider a setting where a base MLLM generates a reasoning chain step by step, and each step must be evaluated before extending the chain further. Formally, at reasoning step $t$, the base model is provided with an image $I$, a question $Q$, and a partial reasoning trace $r_{1:t-1}$ consisting of all previously accepted steps. From this context, the base model generates $n$ candidate continuations $\{r_t^{(1)}, \ldots, r_t^{(n)}\}$ via sampling. A verifier then evaluates each candidate and returns a binary decision: \emph{accept} the step and continue reasoning, or \emph{reject} it and resample. Crucially, evaluation is \emph{local} to the current step; it does not require access to future reasoning or to the final answer. This locality allows errors to be intercepted at the point where they arise, before they propagate through subsequent steps.

\noindent\textbf{Problem Definition.}
Given $n$ candidate steps at position $t$, $m$ specialized judge agents 
$\{A_1, \ldots, A_m\}$, and context $(I, Q, r_{1:t-1})$, the verification 
problem is to produce a single verified continuation $r_t^*$. Each agent $A_i$ 
independently scores every candidate $r_t^{(j)}$ with a scalar confidence 
$\hat{s}_i^{(j)} \in [0, 1]$ reflecting its modality-specific assessment. The 
verifier must aggregate these heterogeneous assessments into a selection decision, 
accounting for both collective confidence and inter-agent agreement.

\noindent
We impose three design constraints: (1)~\emph{training-free}, requiring no 
labeled data or parameter updates; (2)~\emph{disagreement-aware}, treating 
cross-modal conflict as a diagnostic signal rather than noise; and 
(3)~\emph{plug-in compatible}, integrating with any base MLLM without 
architectural modification. Domain-specific critics violate~(1); standard 
aggregation methods violate~(2). Constraint~(2) deserves emphasis: variance-based 
rejection treats all agents symmetrically, contributing equally regardless of 
their modality-specific reliability, and any two score vectors with identical 
means and per-agent deviations are deemed equivalent even when their disagreement 
patterns differ qualitatively. What is needed is a mechanism that \emph{couples} 
agents one in which each agent's adjusted assessment depends on where every 
other agent landed. As we show in Section~\ref{section:method} 
(Proposition~\ref{prop:nash_vs_wa}), this coupling cannot be replicated by any 
separable function of individual scores, and it is precisely what \method 
provides through a coordination-game formulation.

%% file: sections/4_Method.tex
\section{\method: Disagreement-Aware Consensus Verification}
\label{section:method}
\vspace{-7pt}
We now present \method, our verification framework. We first describe the verifier agents and their roles, then formulate their interaction as a coupled scoring problem, derive the closed-form consensus , and finally define the acceptance criterion used to select verified reasoning steps.

\noindent\textbf{Verifier Agents.}
Verification is carried out by three frozen MLLMs, each prompted to judge a candidate's reasoning step from a distinct modality-specific perspective. We instantiate three agents:
\begin{itemize}
    \item \textbf{Visual Agent (V):} assesses whether objects and spatial relationships mentioned in the reasoning step are visually verifiable in the provided image. It checks whether the step is grounded in what can actually be observed.
    \item \textbf{Logical Agent (L):} evaluates whether the reasoning step follows logically from the preceding steps and makes progress toward answering the question. It checks for coherence, valid inferences, and logical gaps.
    \item \textbf{Contextual Agent (C):} determines whether the step maintains focus on the original question and avoids introducing irrelevant or tangential information. It penalizes unnecessary speculation and off-topic claims.
\end{itemize}

\noindent
Each agent receives the image $I$, question $Q$, the partial reasoning trace $r_{1:t-1}$ (assumed correct), and the current candidate step $r_t^{(j)}$ to evaluate. It outputs a single scalar score $\hat{s}_i \in [0,1]$, interpreted as its subjective confidence that the step is valid given its modality-specific evidence. Agents operate in \emph{complete isolation}: they never see other agents' scores, their prompts contain no information about other agents' judgments, and no fine-tuning or calibration is performed.

\noindent
\textit{Number of agents:}
The formulation is defined for arbitrary~$m$ agents: Eqs.~\eqref{eq:payoff}-\eqref{eq:acceptance} and Proposition~\cref{prop:nash_vs_wa} hold for any~$m \geq 2$. We instantiate three because they map onto three largely orthogonal MLLM failure modes: unsupported visual claims~\cite{kang2025vgentvisualgroundingmodular,zhang2023llavagrounding}, logical gaps~\cite{tian2025thoughtaccuracydualnature}, and contextual drift~\cite{shen2024multimodal}. The coupled system remains closed-form for larger~$m$, and its cost is dominated by the~$m$ scoring calls rather than the~$m \times m$ linear solve. The binding constraint for adding agents is semantic, not computational: each must bring a genuinely disparate  evaluation perspective.

\noindent\textbf{Coupled Scoring Formulation.}
Rather than aggregating raw scores directly, we frame verification as a coupled scoring problem among the agents. The motivating intuition is straightforward: if a reasoning step is truly valid, disparate  judges evaluating it from different perspectives should be able to reach agreement. If they \emph{cannot} agree even after accounting for each other's views, the step is likely unstable.

\noindent
We model this interaction as a coupled system in which each agent $i$ selects a reported score $s_i \in [0,1]$, balancing agreement with the other agents against fidelity to its own modality-specific judgment. The  scoring objective to agent $i$ is defined as:
\begin{equation}
    u_i(s_i, s_{-i}) = -(s_i - \bar{s}_{-i})^2 - \lambda_i(s_i - \hat{s}_i)^2
    \label{eq:payoff}
\end{equation}
where $\bar{s}_{-i} = \frac{1}{m-1}\sum_{j \neq i} s_j$ denotes the mean reported score of the other agents, $\hat{s}_i$ is agent $i$'s raw confidence score, and $\lambda_i > 0$ controls how strongly agent $i$ weights its own evidence relative to group consensus. The first term encourages agreement: it penalizes the agent for deviating from what others report. The second term enforces self-consistency: it penalizes deviation from the agent's original belief, scaled by $\lambda_i$.
Under the coordination game interpretation, Eq. \cref{eq:payoff} is each agent's payoff: it is maximized when the agent finds a score that balances fidelity to its own signal against coordination with others, which is the defining tension of a coordination game.
\noindent
The stubbornness parameter $\lambda_i$ encodes how resistant each agent should be to consensus pressure. A higher $\lambda_i$ means agent $i$ trusts its own modality-specific evidence more strongly and will deviate less from its raw score even when others disagree. In our experiments, we set $\lambda_V = 1.5$, $\lambda_L = 1.0$, and $\lambda_C = 0.8$, reflecting the intuition that the visual agent should be most resistant to consensus pressure on perception-heavy tasks, while the contextual agent may be more flexible when visual or logical evidence is strong. These values are fixed across all datasets. Since the objective function in Eq.~\eqref{eq:payoff} is strictly concave in each agent's own score (the second derivative $\frac{\partial^2 u_i}{\partial s_i^2} = -2(1 + \lambda_i) < 0$ for all $\lambda_i > 0$), the coupled system admits a unique fixed point (equivalent to the unique Nash equilibrium of the induced coordination game). This follows from Rosen's uniqueness result for concave interaction systems,  ~\cite{rosen1965existence}.

\noindent\textbf{Closed-Form Solution.}
At the consensus solution, each agent's reported score satisfies the following:
\begin{equation}
    s_i^* = \frac{\bar{s}_{-i}^* + \lambda_i \hat{s}_i}{1 + \lambda_i}
    \label{eq:equilibrium}
\end{equation}

This system of equations can be rewritten as a linear system:
\begin{equation}
    (1 + \lambda_i)\, s_i^* - \frac{1}{m-1} \sum_{j \neq i} s_j^* = \lambda_i \hat{s}_i, \quad i = 1, \ldots, m
    \label{eq:linear_system}
\end{equation}
which admits a closed-form solution and can be solved directly from the raw scores $\{\hat{s}_i\}$ without any iterative optimization, learning, or approximation.

\noindent
A key property of the consensus solution is that it \emph{preserves the mean while dampening disagreement}: the mean consensus score equals the mean raw score, but individual scores converge as disagreement diminishes proportionally to inter-agent conflict. This property is important because it allows us to separate two distinct failure modes. The first is \emph{collective doubt}: steps where all agents assign low confidence, resulting in a low mean score. The second is \emph{conflicting evidence}: steps where the mean confidence is high but agents fundamentally disagree, producing high dispersion. Simple averaging conflates these two cases; for instance, it treats $(0.6, 0.6, 0.6)$ and $(0.9, 0.3, 0.6)$ identically since both yield the same mean. However, the consensus dispersion reveals that the second case reflects unresolved cross-modal instability, while the first reflects genuine (if modest) consensus.

\noindent
The consensus solution adjusts individual scores but not their collective
level: $\bar{s}^* = \frac{1}{m}\sum_i s_i^*
= \frac{1}{m}\sum_i \hat{s}_i$, a direct consequence of the
symmetry in Eq.~\eqref{eq:linear_system} that holds for any
$\{\lambda_i\}$. The coupled system cannot inflate confidence.
Its effect is confined to redistributing scores around a fixed mean,
isolating cross-modal conflict from collective doubt.

\noindent\textbf{Acceptance Criterion and Step Selection.}
Once consensus scores $\{s_i^*\}$ are computed for each candidate, we derive two summary statistics: the \emph{mean consensus confidence} $\bar{s}^* = \frac{1}{m}\sum_i s_i^*$, measuring collective endorsement of the step, and the \emph{consensus dispersion} $\Delta^* = \frac{1}{m}\sum_i |s_i^* - \bar{s}^*|$, measuring residual disagreement after consensus adjustment.

\noindent
A candidate step is accepted if and only if it satisfies a dual criterion: the mean confidence must exceed a threshold $\tau$ \emph{and} the dispersion must fall below a tolerance $\epsilon$:
\begin{equation}
    \text{accept}(r_t^{(j)}) \iff \bar{s}^{*(j)} > \tau \;\;\wedge\;\; \Delta^{*(j)} < \epsilon
    \label{eq:acceptance}
\end{equation}

\noindent 
This dual criterion enables two complementary verification mechanisms. The dispersion check ($\Delta^* < \epsilon$) filters candidates where cross-modal evidence fundamentally conflicts, while the confidence check ($\bar{s}^* > \tau$) ensures sufficient collective endorsement. Among accepted candidates, the step with the highest $\bar{s}^*$ is selected to extend the reasoning trace, prioritizing the most stable continuation. When no candidate satisfies both criteria, indicating that all options are suboptimal, the system falls back to continuous ranking by $\bar{s}^* - \Delta^*$, selecting the candidate that best balances confidence against remaining disagreement. This fallback ensures that the reasoning process can always continue while still preferring more stable steps.

\noindent\textbf{Numerical Illustration.}
Consider raw scores $\hat{\mathbf{s}} = (0.9,\, 0.3,\, 0.9)$ from the 
Visual, Logical, and Contextual agents with $\lambda_V{=}1.5$,
$\lambda_L{=}1.0$, $\lambda_C{=}0.8$. The linear system in
Eq.~\eqref{eq:linear_system} yields $\mathbf{s}^* \approx (0.80,\,
0.55,\, 0.77)$: the outlying Logical agent is pulled from $0.30$ to
$0.55$, the stubborn Visual agent shifts only $0.10$, and the flexible
Contextual agent shifts $0.13$: asymmetry driven directly by the
$\lambda_i$ coupling in Eq.~\eqref{eq:equilibrium}. The resulting
$\bar{s}^* \approx 0.71$ and $\Delta^* \approx 0.11$: the confidence
check passes ($0.71 > \tau{=}0.6$) but dispersion fails
($0.11 > \epsilon{=}0.1$), So the step is rejected as cross-modally
unstable. Contrast $\hat{\mathbf{s}} = (0.7,\, 0.7,\, 0.7)$: consensus
returns $\Delta^* = 0$ and the step is accepted: same raw mean,
opposite decision, a distinction simple averaging cannot make.

\noindent\textbf{Where the consensus formulation adds value.}\;
By construction, $\bar{s}^*$ equals the raw mean: the consensus does not
shift collective confidence. Its contribution lies entirely in the individual equilibrium scores $\{s_i^*\}$ and the resulting dispersion~$\Delta^*$. One might argue that since Eq.~\eqref{eq:equilibrium} admits a closed-form solution, one could define the same asymmetric dispersion directly, without motivating the coupled system. This is correct computationally, but it misses what the formulation provides. Defining a weighted dispersion requires choosing how much each agent's deviation should count: a design decision with no principled grounding. The coupled scoring framework  \emph{derives} these weights from a single interpretable assumption: each agent trades off agreement against
fidelity to its own evidence, governed by stubbornness $\lambda_i$. This is precisely the assumption that defines a coordination game. The fixed point uniquely determines the rest.

\noindent
Crucially, the consensus solution introduces \emph{coupling} that no per-agent measure can replicate. Because agent~$i$'s adjusted score $s_i^*$ depends on all other agents' raw scores through the linear system in Eq.~\eqref{eq:linear_system}, its residual $|s_i^* - \bar{s}^*|$ reflects not just its own deviation but how that deviation interacts with where other agents land. We formalize this below.
\vspace{-2pt}
\begin{proposition}[Consensus Dispersion is Not Recoverable from Weighted Averages]
\label{prop:nash_vs_wa}
For any fixed weight vector $\mathbf{w}$, there exist score
vectors $\hat{\mathbf{s}} \neq \hat{\mathbf{s}}'$ such that
$\mathbf{w}^\top \hat{\mathbf{s}} = \mathbf{w}^\top \hat{\mathbf{s}}'$
yet $\Delta^*(\hat{\mathbf{s}}) \neq \Delta^*(\hat{\mathbf{s}}')$.
As for any agent~$i$, the consensus residual $|s_i^* - \bar{s}^*|$ depends on
the full raw score vector $\hat{\mathbf{s}} = (\hat{s}_1, \ldots, \hat{s}_m)$, not solely on~$\hat{s}_i$. Consequently, the consensus dispersion
$\Delta^* = \frac{1}{m}\sum_{i}|s_i^* - \bar{s}^*|$ cannot be decomposed as a
separable function $\sum_i f_i(\hat{s}_i)$ for any choice of per agent functions~$\{f_i\}$. 
\end{proposition}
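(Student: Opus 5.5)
The plan is to work directly with the linear system in Eq.~\eqref{eq:linear_system}. I would write it as $M\mathbf{s}^* = \Lambda\hat{\mathbf{s}}$, where $\Lambda=\mathrm{diag}(\lambda_i)$ and $M = \mathrm{diag}(1+\lambda_i) - \frac{1}{m-1}(\mathbf{1}\mathbf{1}^\top - I)$. $M$ is invertible because it is strictly diagonally dominant: each diagonal entry is $1+\lambda_i$, while the off-diagonal entries in each row sum in absolute value to $1$. So $\mathbf{s}^* = M^{-1}\Lambda\hat{\mathbf{s}}$ is a linear map.

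Two structural facts about this map drive everything.

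\textbf{(a) Zero dispersion characterises constant raw scores.} If $\hat{\mathbf{s}}=a\mathbf{1}$, then $\mathbf{s}^*=a\mathbf{1}$ solves the system, so $\Delta^*=0$. Conversely, suppose $\mathbf{s}^*=a\mathbf{1}$. Row $i$ of Eq.~\eqref{eq:linear_system} then reads $(1+\lambda_i)a - a = \lambda_i\hat{s}_i$, so $\hat{s}_i=a$ for every $i$. Hence $\Delta^*(\hat{\mathbf{s}})=0$ if and only if $\hat{\mathbf{s}}$ is constant.

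\textbf{(b) Shift invariance.} By linearity and (a), $\mathbf{s}^*(\hat{\mathbf{s}}+b\mathbf{1}) = \mathbf{s}^*(\hat{\mathbf{s}})+b\mathbf{1}$. Therefore $\Delta^*$ is invariant under adding constants to the raw scores.

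For the first claim I would split on $\mathbf{w}^\top\mathbf{1}$. If $\mathbf{w}^\top\mathbf{1}\neq 0$, take $\hat{\mathbf{s}}=a\mathbf{1}$ with $a\in(0,1)$, so $\Delta^*=0$. The level set $\{\mathbf{w}^\top\mathbf{x}=\mathbf{w}^\top\hat{\mathbf{s}}\}$ is an affine hyperplane of dimension $m-1\ge 1$ that meets the constant line only at $\hat{\mathbf{s}}$. It therefore contains a nonconstant $\hat{\mathbf{s}}'=\hat{\mathbf{s}}+\delta\mathbf{v}$, with $\mathbf{v}\perp\mathbf{w}$ and $\delta$ small enough to stay in $[0,1]^m$. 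By (a), $\Delta^*(\hat{\mathbf{s}}')>0$.

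If $\mathbf{w}^\top\mathbf{1}=0$ and $m\ge 3$, then $\mathbf{w}^\perp$ has dimension $\ge 2$ and so contains a nonconstant $\mathbf{v}$. The same construction works. The case $\mathbf{w}=\mathbf{0}$ is trivial.

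The main obstacle is the remaining case $m=2$, $\mathbf{w}\propto(1,-1)$. I expect the statement to genuinely fail there. By (b), $\Delta^*$ depends only on $\hat{s}_1-\hat{s}_2$, and it is in fact proportional to $|\hat{s}_1-\hat{s}_2|$: for $m=2$, subtracting the two rows gives $s_1^*-s_2^* = \frac{\lambda_1\hat{s}_1-\lambda_2\hat{s}_2 - (\lambda_1-\lambda_2)\bar{s}^*\ldots}{\cdots}$, which reduces to $\frac{\lambda_1\lambda_2}{\lambda_1+\lambda_2+\lambda_1\lambda_2}(\hat{s}_1-\hat{s}_2)$. So $\Delta^*$ is a function of $\mathbf{w}^\top\hat{\mathbf{s}}$. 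I would therefore state the claim for $m\ge 3$, which covers the paper's three-agent instantiation, and flag this exception explicitly. I would still verify it with one quick computation before finalising.

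For the dependence of the residual on the full vector, I would show that the off-diagonal entries of $M^{-1}\Lambda$ are nonzero. Perturbing $\hat{s}_j$ alone then moves $s_i^*$ for $i\ne j$, and hence moves $s_i^*-\bar{s}^*$. Concretely, I would exhibit this with the Numerical Illustration: starting from $(0.7,0.7,0.7)$ and changing only the Logical agent's score changes the Visual agent's residual from $0$ to a nonzero value.

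Non-separability follows cleanly from (a) by a swap argument; here I take the mixed vectors with all remaining coordinates fixed at $a$. Suppose $\Delta^*(\hat{\mathbf{s}})=\sum_i f_i(\hat{s}_i)$. Fix $a\neq b$ and compare the constant vectors $a\mathbf{1}$ and $b\mathbf{1}$ with the two mixed vectors obtained by exchanging coordinate $1$ between them, namely $(b,a,\dots,a)$ and $(a,b,\dots,b)$. Separability gives
\[
\Delta^*(b,a,\dots,a)+\Delta^*(a,b,\dots,b) = \Delta^*(a\mathbf{1})+\Delta^*(b\mathbf{1}) = 0 .
\]
Since $\Delta^*\ge 0$, both mixed terms must vanish. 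But they are nonconstant vectors, so (a) says their dispersion is strictly positive, a contradiction. This argument holds for every $m\ge 2$.
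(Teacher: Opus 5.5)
Your argument is correct and takes a genuinely different route from the paper. The paper reasons through the closed form. It computes the cross-sensitivities $\partial s_i^*/\partial\hat s_j = 2\lambda_j w_i w_j/(1-\sum_k w_k)>0$ for $i\neq j$, infers that each residual depends on the whole vector, and concludes that $\Delta^*$ is not separable. It then exhibits $(0.9,0.2,0.9)$ and $(0.7,0.6,0.7)$, which share unweighted mean $\tfrac{2}{3}$ but receive opposite accept/reject decisions. That route gives explicit, $\lambda$-dependent coupling coefficients and a concrete decision flip. As a proof, however, it has two gaps. It only treats the uniform weight vector. Its last inference is also not valid in general: the signed residuals $s_i^*-\bar s^*$ each depend on the whole vector, yet they sum identically to $0$.

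Your lemma that $\Delta^*$ vanishes exactly on constant vectors closes both gaps. Combined with the level-set construction, it handles arbitrary $\mathbf w$. Your swap argument is a complete proof of non-separability for every $m\ge 2$.

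Your suspected exception is real. For $m=2$ one gets $\Delta^* = \frac{\lambda_1\lambda_2}{2(\lambda_1+\lambda_2+\lambda_1\lambda_2)}\,|\hat s_1-\hat s_2|$, so the first claim fails for $\mathbf w\propto(1,-1)$. This contradicts the paper's remark that the proposition holds for all $m\ge 2$. For nonnegative weights, which is what ``weighted average'' suggests, $\mathbf w^\top\mathbf 1>0$, and your first case covers every $m\ge 2$.

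Notice also that your proofs use only $\Delta^*\ge 0$ and its zero set, so they apply verbatim to raw variance. The first and third claims are therefore generic to dispersion measures and do not by themselves single out the consensus coupling.

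One step needs tightening, and the paper's proof shares it. Nonzero off-diagonal entries of $M^{-1}\Lambda$ show that $s_i^*$ moves with $\hat s_j$. But $\bar s^*$ moves too, so ``hence moves $s_i^*-\bar s^*$'' does not follow. The fix is short:
\begin{itemize}
\item Adding $s_i^*/(m-1)$ to both sides of row $i$ gives $s_i^* = w_i S + (m-1)\lambda_i w_i \hat s_i$, with $S=\sum_k s_k^*$ and $w_i = 1/(m+(m-1)\lambda_i) < 1/m$.
\item Here $S$ is a linear function of $\hat{\mathbf s}$ with strictly positive coefficients.
\item Hence for $j\neq i$, $\partial(s_i^*-\bar s^*)/\partial\hat s_j = (w_i-\tfrac{1}{m})\,\partial S/\partial\hat s_j < 0$.
\end{itemize}
This gives the dependence claim for every agent. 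Your $(0.7,0.7,0.7)$ example then illustrates the general fact rather than carrying it.
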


\vspace{-8pt}

\begin{proof}
From Eq.~\eqref{eq:equilibrium}, $s_i^*$ depends on $\bar{s}_{-i}^*$, which itself depends on all agents' raw scores through the coupled linear system in Eq.~\eqref{eq:linear_system}. Consequently, $\partial s_i^* / \partial \hat{s}_j \neq 0$ for $i \neq j$, so the residual $|s_i^* - \bar{s}^*|$ is not a function of $\hat{s}_i$ alone and $\Delta^*$ is not separable. As a concrete illustration: $\hat{\mathbf{s}} = (0.9, 0.2, 0.9)$ and $\hat{\mathbf{s}}' = (0.7, 0.6, 0.7)$ share the same mean $\tfrac{2}{3}$, yet the consensus solution yields $\Delta^*(\hat{\mathbf{s}}) \approx 0.13 > \epsilon$ and $\Delta^*(\hat{\mathbf{s}}') \approx 0.04 < \epsilon$, producing opposite acceptance decisions under Eq.~\eqref{eq:acceptance}, a distinction no separable measure can replicate, since the difference arises from the consensus solution coupling agent~2's residual to agents~1 and~3's raw scores.
\end{proof}

\vspace{-3pt}

\noindent
\textbf{\textit{Two properties follow.}} First, disagreement from a high-$\lambda$ agent
weighs more heavily in~$\Delta^*$ than equivalent disagreement from a
flexible one, an asymmetry that raw variance treats identically. Second, this
coupling is what lets the dual criterion in Eq.~\eqref{eq:acceptance}
distinguish genuine cross-modal conflict from ordinary confidence fluctuations,
even when two score vectors share identical means and per-agent deviations.


\noindent
\textbf{Game-theoretic interpretation:} The coupled scoring formulation admits a natural reading as a coordination game in which each agent is a player, its reported score is a strategy, and .~\cref{eq:payoff} defines the payoff. The fixed point in ~\cref{eq:equilibrium} is then the unique Nash equilibrium, guaranteed by Rosen's theorem for concave $n$-person games~\cite{rosen1965existence}. 
This connection is more than cosmetic: it supplies a principled rationale for why the stubbornness parameters $\lambda_i$ govern the influence structure, as each agent trades off fidelity to its own evidence against pressure to coordinate exactly the tension a coordination game formalizes. The analogy has limits: our agents score in isolation and never observe one another; the equilibrium is computed in closed form from one-shot scores rather than reached through iterated play, and should be understood as interpretive scaffolding that motivates the formulation rather than a claim about strategic interaction.

%% file: sections/5_6_Exp_Results.tex
\section{Experiments and Results}
\input{sections/main_table}
\label{sec:results}
\vspace{-7pt}
\noindent\textbf{Experimental Setup.}
We evaluate our verification framework on six benchmarks that collectively assess different aspects of visual and multimodal reasoning. \textbf{3DSRBench}~\cite{3dsrbench}, 
\textbf{CV-Bench}~\cite{cvbench} split into CV-Bench-2D (spatial relationships and object counting) and CV-Bench-3D (depth ordering and relative distance). 
\textbf{BLINK}~\cite{blink}, 
\textbf{MMStar}~\cite{mmstar}, 
\textbf{AI2D}~\cite{ai2d}.

\noindent
Our approach employs three verification agents: Visual~(V), Logical~(L), and Contextual~(C), all built on Qwen2.5-VL-7B-Instruct~\cite{bai2025qwen3vltechnicalreport}, the same model used as the base reasoner. At each reasoning step, the base model generates $n{=}3$ candidate continuations via temperature sampling ($T{=}0.8$, top-$p{=}0.6$). Each agent disparately scores every candidate, and consensus scores are computed using agent-specific stubbornness parameters ($\lambda_V{=}1.5$, $\lambda_L{=}1.0$, $\lambda_C{=}0.8$). A candidate is accepted if its consensus dispersion satisfies $\Delta^* < \epsilon{=}0.1$ and mean confidence satisfies $\bar{s}^* > \tau{=}0.6$. Among accepted steps, we select the one with the highest $\bar{s}^*$. These hyperparameters are fixed across all six benchmarks with no task-specific tuning. The final answer is extracted from the complete reasoning trace using (Gemma-3:12B via Ollama). The use of Ollama was only for answer extraction, and is not for any part of the verification pipeline. All agent prompts are provided in the supplementary. 


\noindent\textbf{Baselines.}
We compare \method against two categories of baselines. The first category consists of \emph{domain-specific critics}(so called because each requires labeled data from the target domain to train): LLaVA-Critic~\cite{wang2025llavacriticr1criticmodelsecretly}, 
Sherlock~\cite{sherlock}, 
VisionSR1~\cite{li2025selfrewardingvisionlanguagemodelreasoning}, 
DreamPRM \cite{cao2025dreamprm},
Critic-V \cite{zhang2025criticvvlmcriticshelp}
These represent the current state of the art in step-level verification, but each requires task-specific training data or learned reward signals. All domain-specific critics are evaluated under comparable conditions. The second category consists of \emph{domain-agnostic baselines}(which require no domain-specific training data, instead relying on modality-specialized agents with task-general rubrics) that use the same three frozen agents as our method but differ in how they combine the raw scores. These include: \textbf{Mean} (select the candidate with the highest average raw score), \textbf{Max} (select by highest maximum score across agents), \textbf{Min} (select by highest minimum score), \textbf{Majority} (accept candidates where a majority of agents score above 0.5, then rank by mean), and \textbf{Variance} (reject candidates with high score variance, then rank by mean among the rest). These baselines isolate the contribution of our consensus formulation by controlling for the agent architecture and scoring procedure.


\noindent ~\cref{tab:main_results} presents accuracy across six benchmarks under all verification strategies. We organize the discussion around \textbf{\textit{three findings.}}

\noindent
\textbf{\method consistently improves over the base model, without degradation on any benchmark.} Our method improves over the unverified base model on every benchmark, with gains ranging from $+2.90$ on 3DSRBench to $+5.95$ on CV-Bench-3D. This consistency holds across qualitatively different tasks: spatial reasoning (3DSRBench, CV-Bench-3D), visual grounding (CV-Bench-2D, AI2D), and multimodal abstraction (BLINK, MMStar). 


\noindent
\textbf{Domain-specific critics show cross-task fragility.}Every domain-specific critic we evaluate degrades below baseline on at least two benchmarks. Sherlock~\cite{sherlock} drops $18.26$ points on CV-Bench-3D and $8.01$ points on 3DSRBench. VisionSR1~\cite{li2025selfrewardingvisionlanguagemodelreasoning} 
collapses by $22.21$ points on CV-Bench-3D and $18.22$ points on BLINK. Even LLaVA-Critic~\cite{wang2025llavacriticr1criticmodelsecretly}, which achieves the highest single-benchmark score among domain-specific methods on CV-Bench-3D ($81.58$), simultaneously degrades by $6.75$ points on CV-Bench-2D, two splits of the \emph{same} benchmark family. This pattern is consistent with capability tensions documented in prior work~\cite{kang2025vgentvisualgroundingmodular,shen2024multimodal,zhang2023llavagrounding}: a verifier trained to reward one reasoning style can actively penalize another. Our method sidesteps this entirely by aggregating frozen, modality-specialized judgments through consensus computation rather than 
learning a single score function that must generalize across heterogeneous evaluation criteria.

\noindent
\textbf{\method adds value beyond domain-agnostic aggregation.} To isolate the contribution of our consensus formulation, we compare against five aggregation baselines that use identical agents and scoring procedures. The only difference is how raw scores are combined into a selection decision. Among these, Mean scoring emerges as the strongest simple baseline, followed by Max and Variance. However, \method consistently outperforms Mean by $+0.68$ to $+2.57$ across all benchmarks. This margin confirms that the disagreement structure captured by consensus computation, specifically, the coupling between agents introduced through the coupled scoring framework, provides a verification signal that score averaging discards. It is worth noting that the Variance baseline, which explicitly attempts to incorporate disagreement by rejecting high-variance candidates, still underperforms our method. This indicates that treating all variance equally, without distinguishing genuine cross-modal conflict from ordinary confidence fluctuations, is insufficient (a finding consistent with Proposition~1).
\vspace{-15pt}

%% file: sections/main_table.tex
\begin{table}[t]
\renewcommand{\arraystretch}{1.05}
\setlength{\tabcolsep}{4pt}
\centering
\caption{Accuracy (\%) across multimodal reasoning benchmarks under different verification strategies. \textbf{Avg.} is the unweighted mean across all six benchmarks. Values in \textcolor{blue}{blue} indicate improvement of our method relative to the Base Model. Our detailed evaluation setup is given in the supplementary.}
\vspace{-5pt}
\begin{adjustbox}{width=\textwidth}
\begin{tabular}{lccccccc}
\hline
\textbf{Dataset} 
& \textbf{3DSRBench}
& \textbf{CV-Bench-3D} 
& \textbf{CV-Bench-2D} 
& \textbf{BLINK} 
& \textbf{MMStar} 
& \textbf{AI2D} 
& \textbf{Avg.} \\
\hline
\rowcolor{gray!15}
Base
& 56.12 
& 76.39 
& 74.27 
& 48.31 
& 61.25 
& 81.52 
& 66.31 \\
\hline
\rowcolor{blue!15}
\multicolumn{8}{l}{\textit{Domain-Specific Critics}} \\
LLaVA Critic
& 52.71 $_{\pm 0.83}$
& 81.58 $_{\pm 0.91}$
& 67.52 $_{\pm 0.82}$
& 49.22 $_{\pm 0.77}$
& 64.07 $_{\pm 0.61}$
& 81.81 $_{\pm 0.44}$
& 66.15 \\
Critic-V$_{\textcolor{gray!70}{\text{CVPR, 25}}}$
& 53.25 $_{\pm 0.73}$
& 77.66 $_{\pm 0.81}$
& 75.38 $_{\pm 0.79}$
& 46.17 $_{\pm 0.86}$
& 55.83 $_{\pm 0.59}$
& 80.17 $_{\pm 0.75}$
& 64.74 \\
Sherlock$_{\textcolor{gray!70}{\text{NeurIPS, 25}}}$
& 48.11 $_{\pm 0.93}$
& 58.13 $_{\pm 1.10}$
& 68.78 $_{\pm 1.98}$
& 49.07 $_{\pm 0.89}$
& 57.26 $_{\pm 0.96}$
& 82.77 $_{\pm 0.52}$
& 60.69 \\
VisionSR1$_{\textcolor{gray!70}{\text{ICLR, 26}}}$
& 53.05 $_{\pm 1.82}$
& 54.18 $_{\pm 1.05}$
& 73.10 $_{\pm 1.91}$
& 30.09 $_{\pm 1.45}$
& 57.20 $_{\pm 2.73}$
& 80.25 $_{\pm 0.97}$
& 57.98 \\
DreamPRM$_{\textcolor{gray!70}{\text{NeurIPS, 25}}}$
& 53.08 $_{\pm 1.88}$
& 63.39 $_{\pm 0.98}$
& 75.58 $_{\pm 1.61}$
& 49.89 $_{\pm 0.82}$
& 61.23 $_{\pm 0.59}$
& 81.21 $_{\pm 0.49}$
& 64.06 \\
\hline
\rowcolor{blue!15}
\multicolumn{8}{l}{\textit{Domain-Agnostic Baselines}} \\
Variance
& 57.21 $_{\pm 0.57}$
& 78.16 $_{\pm 0.68}$
& 76.43 $_{\pm 0.53}$
& 49.61 $_{\pm 0.49}$
& 63.09 $_{\pm 0.47}$
& 81.41 $_{\pm 0.34}$
& 67.65 \\
Mean
& 58.34 $_{\pm 0.53}$
& 79.77 $_{\pm 0.61}$
& 77.57 $_{\pm 0.51}$
& 50.17 $_{\pm 0.58}$
& 64.14 $_{\pm 0.44}$
& 82.18 $_{\pm 0.36}$
& 68.70 \\
Min
& 56.11 $_{\pm 0.58}$
& 77.21 $_{\pm 0.63}$
& 75.17 $_{\pm 0.59}$
& 49.39 $_{\pm 0.55}$
& 62.41 $_{\pm 0.46}$
& 81.09 $_{\pm 0.42}$
& 66.90 \\
Majority
& 57.58 $_{\pm 0.62}$
& 77.37 $_{\pm 0.73}$
& 75.53 $_{\pm 0.68}$
& 48.43 $_{\pm 0.66}$
& 62.10 $_{\pm 0.52}$
& 81.78 $_{\pm 0.48}$
& 67.13 \\
Max
& 57.37 $_{\pm 0.55}$
& 78.16 $_{\pm 0.56}$
& 76.46 $_{\pm 0.63}$
& 49.17 $_{\pm 0.43}$
& 63.42 $_{\pm 0.58}$
& 82.12 $_{\pm 0.54}$
& 67.78 \\
\rowcolor{green!15}
\textbf{\method (Ours)}
& \textbf{59.02} $_{\pm 0.45}$
& \textbf{82.34} $_{\pm 0.51}$
& \textbf{79.22} $_{\pm 0.53}$
& \textbf{51.32} $_{\pm 0.48}$
& \textbf{65.88} $_{\pm 0.37}$
& \textbf{83.14} $_{\pm 0.32}$
& \textbf{70.15} \\
\hline
Gains
& \textcolor{blue}{+2.90}
& \textcolor{blue}{+5.95}
& \textcolor{blue}{+4.95}
& \textcolor{blue}{+3.01}
& \textcolor{blue}{+4.63}
& \textcolor{blue}{+1.62}
& \textcolor{blue}{+3.84} \\
\hline
\end{tabular}
\end{adjustbox}

\label{tab:main_results}
\vspace{-20pt}
\end{table}


%% file: sections/7_2_Ablation.tex
\section{Discussion and Analysis}
\vspace{-7pt}

\begin{figure*}[h]
\centering
\begin{minipage}{0.48\linewidth}
\centering
\includegraphics[width=\linewidth]{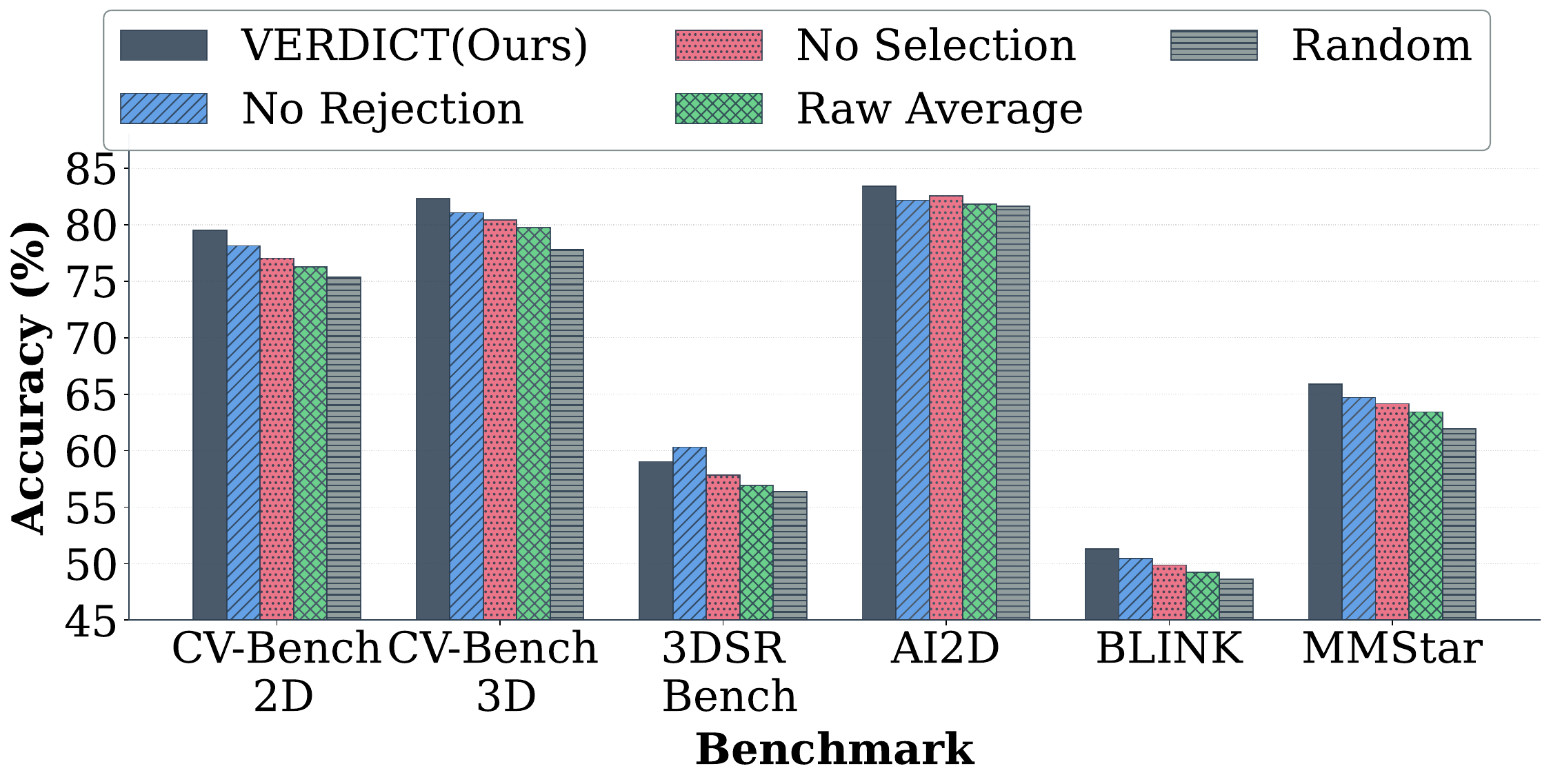}
\caption{\method achieves the highest accuracy on five of six benchmarks, with each component (rejection and selection) contributing disparately.}
\label{fig:rej_sampling_ablation}
\end{minipage}
\hfill
\begin{minipage}{0.48\linewidth}
\centering
\includegraphics[width=\linewidth]{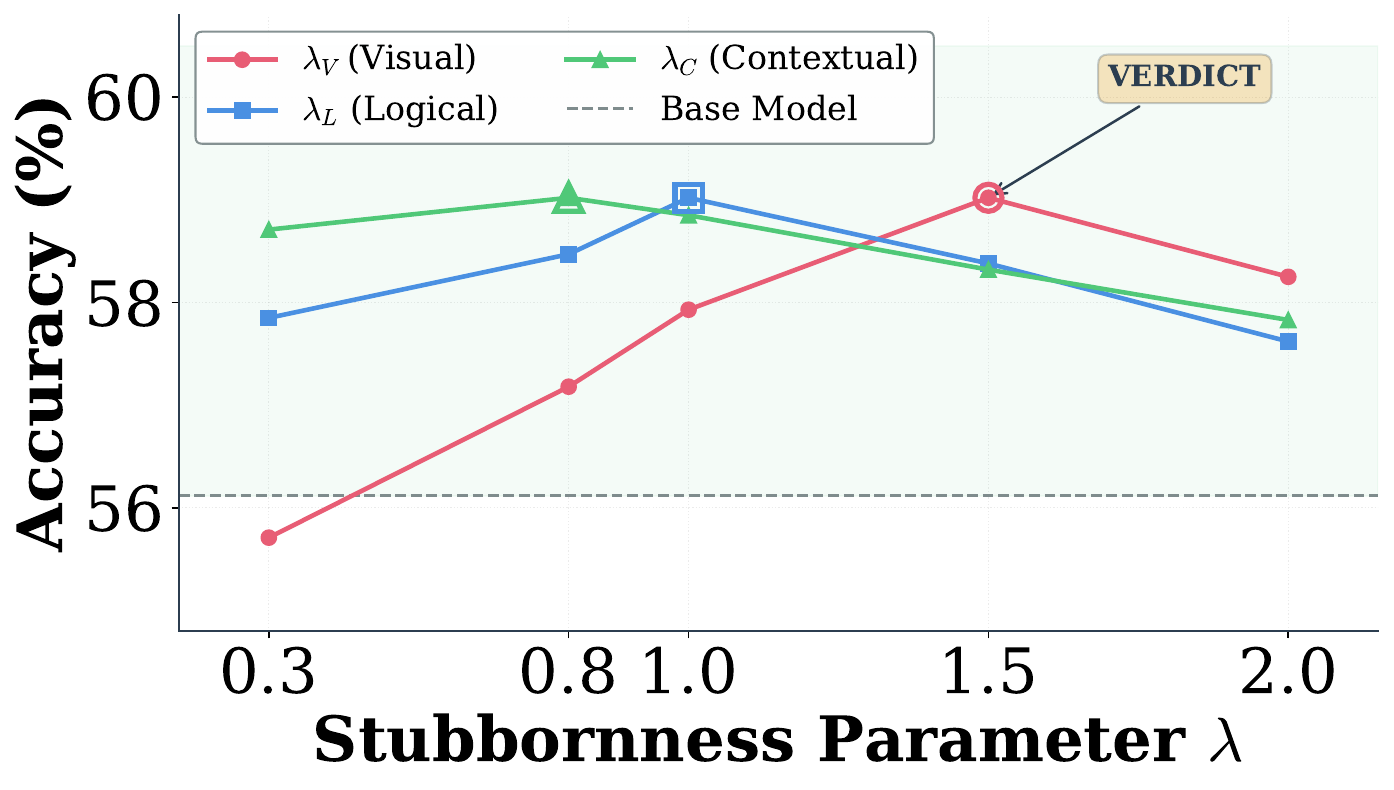}
\caption{Stubbornness parameter sensitivity on 3DSRBench. Each curve varies one parameter while holding the others fixed at our configuration. Open markers indicate our chosen values.}
\label{fig:stub_sensitivity}
\end{minipage}
\vspace{-10pt}
\end{figure*}

We conduct a series of ablation studies to understand the contribution of each component in our framework. 
\vspace{-10pt}
\subsection{Rejection vs.\ Selection}

To understand whether improvements come primarily from filtering unreliable candidates or from better ranking among viable options, we decompose our method into its constituent mechanisms. We evaluate five strategies: \textbf{\method} (our complete method: filter by acceptance criterion, then rank by $\bar{s}^*$), \textbf{No Rejection} (skip filtering, rank all candidates by $\bar{s}^* - \Delta^*$), \textbf{No Selection} (keep filtering, randomly select among accepted), \textbf{Raw Average} (replace consensus scores with simple means and dispersions, keeping the same dual-criterion structure), and \textbf{Random} (no filtering, no ranking). \cref{fig:rej_sampling_ablation} presents the results. \textbf{Three findings emerge:} First, rejection and selection contribute differently: removing either degrades accuracy, but removing
intelligent ranking (No Selection, $-1.17$ to $-2.17$) costs more than removing filtering (No Rejection, $-0.26$ to $-1.08$),
indicating that consensus-adjusted ranking is the primary driver. Second, Raw Average consistently underperforms \method by
$2.59$-$2.95$ points despite using the same dual-criterion architecture, confirming that the coupled scoring framework produces better-calibrated scores than naive aggregation independent of the filtering and ranking structure built on top. Third, Random selection performs only marginally above the base model, establishing that verification-aware scoring, not candidate diversity alone drives the gains.
\vspace{-5pt}
\subsection{Stubbornness Sensitivity}
\vspace{-5pt}
We ablate each stubbornness parameter separately on 3DSRBench, varying it over $\{0.3, 0.8, 1.0, 1.5, 2.0\}$ while holding the other two fixed. \cref{fig:stub_sensitivity} reveals a clear sensitivity hierarchy: $\lambda_V$ has the widest performance range (2.90 percentage points), $\lambda_L$ an intermediate range (1.40), and $\lambda_C$ the narrowest (1.19). This ordering directly mirrors our asymmetric design where the parameter with the most impact on spatial reasoning is set highest, while the least impactful one is given the most flexibility. Critically, no setting degrades below the base model, and the three curves converge at their respective optimal values to the same peak accuracy (59.02\%), confirming that the chosen configuration sits at a jointly optimal point rather than a compromise. The range of  \cref{fig:stub_sensitivity} $\lambda \in [0.3, 2.0]$ spans the full coupling transition: at $\lambda =  0.3$ an agent places 77\% weight on the group mean ( \cref{eq:equilibrium}), making it nearly consensus-driven; at $\lambda = 2.0$ it places 67\% on its own evidence, yet the remaining 33\% consensus pull still preserves meaningful inter-agent coupling which is the defining property of the framework.

\noindent\textbf{Stubbornness Assignment.}
The per-parameter sensitivity above establishes \emph{which} parameter matters most; it does not test whether the \emph{assignment} of values to agents matters.
We permute the stubbornness triple $(\lambda_V{=}1.5,\;\lambda_L{=}1.0,\;\lambda_C{=}0.8)$ via pairwise swaps on 3DSRBench.
 Swapping L$\leftrightarrow$C ($\lambda_V$ unchanged) costs only $-$0.41 points (58.61\%), while swapping V$\leftrightarrow$L costs $-$1.19 (57.83\%) and V$\leftrightarrow$C costs $-$1.76 (57.26\%). The degradation is proportional to the reduction in the visual agent's stubbornness. As long as $\lambda_V$ retains the highest value, \method still exceeds the Mean baseline (58.34\%); reducing it below either other agent drops performance below Mean. The assignment is not arbitrary: on perception-heavy tasks, visual evidence is the least negotiable signal, and the consensus formulation amplifies this only when the stubbornness structure reflects it.

\vspace{-8pt}
\subsection{Threshold Sensitivity}
\label{sec:threshold_sensitivity}

\begin{figure*}[t]
\centering
\begin{minipage}{0.48\linewidth}
\centering
\includegraphics[width=\linewidth]{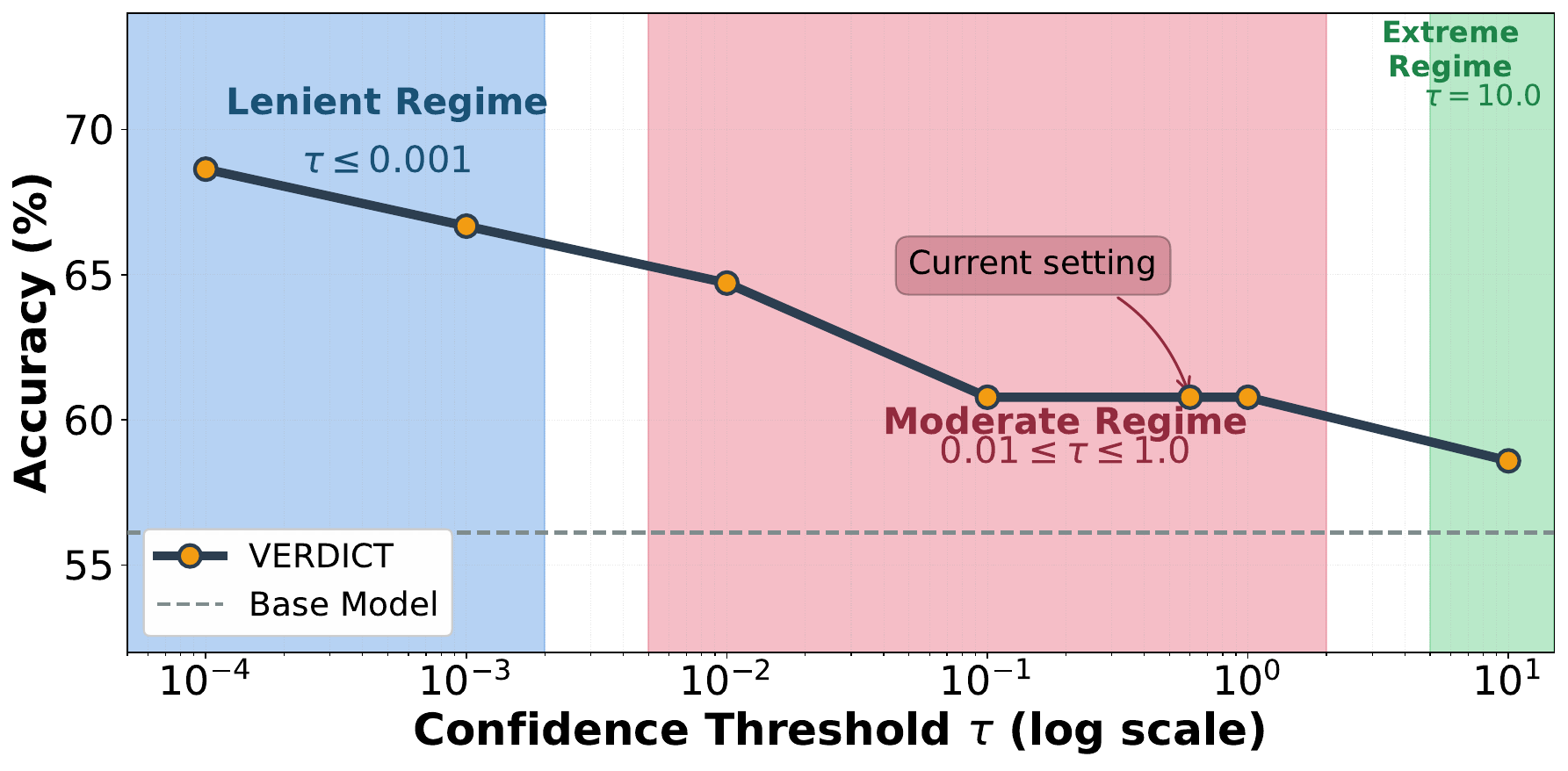}
\caption{Confidence threshold sensitivity on 3DSRBench with $\epsilon = 0.1$ fixed. Performance follows a descending gradient.}
\label{fig:tauab}
\end{minipage}
\hfill
\begin{minipage}{0.48\linewidth}
\centering
\includegraphics[width=\linewidth]{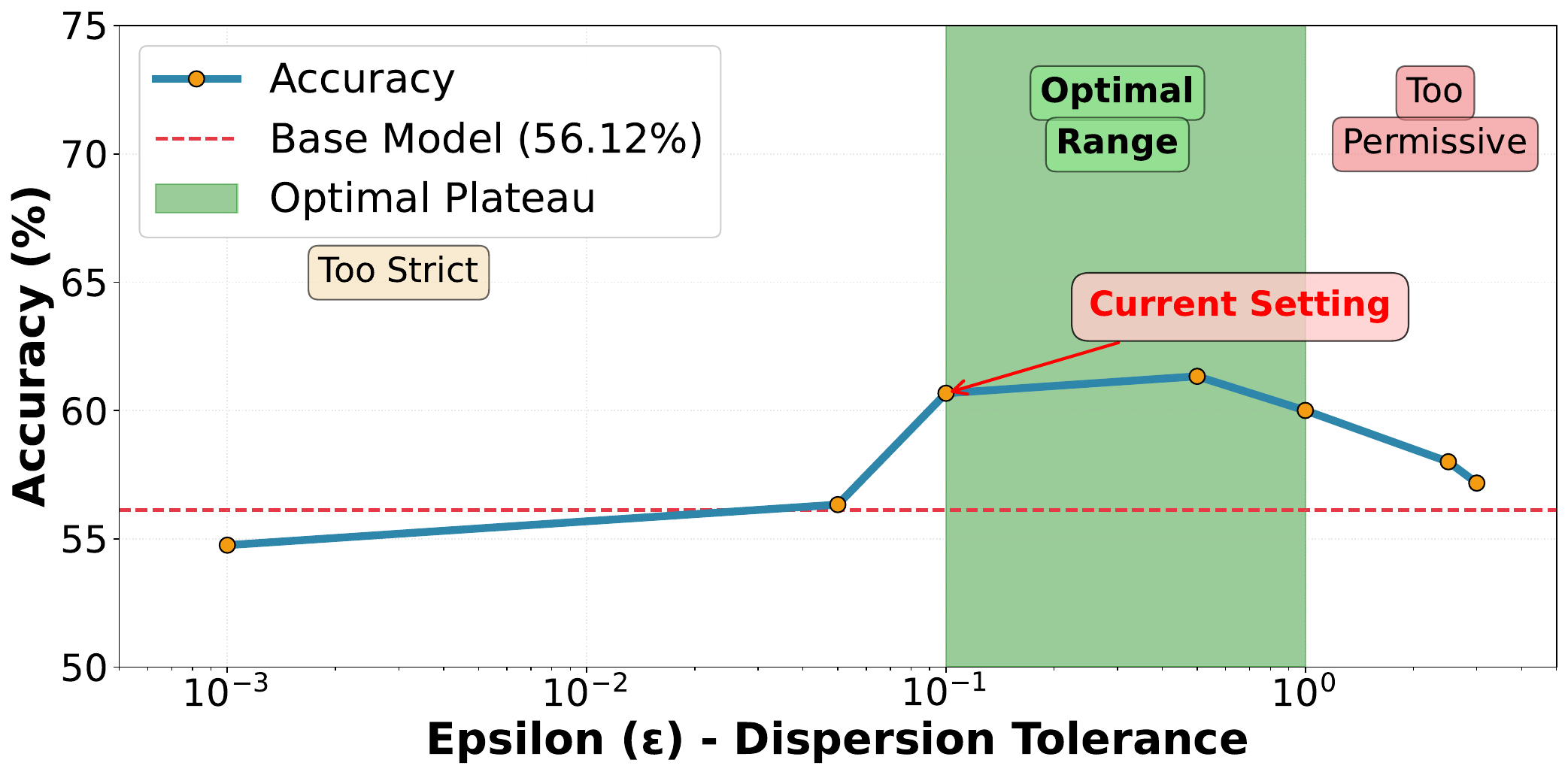}
\caption{Dispersion Tolerance Sensitivity on 3DSRBench with $\tau = 0.6$ fixed.}
\label{fig:epsab}
\end{minipage}
\end{figure*}

\noindent\textbf{Confidence threshold $\tau$.}
We ablate the confidence threshold $\tau$ while holding the dispersion tolerance fixed at $\epsilon = 0.1$ on 3DSRBench, and ~\cref{fig:tauab} present results across $\tau \in \{10^{-4}, 10^{-3}, 10^{-2}, 10^{-1}, 0.6, 1.0, 10.0\}$, extended analysis across operating regimes is provided in the supplementary.For~$\tau$ in ~\cref{fig:tauab}, performance follows a descending gradient across three regimes: permissive ($\tau \leq
0.01$, accuracy $65$--$69\%$), intermediate ($\tau = 0.1$--$0.6$, around $60$--$61\%$), and restrictive ($\tau \geq 1.0$, ${\sim}60.6\%$ via fallback ranking). The permissive regime is the most revealing:
with the confidence filter effectively disabled, the consensus-adjusted ranking alone substantially exceeds the Mean
baseline ($58.34\%$), confirming that the consensus formulation itself, not the filtering architecture, drives the improvement.

\noindent\textbf{Cross-benchmark threshold analysis.}
We extend the $\tau$ sensitivity analysis across all six benchmarks
(full results in supplementary). The permissive regime ($\tau \leq 0.01$) achieves higher accuracy on 3DSRBench
($68.71\%$ vs.\ $59.02\%$), but $\tau = 0.6$ is individually optimal on the remaining five benchmarks.

\noindent\textbf{Dispersion tolerance $\epsilon$.}
We ablate the dispersion tolerance while holding $\tau = 0.6$ fixed.  ~\cref{fig:epsab} presents accuracy across $\epsilon \in \{0.001, 0.05, 0.1, 0.5, 1.0, 2.5, 3.0\}$.
For~$\epsilon$ in ~\cref{fig:epsab} , performance peaks in an optimal plateau at $\epsilon \in [0.1, 1.0]$, with accuracy
around $60$--$61\%$. Too strict ($\epsilon < 0.1$) rejects legitimate steps with natural cross-modal tension; too permissive ($\epsilon > 1.0$) admits candidates where judges have not genuinely converged.
The decline is gradual in both directions because the fallback mechanism ($\bar{s}^* - \Delta^*$ ranking) and the confidence
threshold ($\bar{s}^* > \tau$) respectively provide safety nets, preventing catastrophic failure at any operating point.
\footnote{Across all six benchmarks at the operating point($\tau{=}0.6,\;\epsilon{=}0.1$), the fallback path is triggered on approximately 15\% of reasoning steps, confirming that the dual acceptance criterion admits at least one viable candidate on the large majority of steps}


\vspace{-7pt}
\subsection{Disentangling Judge Quality from Algorithmic Gain}


\begin{wrapfigure}{r}{0.46\linewidth}
\centering
\includegraphics[width=\linewidth]{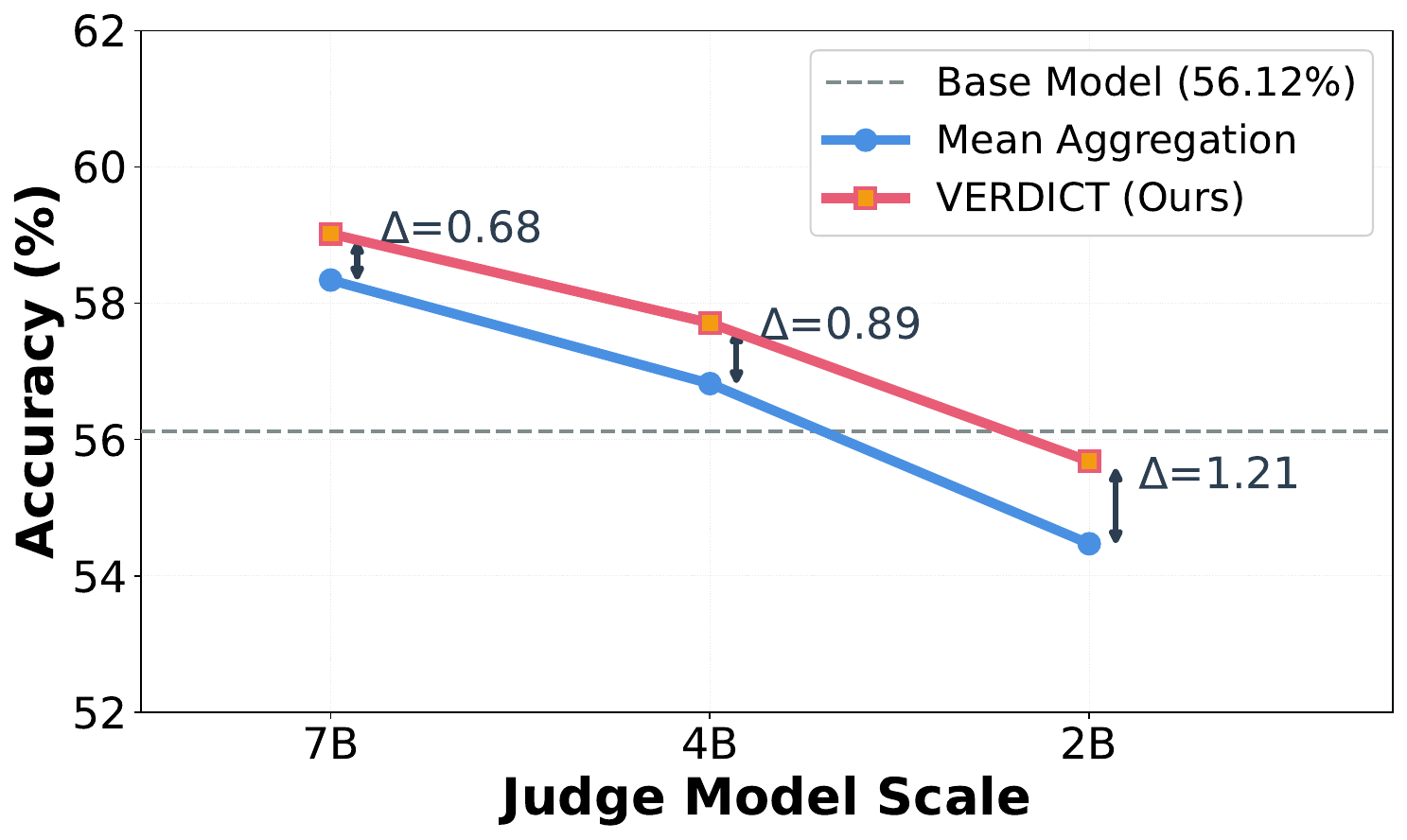}
\caption{Disentangling judge quality from algorithmic gain on 3DSRBench.Both methods degrade with weaker judges, but the margin between VERDICT and Mean widens.}
\label{fig:degcur}
\end{wrapfigure}

We replace the 7B verification agents with 4B and 2B variants while holding the base reasoner fixed, isolating scoring quality from candidate quality. \cref{fig:degcur} reports results on 3DSRBench. Absolute accuracy degrades monotonically with judge capability for both VERDICT and Mean, as expected. However, the margin between VERDICT and Mean widens from +0.68 at 7B to +0.89 at 4B to +1.21 at 2B, consistent with Proposition 1: the coupled scoring system surfaces noise-induced disagreement through the dispersion filter rather than passing it to the selection decision. At the 2B scale, Mean drops 1.65 points below the unverified baseline while VERDICT with the same judges limits this degradation to 0.44 points, a 1.21 point recovery attributable entirely to the consensus formulation, confirming that the reported gains reflect algorithmic structure rather than judge quality.Additional results can be found in the supplementary materials.


\vspace{-8pt}

%% file: sections/8_9_2_discuss_concl.tex
\section{Conclusion}
\vspace{-8pt}
Multimodal reasoning in MLLMs fails gradually as locally plausible steps drift 
toward incorrect answers. Our results across six benchmarks show that this drift 
leaves a detectable trace in the \emph{structure} of cross-modal disagreement. 
\method formalizes this signal through disagreement-aware consensus computation, 
achieving consistent improvements without task-specific tuning. Our \textbf{key 
takeaway} is that when disparate judges disagree about a reasoning step, the 
pattern of that disagreement is itself a verification signal that principled 
 aggregation can exploit. The framework has certain limitations. When all agents are confidently wrong, consensus 
converges on an incorrect assessment; and no filtering recovers from a base model 
that produces no viable candidates. Characterizing how often this failure mode 
occurs in practice is deferred to future work.  On the practical side, \method is fully parallelizable and dominated by generation cost, scoring requires 1–5 tokens per step versus 50–200 per reasoning step, yielding $3.80\times$ wall-clock overhead under sequential execution. This is non-trivial but fundamentally cheaper than regenerating entire trajectories, and reducible via adaptive verification at high-uncertainty steps, a direction we leave to future work. Per-agent scores further provide a modality-level diagnostic trace pinpointing failures in visual grounding, logical coherence, or contextual relevance. Unlike domain-specific critics, this overhead is the \emph{entire} cost. Because the coordination-game formulation requires only that heterogeneous evaluators trade off private evidence against consensus, 
\method generalizes beyond multimodal reasoning to reward model 
ensembles~\cite{helpingherdingrewardmodel,weaver}, multi-agent evaluation, and 
compositional code verification. 

%% file: suppl_sections/takeaways.tex
\begin{tcolorbox}[
  colback=blue!3,
  colframe=blue!40!black,
  coltitle=white,
  title={\textbf{Supplementary at a Glance: Key Takeaways}},
  fonttitle=\bfseries,
  boxrule=0.8pt,
  arc=2pt,
  left=6pt, right=6pt, top=4pt, bottom=4pt
]
\small
This supplement provides extended derivations, ablations, and analyses supporting the main paper. We highlight the four most important findings:
\begin{enumerate}[leftmargin=*, itemsep=4pt, topsep=4pt, label=\textbf{\arabic*.}]
\item \textbf{Cross-benchmark robustness of the default configuration.}
All hyperparameters ($\lambda_V{=}1.5$, $\lambda_L{=}1.0$, $\lambda_C{=}0.8$, $\tau{=}0.6$, $\epsilon{=}0.1$) are \emph{fixed across all six benchmarks} with no per-task tuning.
The dispersion tolerance $\epsilon$ exhibits a stable optimal plateau spanning a full order of magnitude ($\epsilon \in [0.1,\, 1.0]$), and the confidence threshold $\tau{=}0.6$ is individually optimal on five of six benchmarks.
The joint threshold sweep (49 configurations) confirms that the operating point lies on a plateau, not an edge.
\item \textbf{Consensus adds value \emph{beyond} aggregation, especially with weaker judges.}
The margin between \textbf{VERDICT} and Mean aggregation \emph{widens} as the judge scale decreases from 7B to 2B on five of six benchmarks. At 2B scale, \textbf{VERDICT} recovers 1.2--4.6 points over na\"{\i}ve averaging.
This confirms that the gains are attributable to the consensus formulation's algorithmic structure, not to judge quality alone.
\item \textbf{Dispersion is a genuine, non-separable diagnostic signal.}
Consensus dispersion $\Delta^*$ is enriched 1.4--1.6$\times$ for incorrect chains (Section~S4.9); ROC analysis yields AUC $=$ 0.65--0.66.
The signal is moderate and diagnostic rather than oracular, which explains why \textbf{VERDICT} benefits from soft integration (ranking $+$ dual filtering) rather than hard binary classification.
\item \textbf{The framework generalises across model families at zero adaptation cost.}
Applying \textbf{VERDICT} to four different base model families yields consistent gains of $+2.45$ to $+4.00$ points. The consensus compresses base-model variability from a 1.33-point range to 0.22 points, confirming plug-in generality.
\end{enumerate}
\end{tcolorbox}
\medskip

%% file: suppl_sections/hyperparameters.tex
\section{Hyperparameters}

\begin{table}[!h]
\centering
\begin{tabular}{l c}
\hline
Hyperparameter & Value \\
\hline
$\lambda_{\text{V}}$        & $1.5$ \\
$\lambda_{\text{L}}$    & $1.0$ \\
$\lambda_{\text{C}}$    & $0.8$ \\
$n$                     & 3 \\
$\tau$                  & 0.6\\
$\epsilon$              &  0.1 \\
\hline
\end{tabular}
\caption{Hyperparameters used in all experiments.}
\label{tab:lambda_values}
\end{table}

\noindent
In our experiments, we used hyperparameters as shown in Table \ref{tab:lambda_values}. This configuration reflects the intuition that the visual verifier (V) should be most resistant to consensus pressure on perception-heavy steps, while the contextual verifier (C) may be more flexible when visual or linguistic evidence is strong. These values are fixed across all datasets and require no tuning.

%% file: suppl_sections/app_implementation.tex
\section{Implementation Details: \method}
\label{app:implementation}
\input{suppl_sections/algorithm}

\noindent
Here, we discuss how the \method's consensus-based verification is implemented in practice. 

\noindent
At each reasoning step, the base MLLM proposes several candidate continuations (typically 3 candidates per step). For every candidate, three frozen verifier agents are queried independently: a visual grounding agent (V), a logical consistency agent (L), and a contextual reasoning agent (C). Each produces a scalar confidence score in $[0,1]$, reflecting its modality-specific assessment of each step.


\noindent
If one verifier is confident while the others are uncertain, averaging may obscure this inconsistency. We want a mechanism that surfaces disagreement explicitly and allows agents to partially adjust their beliefs toward consensus without forcing uniformity when genuine divergence exists.

\noindent
The consensus formulation captures this trade-off naturally. Each verifier prefers agreement with others, but not at the cost of abandoning its own judgment entirely. This is encoded through a quadratic scoring objective, which admits a unique, closed-form. The consensus scores thus represent a principled resolution of inter-agent tension, rather than an ad-hoc blend.

\subsection*{Consensus formulation and computation}

The implementation uses a heterogeneous trade-off parameter formulation, where different verifiers are assigned different sensitivities to consensus pressure. This is motivated by the observation that certain verifiers should be more resistant to group influence depending on the nature of the reasoning step.

Each agent's consensus score $s_i^*$ satisfies:
\begin{equation}
s_i^* = \frac{\bar{s}_{-i}^* + \lambda_i \hat{s}_i}{1 + \lambda_i}
\end{equation}
where $\hat{s}_i$ is agent $i$'s raw confidence score, $\bar{s}_{-i}^* = \frac{1}{n-1} \sum_{j \neq i} s_j^*$ is the mean consensus score of all other agents, and $\lambda_i > 0$ controls how strongly agent $i$ weights its own judgment relative to group consensus.

This system of equations can be rewritten as a linear system and solved exactly:
\begin{equation}
\label{eq:linear}
\left(1 + \lambda_i\right) s_i^* - \frac{1}{n-1} \sum_{j \neq i} s_j^* = \lambda_i \hat{s}_i, \quad i = 1, \dots, n
\end{equation}
The system is solved using a standard linear solver. The resulting matrix is typically well-conditioned when $\lambda_i > 0$, but if numerical issues arise, the implementation falls back to the raw verifier scores. All consensus scores are clipped to $[0,1]$ to maintain valid confidence values.

\noindent
In our experiments, we set
$n=3$
$\lambda_{\text{V}} = 1.5$, $\lambda_{\text{L}} = 1.0$, and $\lambda_{\text{C}} = 0.8$. This configuration reflects the intuition that the visual verifier (V) should be most resistant to consensus pressure on perception-heavy steps, while the contextual verifier (C) may be more flexible when visual or linguistic evidence is strong. These values are fixed across all datasets and require no tuning.

\noindent
Importantly, \emph{no iterative optimization, learning, or approximation is used}. The consensus solution is computed via a direct linear solve at every reasoning step, adding negligible overhead compared with the cost of querying the verifier models themselves.

\subsection*{Step selection via consensus statistics}

Once consensus scores are obtained, the system computes two summary statistics:
\begin{itemize}
    \item \textbf{Mean consensus confidence} $\bar{s}^* = \frac{1}{n} \sum_i s_i^*$: measures collective endorsement of a step.
    \item \textbf{Consensus dispersion} $\Delta = \frac{1}{n} \sum_i |s_i^* - \bar{s}^*|$: measures residual disagreement after consensus adjustment.
\end{itemize}
Candidate steps are accepted only if they simultaneously achieve \textbf{high collective confidence} ($\bar{s}^* > \tau$) and \textbf{low inter-agent dispersion} ($\Delta < \epsilon$). In our experiments, we set $\tau = 0.6$ and $\epsilon = 0.1$. This dual criterion is stricter than confidence alone: a step with high average confidence but high dispersion signals unresolved conflict and is rejected.

\noindent
When multiple candidate steps are evaluated at the same reasoning position, rejected steps are discarded immediately. Among accepted steps, the one with the highest $\bar{s}^*$ is selected to extend the reasoning trace. If \emph{no} candidate step is accepted (i.e., all steps have either low confidence or high dispersion), the implementation selects the step that maximizes $\bar{s}^* - \Delta$, prioritizing the best available balance between confidence and agreement. This fallback ensures the reasoning process can continue even when all candidates are suboptimal, while still preferring more stable steps.

\subsection*{Why this matters}

\method serves as a lightweight consensus protocol over frozen verifiers. It makes disagreement \emph{explicit and actionable}, rather than burying it in an average. It requires no training or calibration beyond setting three hyperparameters ($\lambda_i$ values, $\tau$, and $\epsilon$), all of which remain fixed across datasets, and integrates naturally into step-wise reasoning by filtering unstable steps before they can compound downstream errors.

\noindent
Crucially, the consensus is not a heuristic approximation. It is the exact fixed point of a well-defined coupled scoring system. This gives the filtering process a principled justification and makes the system's behavior more interpretable: rejected steps are precisely those where verifiers could not reach a stable agreement, even after accounting for consensus pressure. The heterogeneous $\lambda_i$ values allow the system to implicitly adapt to different reasoning regimes without explicit step-type classification.

%% file: suppl_sections/algorithm.tex
\begin{algorithm}[!h]
\caption{\method: Step-Wise Consensus Verification}
\label{alg:nash_verification}
\begin{algorithmic}[1]

\REQUIRE Base model $\mathcal{M}_{\text{base}}$, Verifier agents $\{\mathcal{V}, \mathcal{L}, \mathcal{C}\}$, Image $I$, Question $Q$
\REQUIRE Number of candidates $n=3$, Stubbornness parameters $\{\lambda_V=1.5, \lambda_L=1.0, \lambda_C=0.8\}$
\REQUIRE Acceptance thresholds: dispersion $\epsilon=0.1$, confidence $\tau=0.6$
\ENSURE Complete reasoning trace $r_{1:T}$ and final answer

\STATE Initialize reasoning trace $r \leftarrow \emptyset$
\STATE Generate initial step: $r_0 \sim \mathcal{M}_{\text{base}}(I, Q)$
\STATE Append $r_0$ to $r$

\WHILE{$r$ does not contain end-of-sequence token}
    \STATE $\textsc{Candidates} \leftarrow \emptyset$
    \STATE $\textsc{RawScores} \leftarrow \emptyset$
    
    \FOR{$i = 1$ to $n$}
        \STATE Generate candidate: $c_i \sim \mathcal{M}_{\text{base}}(I, Q, r)$ with temperature $T=0.8$
        \STATE Query Visual Agent: $\hat{s}_V^{(i)} \leftarrow \mathcal{V}(I, Q, r, c_i)$ \COMMENT{Score $\in [0,1]$}
        \STATE Query Logical Agent: $\hat{s}_L^{(i)} \leftarrow \mathcal{L}(I, Q, r, c_i)$
        \STATE Query Contextual Agent: $\hat{s}_C^{(i)} \leftarrow \mathcal{C}(I, Q, r, c_i)$
        \STATE Store: $\textsc{Candidates}[i] \leftarrow c_i$, $\textsc{RawScores}[i] \leftarrow \{\hat{s}_V^{(i)}, \hat{s}_L^{(i)}, \hat{s}_C^{(i)}\}$
    \ENDFOR
    
    \STATE \COMMENT{\textbf{Consensus Computation \& Selection}}
    \STATE $\textsc{Results} \leftarrow \emptyset$
    
    \FOR{each candidate $i$ in $\{1, \ldots, n\}$}
        \STATE Solve $\mathbf{s}^{*(i)}$ per Eq. \ref{eq:linear}
        \STATE Compute mean: $\bar{s}^{*(i)} \leftarrow \frac{1}{3}\sum_{j \in \{V,L,C\}} s_j^{*(i)}$
        \STATE Compute dispersion: $\Delta^{(i)} \leftarrow \frac{1}{3}\sum_{j \in \{V,L,C\}} |s_j^{*(i)} - \bar{s}^{*(i)}|$
        \STATE Check acceptance: $\text{accepted}^{(i)} \leftarrow (\Delta^{(i)} < \epsilon) \land (\bar{s}^{*(i)} > \tau)$
        \STATE Store: $\textsc{Results}[i] \leftarrow \{\mathbf{s}^{*(i)}, \bar{s}^{*(i)}, \Delta^{(i)}, \text{accepted}^{(i)}\}$
    \ENDFOR
    
    \STATE $\textsc{ValidSteps} \leftarrow \{i : \textsc{Results}[i].\text{accepted} = \text{True}\}$
    
    \IF{$\textsc{ValidSteps} \neq \emptyset$}
        \STATE $i^* \leftarrow \argmax_{i \in \textsc{ValidSteps}} \bar{s}^{*(i)}$ \COMMENT{Highest confidence among stable steps \textbf{Normal Model}}
    \ELSE
        \STATE $i^* \leftarrow \argmax_{i=1,\ldots,n} (\bar{s}^{*(i)} - \Delta^{(i)})$ \COMMENT{Best balance if none accepted \textbf{Fallback Model}}
    \ENDIF
    
    \STATE Append selected step: $r \leftarrow r \cup \{\textsc{Candidates}[i^*]\}$
\ENDWHILE

\STATE Extract final answer from $r$
\RETURN $r$, final answer
\end{algorithmic}
\end{algorithm}

%% file: suppl_sections/app_experimental_setup.tex
\section{Detailed Experimental Setup}
\label{app:experimental_setup}

\paragraph{Base Model Configuration}

We use Qwen2.5-VL-7B-Instruct as our primary reasoning model. The model generates candidate reasoning steps through chain-of-thought prompting with the instruction: ``\textit{[Question]. Reason step by step.}'' We employ temperature sampling with $T=0.8$ and top-$p=0.6$ to encourage diversity among the three candidate continuations generated at each step. Generation stops when the model produces either an end-of-sequence token or a newline character, with a maximum of 1000 new tokens per step.

\paragraph{Verification Agent Architecture}

All three verification agents are implemented using Qwen2.5-VL-7B-Instruct as the backbone model. Each agent receives the same visual input (the question image) along with the question text, previous reasoning steps (assumed correct), and the current candidate step to verify. Agents operate independently and output scores in $[0, 1]$.

\paragraph{Visual Agent (V)}

The Visual Agent evaluates whether objects and spatial relationships mentioned in the reasoning step are visually verifiable. The agent is specifically instructed to maintain a balance between strictness and common-sense spatial reasoning, and to output only a single number between 0.0 and 1.0.

\paragraph{Logical Agent (L)}

The Logical Agent assesses whether the reasoning step follows logically from previous steps and makes progress toward answering the question. It evaluates:
\begin{itemize}
    \item Whether the step builds coherently on facts
    \item Whether logical inferences are valid
    \item Whether the step moves closer to resolving the question
\end{itemize}

\paragraph{Contextual Agent (C)}

The Contextual Agent determines whether the step maintains focus on the original question and avoids introducing irrelevant or tangential information. It penalizes steps that:
\begin{itemize}
    \item Describe details unrelated to the question
    \item Make definitive claims about obscured or cropped-out objects
    \item Introduce unnecessary speculation
\end{itemize}

\subsection{Dataset and Evaluation}

We evaluate our approach across six diverse vision-language benchmarks that test different aspects of multimodal reasoning:

\textbf{3DSRBench} is a comprehensive 3D spatial reasoning benchmark comprising 2,772 manually annotated visual question-answer pairs across 12 question types. The benchmark evaluates four main categories of 3D awareness: height, location, orientation, and multi-object reasoning. It includes questions based on both natural images from MS-COCO and multi-view synthetic images, with particular emphasis on testing robustness across common and uncommon camera viewpoints. The benchmark employs careful design to avoid trivial answers and uses novel evaluation strategies like FlipEval to ensure robust assessment.

\textbf{CV-Bench} addresses vision-centric evaluation through 2,638 manually inspected examples repurposed from standard vision benchmarks, including ADE20k, COCO, and OMNI3D. The benchmark is divided into two components: CV-Bench-2D evaluates spatial relationships and object counting, while CV-Bench-3D assesses depth order and relative distance understanding. By formulating natural language questions that probe fundamental visual understanding, the benchmark tests whether models can perform classic computer vision tasks within a multimodal context.

\textbf{BLINK} focuses on core visual perception abilities that humans can solve “within a blink”, reformatting 14 classic computer vision tasks into 3,807 multiple-choice questions. The benchmark spans pixel-level to image-level perception tasks, including relative depth estimation, visual correspondence, forensics detection, multi-view reasoning, and visual similarity. A key feature is the incorporation of diverse visual prompts such as circles, boxes, and image masks alongside textual questions, deliberately designed to resist solutions based purely on language mediation.

\textbf{MMStar} is an elite vision-indispensable benchmark comprising 1,500 challenge samples meticulously selected by humans to address issues of visual dependency and data leakage in existing benchmarks. The benchmark evaluates six core capabilities and 18 detailed axes, with each sample undergoing strict human review to ensure visual dependency, minimal data leakage, and requirements for advanced multi-modal capabilities. Beyond traditional accuracy metrics, MMStar introduces two novel metrics to measure multi-modal gain and multi-modal leakage in model training.

\textbf{AI2D} consists of 4,817 illustrative diagrams for research on diagram understanding and associated question answering. The dataset represents topics in primary school natural sciences such as food webs, life cycles, moon phases, and human physiology. Each diagram has been densely annotated with object segmentations, diagrammatic elements like arrows and lines, and text elements. The benchmark requires understanding abstract visual representations and symbolic elements common in scientific illustrations, testing both visual comprehension and scientific reasoning abilities.

\noindent
We process each question by iteratively generating and verifying reasoning steps until the model produces an end-of-sequence token. The final answer is extracted from the complete reasoning trace using (Gemma-3:12B\cite{gemmateam2025gemma3technicalreport} via Ollama) and evaluated against the ground truth using string matching. The use of Ollama was only for answer extraction, and is not for any part of the verification pipeline.

\noindent
To isolate the contribution of the verification mechanism itself, we decouple generation variance from verification variance by fixing the base model seed across all methods, ensuring that every verifier evaluates the same set of candidate steps. The standard deviations reported (across 5 runs) therefore reflect verification noise alone. An end-to-end variance analysis, while informative for deployment, would compound sampling stochasticity across multi-step chains, conflating candidate quality with verifier quality and obscuring the comparison we aim to make: whether consensus-based aggregation selects better steps than alternatives \emph{given the same options}.
    
\paragraph{Computational Resources: }
All experiments are conducted on NVIDIA A100  and 3 NVIDIA A6000 GPUs. 
\noindent To ensure full reproducibility, all code and evaluation scripts will be publicly released upon acceptance.


%% file: suppl_sections/additional_results/additional_result_v2.tex
\section{Additional Results}

This section presents extended empirical analyses that complement the main paper's experiments. It is organized along three axes: (i) comparisons with alternative inference-time strategies \ref{sec:tts}, (ii) systematic sensitivity analyses of every hyperparameter — individually \ref{sec:epsilon_cross_bench} - \ref{sec:supp_threshold}, jointly \ref{sec:threshold_interaction}, and across model configurations \ref{sec:judge_scale}–\ref{sec:cross_bench_swaps} and (iii) diagnostic analyses that characterize why the consensus formulation works, including the dispersion signal's predictive value \ref{sec:dispersion_diagnostic}–\ref{sec:conf_disp_landscape}, the interplay between filtering and ranking \ref{sec:supp_rejection_selection}–\ref{sec:fallback_analysis}, and generalization across model families \ref{sec:cross_model}.

\input{suppl_sections/additional_results/additional_baselines}
\input{suppl_sections/additional_results/eps_analysis}
\input{suppl_sections/additional_results/tau_analysis}

\input{suppl_sections/additional_results/threshold_analysis_ext}
\input{suppl_sections/additional_results/heatmap_analysis}
\input{suppl_sections/additional_results/judge_scale}
\input{suppl_sections/additional_results/cb_stubborness_sens}
\input{suppl_sections/additional_results/stubborness_permute}
\input{suppl_sections/additional_results/dispersion_as_diagonistic}
\input{suppl_sections/additional_results/confidence_dispersion}
\input{suppl_sections/additional_results/rejection_v_sampling}
\input{suppl_sections/additional_results/fallback_analysis}
\input{suppl_sections/additional_results/cross_model_familiy}

%% file: suppl_sections/additional_results/additional_baselines.tex
\subsection{Comparison with Test-Time Scaling Methods}
\label{sec:tts}

\method and test-time scaling (TTS) methods both spend additional inference
compute to improve output quality.  They differ in \emph{where} that
computation is spent.  TTS methods generate multiple candidate responses and
aggregate them at the answer or token level.  \method generates multiple
candidate reasoning \emph{steps} and verifies each through cross-modal
consensus before the chain proceeds.  The two paradigms address different
failure modes: \method intercepts errors within a reasoning chain, while TTS methods
improve the quality of complete responses after generation.  A controlled
comparison clarifies how much each strategy contributes.

\noindent
We compare \method against five representative test-time scaling methods.
(1)~\textbf{Self-Consistency} aggregates candidate answers via majority
voting across multiple sampled outputs~\cite{wang2022selfconsistency}.
(2)~\textbf{Self-Selector} uses the base model itself as a verifier to
select one response among the candidates~\cite{chen2024universal}.
(3)~\textbf{Self-Synthesizer} combines potentially correct parts from
different responses.  We use the base model to aggregate responses into one
coherent final answer~\cite{li2025reasoningaslogicunits}.
(4)~\textbf{Self-Refinement}~\cite{selfrefinement}.
(5)~\textbf{TTAug}~\cite{kaya2026efficient}.
All methods except TTAug use temperature sampling ($T{=}0.8$) for diversity.
\method uses its standard configuration.

\begin{table}[t]
\centering
\caption{\method vs.\ test-time scaling baselines across six benchmarks.
Best per benchmark in \textbf{bold}.}
\label{tab:tts_comparison}
\small
\setlength{\tabcolsep}{3pt}
\begin{tabular}{l c c c c c c c}
\toprule
Method & 3DSR & CV-3D & CV-2D & BLINK & MMStar & AI2D & Avg. \\
\midrule
Base Model
  & 56.12 & 76.39 & 74.27 & 48.31 & 61.25 & 81.52 & 66.31 \\
\midrule
Self-Consistency$_{\textcolor{gray!70}{\text{ICLR, 23}}}$
  & 57.08 & 78.12 & 76.03 & 42.66 & 59.16 & 74.63 & 64.61 \\
Self-Selector$_{\textcolor{gray!70}{\text{ICML, 24}}}$
  & 55.98 & 72.77 & 68.04 & 35.16 & 57.31 & 77.90 & 61.19 \\
Self-Refinement$_{\textcolor{gray!70}{\text{-, -}}}$
  & 54.94 & 73.01 & 66.05 & 40.15 & 55.08 & 73.29 & 60.42 \\
Self-Synthesizer$_{\textcolor{gray!70}{\text{ICML, 25}}}$
  & 54.68 & 77.33 & 77.79 & 49.40 & 62.32 & 81.14 & 67.11 \\
TTAug$_{\textcolor{gray!70}{\text{ICLR, 26}}}$
  & 52.12 & 73.19 & 70.47 & 50.26 & 60.88 & 69.75 & 62.78 \\
\midrule
\textbf{VERDICT}
  & \textbf{59.02} & \textbf{82.34} & \textbf{79.22}
  & \textbf{51.32} & \textbf{65.88} & \textbf{83.14}
  & \textbf{70.15} \\
\bottomrule
\end{tabular}
\end{table}

\paragraph{Results.}
Table~\ref{tab:tts_comparison} reports accuracy across all six benchmarks. \method leads on average by $+2.71$ points over the strongest TTS method (Self-Synthesizer, 67.44\%). The discussion is structured around a single question: where should additional inference computation be allocated: at the response level or at the reasoning-step level?  Three structural differences between the paradigms bear on the answer.

\noindent \textit{Granularity of error correction.}
Self-Consistency, Self-Selector, Self-refinement, and Self-Synthesizer evaluate or combine complete responses.  If a reasoning chain drifts at step~3 of~8, these methods can only discard the entire trajectory and hope
that a different sample avoids the same error.  \method intervenes at step~3 directly, replacing the flawed continuation before downstream errors compound.

\noindent \textit{Nature of the verification signal.}
TTS methods derive selection or aggregation signals from the base model's own
outputs: majority agreement (Self-Consistency), self-evaluation (Self-Selector), base-model-driven refinement (Self-Refinement), response
synthesis (Self-Synthesizer), or across augmented views (TTAug).  These are single-perspective signals.  \method derives its signal from cross-modal disagreement among three independently prompted, modality-specialized judges.  Proposition~1 in the main paper establishes that this coupled disagreement structure carries information that no separable
function of individual scores can replicate.  The dispersion~$\Delta^{*}$, which is absent in all TTS baselines, provides a second acceptance axis that catches candidates with high average confidence but unresolved cross-modal
tension (Section~\ref{sec:fallback_analysis}).

\noindent \textit{Cross-task consistency.}
Four of the five TTS methods fall \emph{below} the unverified base model on average: Self-Consistency (64.61\%), Self-Selector (61.19\%),
Self-Refinement (60.42\%), and TTAug (62.78\%) each underperform the base model's 66.31\%.  Only Self-Synthesizer (67.44\%) exceeds it.  More
critically, every TTS method degrades on at least one benchmark. Self-Consistency drops 5.65~points on BLINK and 6.89~points on AI2D.
Self-Selector degrades on five of six benchmarks, with a 13.15 point collapse on BLINK.  Self-Refinement falls below baseline on all six. TTAug loses 11.77~points on AI2D.  Even Self-Synthesizer, the strongest TTS baseline, drops 1.44~points on 3DSRBench.  \method improves on every benchmark.

\noindent
\textit{Where the gap is largest.}
The advantage is most pronounced on benchmarks requiring tight step-level visual grounding.  On CV-Bench-3D, \method leads Self-Synthesizer by
5.01~points; on 3DSRBench, it leads Self-Consistency (the best TTS method on
that benchmark) by 1.94~points.  These are depth-ordering and 3D spatial reasoning tasks where a single misjudged step early in the chain propagates through all subsequent comparisons: precisely the failure mode that response-level aggregation cannot intercept.

%% file: suppl_sections/additional_results/eps_analysis.tex
\subsection{Cross-Benchmark Dispersion Tolerance Sensitivity}
\label{sec:epsilon_cross_bench}

Section~6.3 of the main paper analyzes the dispersion tolerance $\epsilon$ on 3DSRBench. Here we extend this analysis to all six benchmarks, sweeping $\epsilon \in \{0.001, 0.05, 0.1, \allowbreak\ 0.5, 1.0, 2.5, 3.0\}$ with $\tau = 0.6$ fixed and reporting accuracy with standard errors across five seeds.

\begin{figure*}[t]
\centering
\begin{minipage}{0.48\linewidth}
\centering
\includegraphics[width=\linewidth]{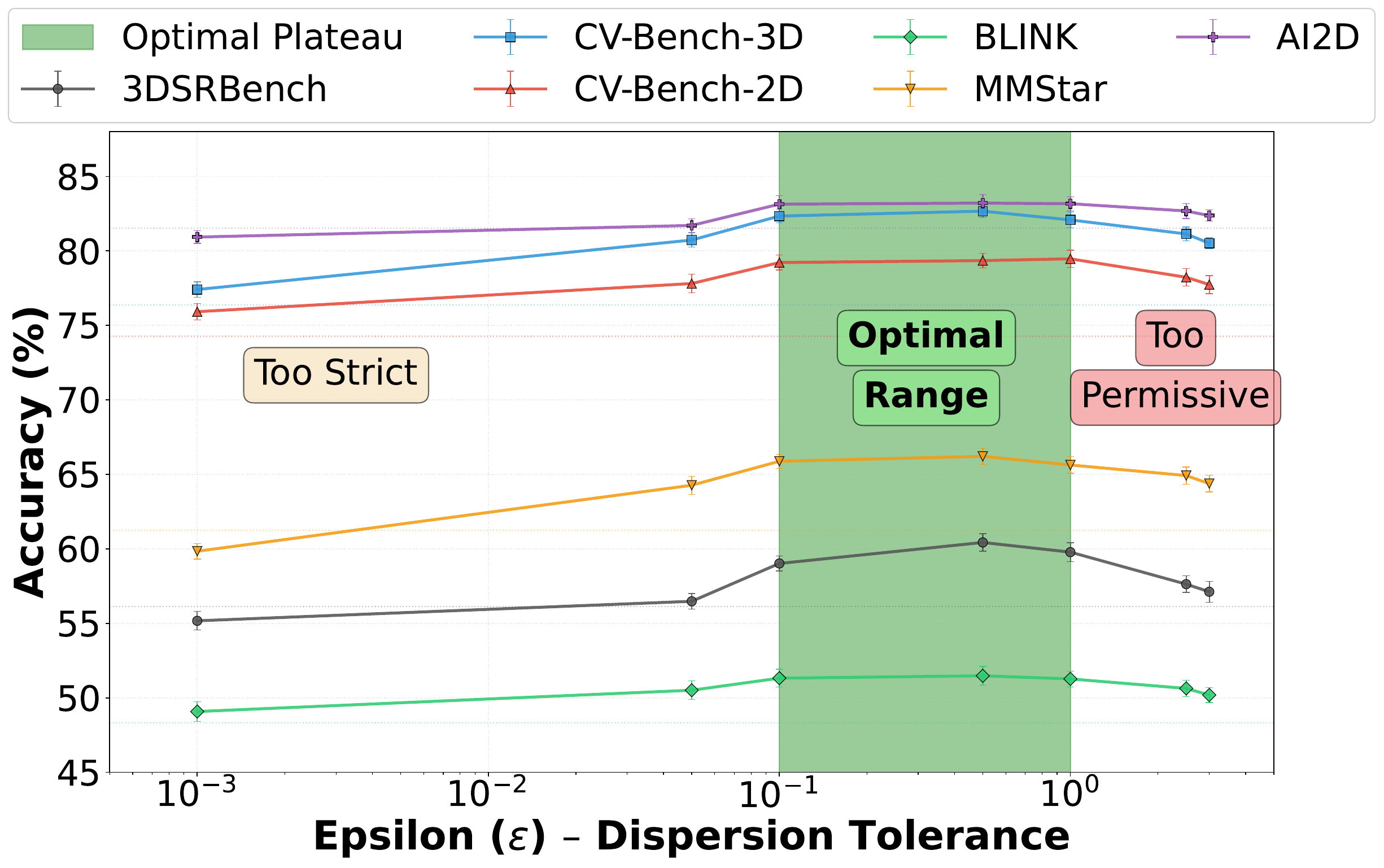}
\caption{Dispersion tolerance sensitivity across all six benchmarks ($\tau = 0.6$ fixed). All benchmarks exhibit the same three-regime structure: too-strict rejection at low $\epsilon$, an optimal plateau at $\epsilon \in [0.1, 1.0]$, and gradual degradation at high $\epsilon$. Dotted lines indicate base model performance.}
\label{fig:eps_all}
\end{minipage}
\hfill
\begin{minipage}{0.48\linewidth}
\centering
\includegraphics[width=\linewidth]{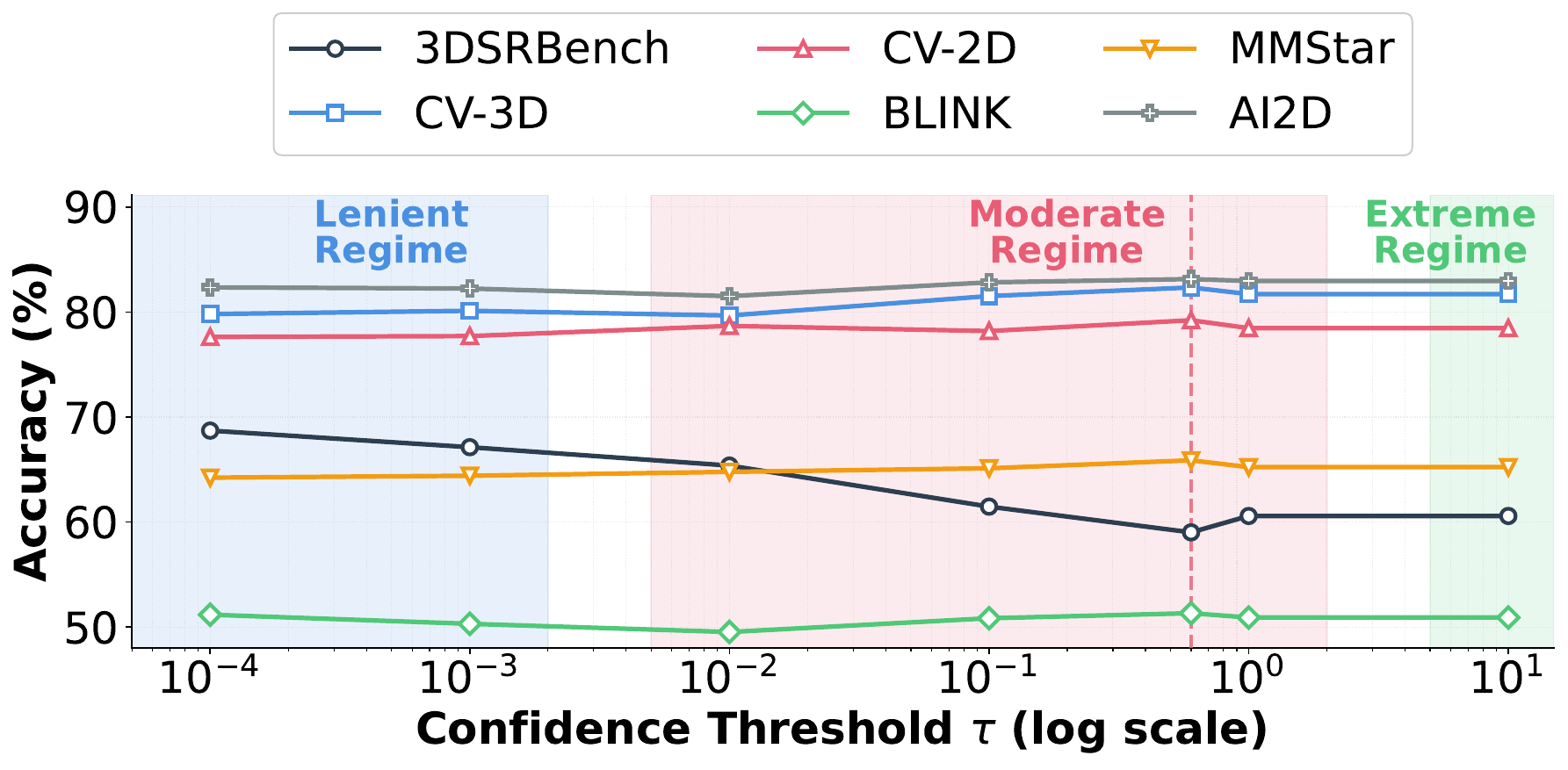}
\caption{Accuracy vs.\ confidence threshold $\tau$ across all six benchmarks ($\epsilon=0.1$ fixed). Five benchmarks peak at $\tau=0.6$; 3DSRBench is the sole exception, where the permissive regime dominates. Horizontal dashed lines indicate base model performance.}
\label{fig:tau_cross_supp}
\end{minipage}
\vspace{-10pt}
\end{figure*}

\begin{wraptable}{R}{0.48\textwidth}
\centering
\caption{Worst-case improvement over the base model (minimum accuracy gain across benchmarks) at each $\epsilon$.}
\label{tab:epsilon_worstcase}
\small
\begin{tabular}{lc c }
\toprule
$\epsilon$ & Min gain (\%) & Benchmark \\
\midrule
0.001 & $-$0.95 & 3DSRBench \\
0.05  & $+$0.36 & 3DSRBench \\
\rowcolor{gray!15}
\textbf{0.1} & $+$\textbf{1.62} & AI2D \\
0.5   & $+$1.69 & AI2D \\
1.0   & $+$1.65 & AI2D \\
2.5   & $+$1.16 & AI2D \\
3.0   & $+$0.85 & AI2D \\
\bottomrule
\end{tabular}
\end{wraptable}


\noindent Figure~\ref{fig:eps_all} and Table~\ref{tab:epsilon_worstcase} confirm that the three-regime pattern generalizes across benchmarks. Three findings emerge.

\paragraph{The three-regime structure is universal.}
Every benchmark follows the same qualitative pattern: performance rises from the strict regime ($\epsilon < 0.1$) into an optimal plateau ($\epsilon \in [0.1, 1.0]$), then declines in the permissive regime ($\epsilon > 1.0$). Degradation is gradual in both directions because two safety nets operate universally: the fallback ranking ($\bar{s}^* - \Delta^*$) catches failures when filtering is too aggressive, and the confidence threshold ($\bar{s}^* > \tau$) continues rejecting weak candidates when filtering is too lenient.

\paragraph{The optimal plateau spans an order of magnitude.}
Accuracy differences between $\epsilon = 0.1$ and $\epsilon = 1.0$ are small across all benchmarks: 0.76 points on 3DSRBench, 0.26 on CV-Bench-3D, 0.25 on CV-Bench-2D, 0.05 on BLINK, 0.25 on MMStar, and 0.03 on AI2D. This stability indicates that $\epsilon = 0.1$ does not require fine-tuning to be effective.

\paragraph{Sensitivity increases with task difficulty.}
3DSRBench exhibits the widest performance range (5.26 points), consistent with the inherent cross-modal tension in 3D spatial reasoning. AI2D shows the narrowest range (1.28 points), reflecting its high baseline accuracy and modest \textbf{VERDICT} gain (+1.62).

\noindent Table~\ref{tab:epsilon_worstcase} reports worst-case robustness: the minimum improvement over the base model at each $\epsilon$. The worst-case benchmark is 3DSRBench for $\epsilon \leq 0.05$ (where overly strict filtering removes informative candidates) and AI2D for $\epsilon \geq 0.1$ (reflecting its small overall gain). The chosen default $\epsilon = 0.1$ provides near-optimal worst-case robustness while sitting at the conservative end of the plateau, which is appropriate for deployment without prior knowledge of task distribution.

\noindent The complementary safety-net structure is confirmed across benchmarks: even at $\epsilon = 0.001$, five of six benchmarks remain above baseline; when $\epsilon$ is too permissive, the confidence threshold limits degradation to 1-2 points. This mutual reinforcement explains why no benchmark exhibits a sharp performance cliff at any $\epsilon$ value.

%% file: suppl_sections/additional_results/tau_analysis.tex
\subsection{Cross-Benchmark Confidence Threshold Analysis}
\label{sec:supp_tau_cross}

Section~6.4 of the main text presents the $\tau$ sensitivity analysis on 3DSRBench. Here we extend this analysis to all six benchmarks, addressing whether the chosen operating point $\tau=0.6$ is justified given that the permissive regime ($\tau \leq 0.01$) achieves substantially higher accuracy on 3DSRBench.

\noindent \textbf{Setup} We sweep $\tau \in \{10^{-4}, 10^{-3}, 10^{-2}, 10^{-1}, 0.6, 1.0, 10.0\}$ with $\epsilon = 0.1$ fixed, reporting accuracy with standard errors across five random seeds. Results are shown in Figure~\ref{fig:tau_cross_supp}.



\paragraph{Per-benchmark behavior.}
Five of six benchmarks follow a consistent pattern: accuracy increases as $\tau$ rises toward $0.6$. The gains from the permissive regime to $\tau=0.6$ range from $+0.90$ (AI2D) to $+1.65$ (MMStar), reflecting that the confidence threshold catches genuinely weak candidates that the dispersion filter alone does not intercept.

\begin{wraptable}{L}{0.48\textwidth}
\centering
\caption{Worst-case improvement over the base model (minimum accuracy gain across benchmarks) at each $\tau$.}
\label{tab:tau_worstcase}
\begin{tabular}{l c c}
\toprule
$\tau$ & Min gain(\%) & Benchmark \\
\midrule
$10^{-4}$ & $+0.72$ & AI2D \\
$10^{-3}$ & $+0.83$ & AI2D \\
$10^{-2}$ & $+0.99$ & AI2D \\
$10^{-1}$ & $+1.31$ & AI2D \\
\rowcolor{gray!15}
$0.6$ & $\mathbf{+1.62}$ & AI2D \\
$1.0$ & $+1.44$ & AI2D \\
$10.0$ & $+1.44$ & AI2D \\
\bottomrule
\end{tabular}
\end{wraptable}

\paragraph{The Permissive Regime as Evidence for the Consensus Formulation}
\label{sec:supp_permissive_evidence}

The permissive regime on 3DSRBench ($65$--$69\%$ accuracy) provides independent evidence for the consensus formulation. At $\tau \leq 0.01$, the confidence filter is effectively disabled: nearly all candidates pass, and the system relies almost entirely on the dispersion criterion $\Delta^* < \epsilon$. Since $\bar{s}^*$ equals the raw mean by construction, the ranking among accepted candidates is identical to what the Mean baseline would produce. The improvement over Mean ($58.34\% \to 65$--$69\%$) is therefore attributable entirely to the dispersion filter. The dispersion $\Delta^*$ identifies conflicted candidates that raw score variance misses, consistent with Proposition~1 and the Raw Average ablation in Section~6.2.


\noindent The consensus formulation thus contributes through complementary mechanisms at different operating points: dispersion filtering dominates in the permissive regime on high-tension tasks; acceptance calibration through the dual criterion dominates at moderate $\tau$ on lower-tension tasks. At $\tau=0.6$, both mechanisms are active, producing the $2.59$--$2.95$ point gap between \method and Raw Average reported in Section~6.2.


\noindent \textit{Single threshold across benchmarks.}
Per-task threshold selection would undermine the training-free, plug-in design principle that motivates our framework and require held-out validation sets for each benchmark, introducing a data dependency we specifically avoid. Reporting a single fixed configuration across all six benchmarks is a stricter evaluation protocol that demonstrates genuine cross-task generalization.

\noindent \textit{Performance trade-offs at the operating point.}
On 3DSRBench specifically, $\tau=0.6$ leaves performance on the table: $59.02\%$ versus $68.71\%$ at $\tau=10^{-4}$. This is the cost of optimizing for robustness under unknown task distributions rather than per-task maximization. If the deployment distribution is known to be similar to that of 3DSRBench, the threshold should be lowered accordingly. Adaptive threshold selection based on task characteristics remains a promising direction for future work.

%% file: suppl_sections/additional_results/threshold_analysis_ext.tex
\subsection{Threshold Sensitivity: Extended Analysis}
\label{sec:supp_threshold}

The main paper (Section~6.3) summarizes threshold sensitivity on 3DSRBench. Here we provide the full regime analysis for both parameters, contextualizing the cross-benchmark results from Sections~\ref{sec:epsilon_cross_bench} and~\ref{sec:threshold_interaction}.

\paragraph{Confidence threshold $\tau$.}
We sweep $\tau \in \{10^{-4}, 10^{-3}, 10^{-2}, 10^{-1}, 0.6, 1.0, 10.0\}$ with $\epsilon = 0.1$ fixed. Performance separates into three regimes. In the \emph{permissive regime} ($\tau \leq 0.01$), nearly all candidates pass the confidence check, and the 65--69\% accuracy depends primarily on how well consensus scores rank accepted candidates. Since $\bar{s}^*$ equals the raw mean by construction, ranking among accepted candidates is identical to Mean aggregation; the improvement over Mean (58.34\% $\to$ 65--69\%) is attributable entirely to the dispersion filter $\Delta^* < \epsilon$. This provides independent evidence that consensus dispersion captures structure that raw variance fails to capture (Proposition~1, main paper). In the \emph{intermediate regime} ($\tau = 0.1$--$0.6$), performance stabilizes around 60--61\%. Candidates filtered here where judges partially disagree but lack a strong consensus are precisely cases where cross-modal tension carries diagnostic value. On 3DSRBench, removing these borderline candidates is counterproductive because some contentious steps are correct; this explains why $\tau = 0.6$ is individually optimal on five of six benchmarks (Section~\ref{sec:epsilon_cross_bench}) but not on 3DSRBench. In the \emph{restrictive regime} ($\tau \geq 1.0$), no candidates pass (consensus scores are bounded in $[0,1]$), and the system falls back to $\bar{s}^* - \Delta^*$ ranking, achieving $\sim$60.6\%. That comparable accuracy is maintained under pure fallback confirms that the combined criterion provides a reasonable ranking signal even without dual-criterion filtering.

\paragraph{Dispersion tolerance $\epsilon$.}
We sweep $\epsilon \in \{0.001, 0.05, 0.1, 0.5, 1.0, 2.5, 3.0\}$ with $\tau = 0.6$ fixed. Performance peaks in an optimal plateau at $\epsilon \in [0.1, 1.0]$. When $\epsilon$ is \emph{too strict} ($< 0.1$), accuracy degrades to 54--57\% because near-unanimous agreement is unrealistic for spatial reasoning where some cross-modal tension is natural. Many legitimate steps are rejected for dispersion only slightly above the threshold. Performance does not collapse because the fallback mechanism remains active. In the \emph{optimal plateau} ($\epsilon \in [0.1, 1.0]$), accuracy stabilizes around 60--61\%; the filter removes candidates with genuine conflict while admitting those with task-appropriate disagreement. The plateau width ($\sim$1 order of magnitude) indicates insensitivity to exact threshold choice consistent with the cross-benchmark finding that accuracy differences between $\epsilon = 0.1$ and $\epsilon = 1.0$ are small on all benchmarks (Section~\ref{sec:epsilon_cross_bench}). When $\epsilon$ is \emph{too permissive} ($> 1.0$), performance gradually declines to 57--58\% as the filter loses discriminative power, though the confidence threshold continues catching weak candidates.

\paragraph{Interaction and safety nets.}
The two thresholds provide complementary safety nets, as confirmed by the joint analysis (Section~\ref{sec:threshold_interaction}). When $\epsilon$ is too strict, the fallback ranking prevents catastrophic failure; when $\epsilon$ is too permissive, the confidence threshold $\bar{s}^* > \tau$ continues filtering. This mutual reinforcement explains why the framework degrades gracefully across the full parameter range, rather than exhibiting sharp cliffs—a property that holds across all six benchmarks.

\paragraph{Key insight.}
Consensus scores function most naturally as a ranking tool rather than a binary classifier. The permissive regime succeeds by preserving the full candidate pool while leveraging dispersion for filtering; the restrictive regime maintains comparable performance because its fallback incorporates both confidence and disagreement. This explains the main paper's finding (Section~6.1) that removing intelligent ranking (No Selection ablation) costs more than removing filtering (No Rejection), and why the dispersion-reject quadrant in Section~\ref{sec:conf_disp_landscape} contains candidates that confidence-only methods cannot distinguish from accepts.

%% file: suppl_sections/additional_results/heatmap_analysis.tex
\subsection{Joint Threshold Interaction Analysis}
\label{sec:threshold_interaction}

Figures~5 and~6 of the main paper analyze sensitivity to $\tau$ and $\epsilon$ separately. Here we examine whether these thresholds interact by sweeping both jointly on 3DSRBench: all combinations of $\tau \in \{10^{-4}, 10^{-3}, 10^{-2}, 10^{-1}, 0.6, 1.0, 10.0\}$ and $\epsilon \in \{0.001, 0.01, 0.05, 0.1, 0.5, 1.0, 2.5\}$, yielding 49 configurations. Figure~\ref{fig:threshold_heatmap} presents heatmaps of accuracy and acceptance rate. Four findings emerge.

\begin{figure}[t]
\centering
\begin{minipage}{0.48\linewidth}
\centering
\includegraphics[width=\linewidth]{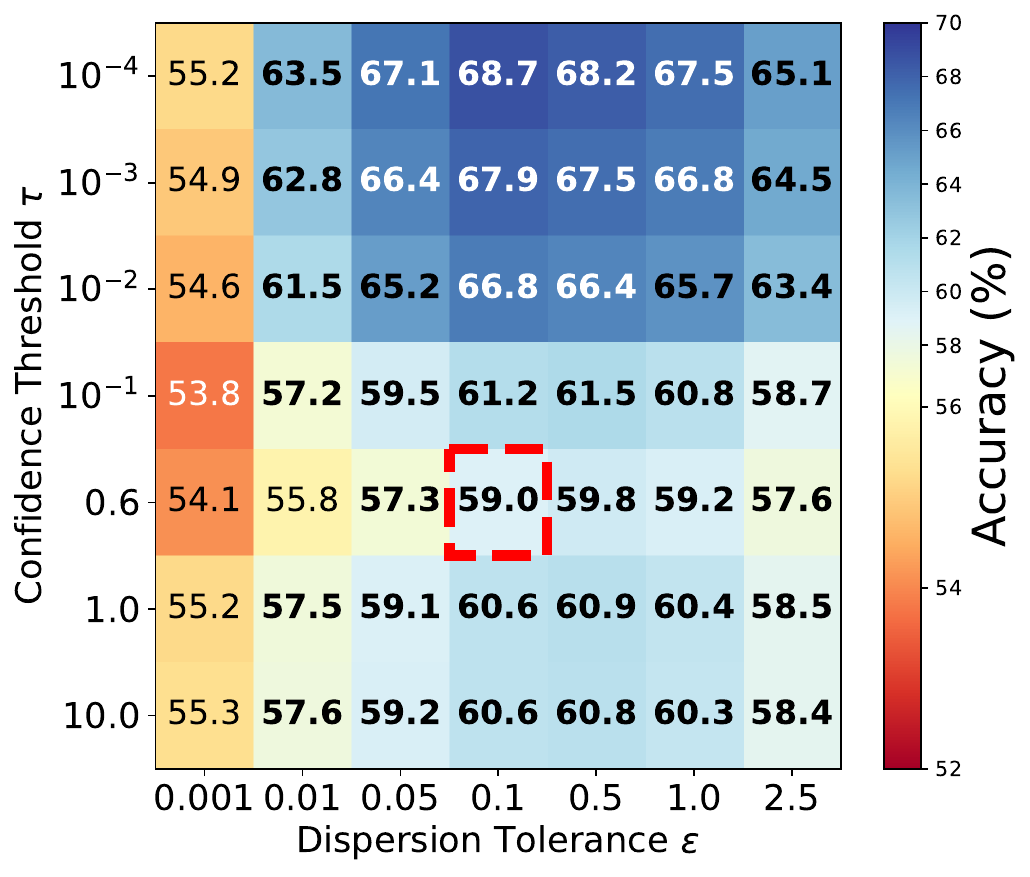}
\end{minipage}
\hfill
\begin{minipage}{0.48\linewidth}
\centering
\includegraphics[width=\linewidth]{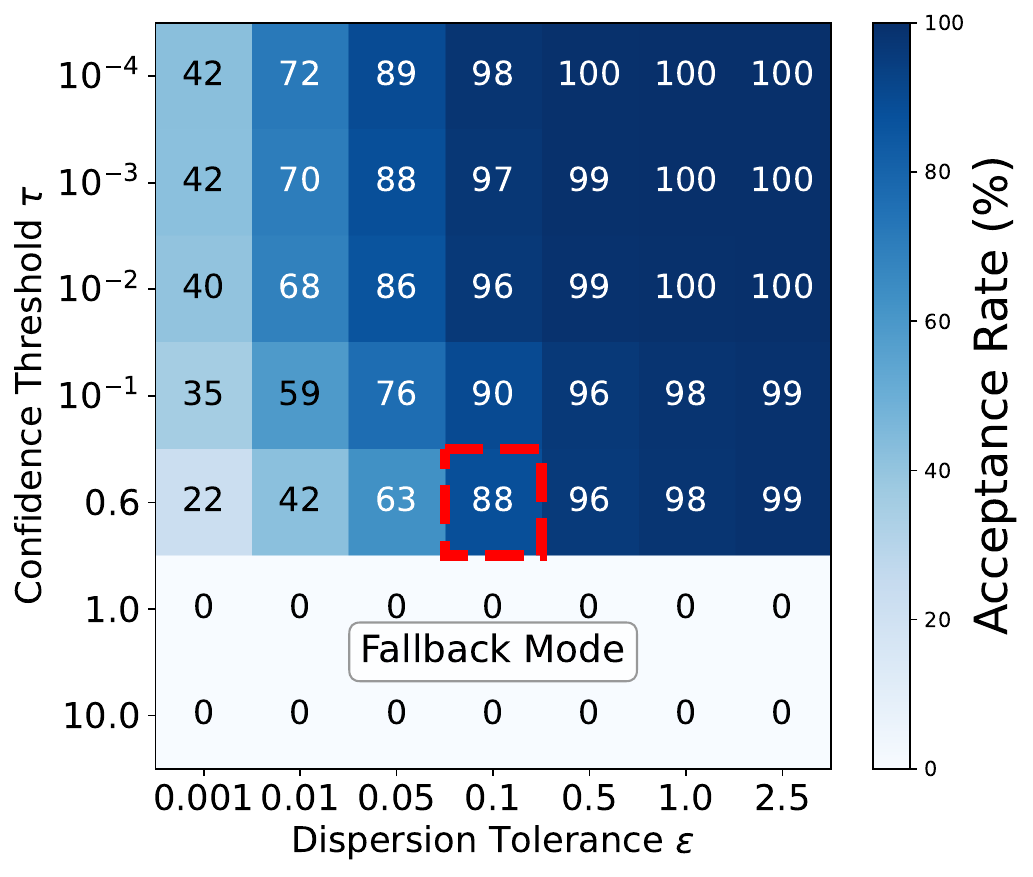}
\end{minipage}
\caption{Joint threshold analysis on 3DSRBench. \textbf{Left:} Accuracy (\%) across the $(\tau, \epsilon)$ grid. \textbf{Right:} Acceptance rate (\%), i.e., the fraction of steps where at least one candidate passes both criteria. The operating point $(\tau=0.6, \epsilon=0.1)$ lies on a stable plateau; the global optimum at low $\tau$ trades cross-benchmark robustness for 3DSRBench-specific gains.}
\label{fig:threshold_heatmap}
\end{figure}

\paragraph{The operating point lies on a plateau, not an edge.}
Accuracy at $(\tau=0.6, \epsilon=0.1)$ is 59.0\%, with similar values in the surrounding region: 59.8\% at $(\tau=0.6, \epsilon=0.5)$, 60.6\% at $(\tau=1.0, \epsilon=0.1)$, and 57.3\% at $(\tau=0.6, \epsilon=0.05)$. Moving in any direction causes gradual change (1--2 points), confirming that the configuration is robust to small perturbations.

\paragraph{The two thresholds interact asymmetrically.}
At low $\tau$ (permissive confidence filtering), varying $\epsilon$ produces large swings: accuracy ranges from 55.2\% at $\epsilon=0.001$ to 68.7\% at $\epsilon=0.1$ when $\tau=10^{-4}$ a 13.5 point difference. At high $\tau$, the effect is muted: only 5.7 points when $\tau=1.0$. This asymmetry arises because at low $\tau$ nearly all candidates pass the confidence check, making $\epsilon$ the primary filter; at high $\tau$, few candidates pass regardless of dispersion.

\paragraph{The global optimum is at low $\tau$, moderate $\epsilon$.}
The highest accuracy (68.7\%) occurs at $(\tau=10^{-4}, \epsilon=0.1)$, corresponding to the permissive regime where the confidence filter is effectively disabled. As discussed in Section~6.3 of the main paper, this configuration achieves higher accuracy on 3DSRBench but does not generalize across benchmarks. The operating point $(\tau=0.6, \epsilon=0.1)$ sacrifices 9.7 points on 3DSRBench for cross-benchmark robustness.

\paragraph{Complementary safety nets prevent catastrophic failure.}
The leftmost column ($\epsilon=0.001$) shows degraded but not catastrophic accuracy (53--55\%) because the fallback ranking remains active. The rightmost column ($\epsilon=2.5$) shows only mild degradation (57--65\%) because the confidence threshold continues filtering weak candidates. No region falls substantially below baseline (56.12\%) except the extreme corner at $(\tau=0.6, \epsilon=0.001)$.

\noindent The acceptance rate heatmap (right panel) reveals complementary structure. At $\tau \geq 1.0$, acceptance drops to 0\% because $\bar{s}^* \in [0,1]$ by construction, forcing the system to operate entirely through fallback ranking. At the operating point, acceptance is approximately 88\%, consistent with the 15\% fallback rate reported in the main paper. The highest-accuracy region (low $\tau$, moderate $\epsilon$) coincides with acceptance rates above 95\%, indicating that in the permissive regime, verification functions primarily through ranking rather than filtering, which is consistent with the main paper's finding that consensus scores operate optimally as ranking tools.

%% file: suppl_sections/additional_results/judge_scale.tex
\subsection{Cross-Benchmark Judge Scale Analysis}
\label{sec:judge_scale}

Section~6.4 of the main paper shows that the margin between \textbf{VERDICT} and Mean aggregation widens as the judge model scale decreases on 3DSRBench. Here we extend this analysis to all six benchmarks.

\noindent We replace the 7B verification agents (Qwen2.5-VL-7B-Instruct) with their 4B and 2B variants while holding the base reasoner fixed at 7B, isolating judge scoring quality from candidate generation quality. All hyperparameters remain unchanged. Figure~\ref{fig:judge_scale_all} presents accuracy across all benchmarks at each judge scale. Three findings emerge.

\begin{figure}[t]
\centering
\includegraphics[width=\textwidth]{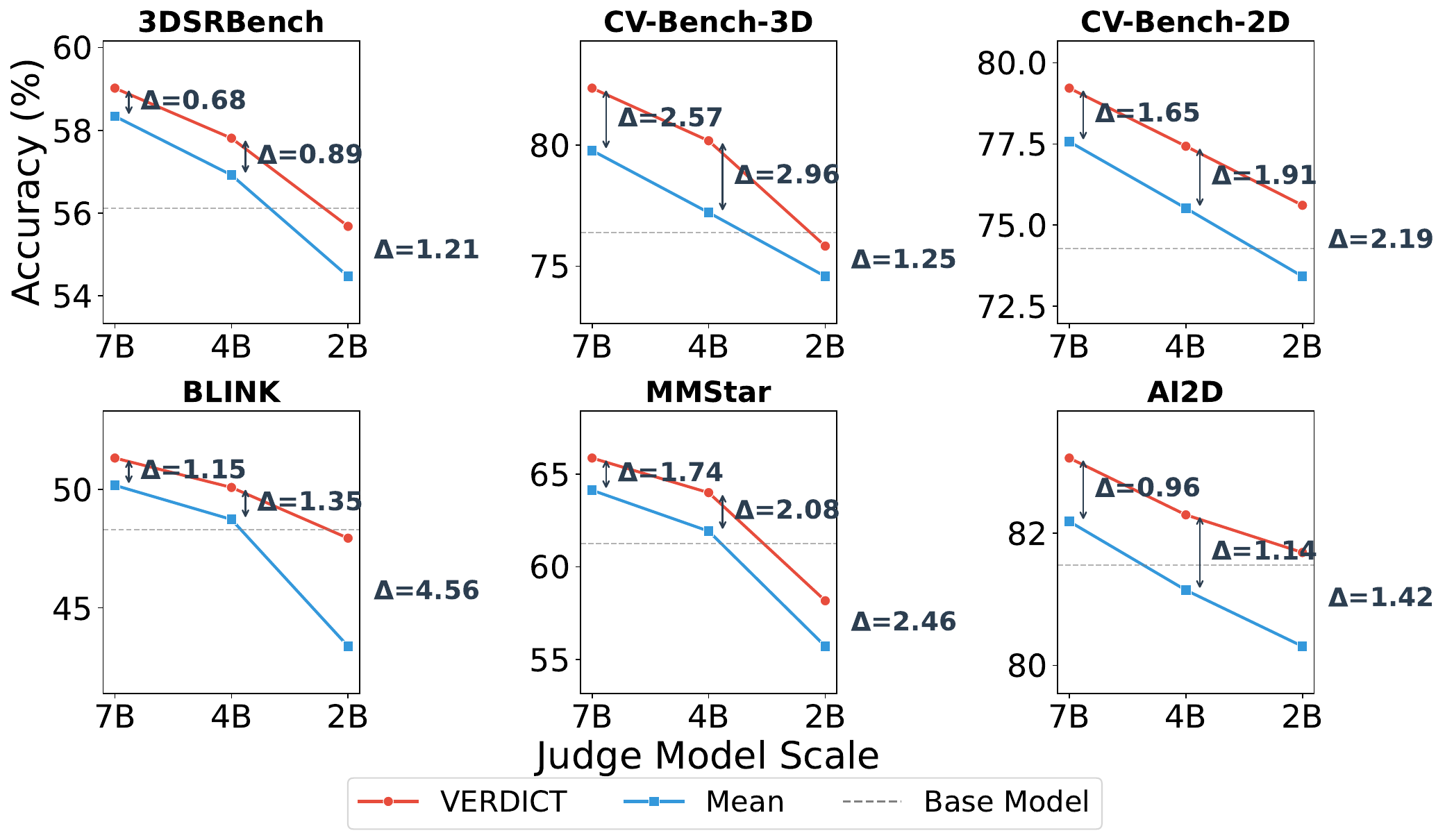}
\caption{Accuracy as a function of judge model scale (7B, 4B, 2B) for \textbf{VERDICT} (red) and Mean aggregation (blue) across all six benchmarks. Dashed lines indicate base model accuracy. $\Delta$ annotations show the margin between methods at each scale.}
\label{fig:judge_scale_all}
\end{figure}

\paragraph{The widening-margin pattern holds on five of six benchmarks.}
On 3DSRBench, CV-Bench-2D, BLINK, MMStar, and AI2D, the gap between \textbf{VERDICT} and Mean increases monotonically as judge scale decreases from 7B to 2B. The effect is particularly pronounced on BLINK, where the margin expands from 1.15 points at 7B to 4.56 points at 2B. 

\paragraph{CV-Bench-3D exhibits a non-monotonic pattern.}
On this benchmark, the margin peaks at 4B ($\Delta = 2.96$) before contracting to 1.25 at 2B. We attribute this to a floor effect: CV-Bench-3D is a depth-ordering task where the base model already performs strongly (76.39\%), and at 2B scale, both methods approach this floor. When judges become sufficiently noisy that their scores carry minimal signal, the consensus computation has less structure to exploit. Notably, even at 2B, \textbf{VERDICT} still outperforms Mean by 1.25 points: the margin contracts but does not invert.

\paragraph{VERDICT remains above or near baseline across all configurations.}
Across the 18 configurations tested (6 benchmarks $\times$ 3 scales), \textbf{VERDICT} exceeds the base model in 16 cases and remains within 0.5 points in the remaining 2. In contrast, Mean aggregation drops below baseline in 5 configurations. This asymmetry confirms that the consensus formulation provides a safety margin against judge degradation that simple averaging does not.

These results have practical implications. In resource-constrained settings with 2B-scale judges, \textbf{VERDICT} recovers 1.2--4.6 points over naive averaging depending on the task. The widening margin also suggests that investing in the consensus algorithm is complementary to, not a substitute for, judge quality: absolute accuracy at 7B exceeds that at 2B on all benchmarks, and the widening margin reflects that Mean degrades faster rather than smaller judges being preferable. Finally, on benchmarks where the base model is already strong (CV-Bench-3D, AI2D), margin expansion is more modest and floor effects may emerge; on challenging benchmarks (BLINK, MMStar), the consensus formulation provides substantial and increasing value as judge quality degrades.

%% file: suppl_sections/additional_results/cb_stubborness_sens.tex
\subsection{Cross-Benchmark Stubbornness Sensitivity}
\label{sec:cross_bench_stubbornness}

Section~6.2 of the main paper ablates each stubbornness parameter on 3DSRBench. A natural concern is whether the chosen configuration \\
$(\lambda_V{=}1.5,\; \allowbreak\ \lambda_L{=}1.0,\; \lambda_C{=}0.8)$ is implicitly tuned to that benchmark. We address this by repeating the same ablation on CV-Bench-3D (where \textbf{VERDICT} achieves its largest gain, $+5.95$) and BLINK (where perceptual reasoning differs qualitatively from spatial reasoning). Each curve varies one parameter over $\{0.3, 0.8, 1.0, 1.5, 2.0\}$ while holding the others fixed.

\begin{table}[t]
\centering
\caption{Stubbornness sensitivity on CV-Bench-3D (base: 76.39\%, Mean: 79.77\%) and BLINK (base: 48.31\%, Mean: 50.17\%). Each row varies one parameter; others remain at defaults. Bold indicates peak accuracy.}
\label{tab:stub_combined}
\small
\begin{tabular}{l l c c c c c c}
\toprule
& & $\lambda = 0.3$ & $\lambda = 0.8$ & $\lambda = 1.0$ & $\lambda = 1.5$ & $\lambda = 2.0$ & Range \\
\midrule
\multirow{3}{*}{\rotatebox{90}{\scriptsize CV-3D}}
& $\lambda_V$ & $79.61{\scriptstyle\pm 0.62}$ & $80.89{\scriptstyle\pm 0.55}$ & $81.52{\scriptstyle\pm 0.53}$ & $\mathbf{82.34}{\scriptstyle\pm 0.51}$ & $81.87{\scriptstyle\pm 0.54}$ & 2.73 \\
& $\lambda_L$ & $81.17{\scriptstyle\pm 0.58}$ & $81.95{\scriptstyle\pm 0.53}$ & $\mathbf{82.34}{\scriptstyle\pm 0.51}$ & $82.08{\scriptstyle\pm 0.55}$ & $81.71{\scriptstyle\pm 0.57}$ & 1.17 \\
& $\lambda_C$ & $81.58{\scriptstyle\pm 0.56}$ & $\mathbf{82.34}{\scriptstyle\pm 0.51}$ & $82.21{\scriptstyle\pm 0.52}$ & $81.93{\scriptstyle\pm 0.54}$ & $81.72{\scriptstyle\pm 0.56}$ & 0.76 \\
\midrule
\multirow{3}{*}{\rotatebox{90}{\scriptsize BLINK}}
& $\lambda_V$ & $49.53{\scriptstyle\pm 0.56}$ & $50.47{\scriptstyle\pm 0.51}$ & $50.86{\scriptstyle\pm 0.49}$ & $\mathbf{51.32}{\scriptstyle\pm 0.48}$ & $50.91{\scriptstyle\pm 0.50}$ & 1.79 \\
& $\lambda_L$ & $50.58{\scriptstyle\pm 0.53}$ & $51.04{\scriptstyle\pm 0.50}$ & $\mathbf{51.32}{\scriptstyle\pm 0.48}$ & $51.11{\scriptstyle\pm 0.50}$ & $50.79{\scriptstyle\pm 0.52}$ & 0.74 \\
& $\lambda_C$ & $50.83{\scriptstyle\pm 0.51}$ & $\mathbf{51.32}{\scriptstyle\pm 0.48}$ & $51.19{\scriptstyle\pm 0.49}$ & $50.97{\scriptstyle\pm 0.50}$ & $50.75{\scriptstyle\pm 0.52}$ & 0.57 \\
\bottomrule
\end{tabular}
\end{table}

\noindent Table~\ref{tab:stub_combined} presents results for both benchmarks. The sensitivity hierarchy mirrors 3DSRBench on both: $\lambda_V$ has the widest performance range (2.73 points on CV-Bench-3D, 1.79 on BLINK), followed by $\lambda_L$ (1.17, 0.74) and $\lambda_C$ (0.76, 0.57). All curves peak at the default configuration.

\noindent The $\lambda_V$ curve on CV-Bench-3D is particularly informative. At $\lambda_V = 0.3$, the visual agent places 77\% of its weight on group consensus (Eq.~(2), main paper), suppressing its private visual evidence. On a depth-ordering task where visual grounding is the primary signal, this dilution costs 2.73 points. At $\lambda_V = 2.0$, the agent becomes overly stubborn, resisting legitimate corrections when its initial score is noisy; performance drops 0.47 points. The optimal $\lambda_V = 1.5$ balances these failure modes.

\noindent The tighter ranges on BLINK are expected: the overall \textbf{VERDICT} gain is smaller ($+3.01$ vs.\ $+5.95$ on CV-Bench-3D), leaving less room for parameter-induced variation. The relative sensitivity pattern, however, is unchanged.

\begin{table}[t]
\centering
\caption{Sensitivity ranges across three benchmarks. The hierarchy $\lambda_V > \lambda_L > \lambda_C$ is preserved on every benchmark, and all 45 configurations exceed the base model.}
\label{tab:stub_summary}
\small
\begin{tabular}{l c c c c}
\toprule
Benchmark & $\lambda_V$ range & $\lambda_L$ range & $\lambda_C$ range & All $>$ Base \\
\midrule
3DSRBench   & 2.90 & 1.40 & 1.19 & \checkmark \\
CV-Bench-3D & 2.73 & 1.17 & 0.76 & \checkmark \\
BLINK       & 1.79 & 0.74 & 0.57 & \checkmark \\
\bottomrule
\end{tabular}
\end{table}

\noindent Table~\ref{tab:stub_summary} summarizes the cross-benchmark pattern. Three conclusions follow.

\paragraph{The sensitivity hierarchy is benchmark-independent.}
$\lambda_V$ is the most impactful parameter on every benchmark, $\lambda_C$ the least. This ordering reflects the agent structure rather than dataset-specific tuning: on tasks requiring visual grounding, the visual agent's stubbornness determines how much evidence survives consensus pressure.

\paragraph{The default configuration is jointly optimal.}
Across all nine curves (three parameters $\times$ three benchmarks), peak accuracy occurs at the default $(\lambda_V{=}1.5,\; \allowbreak\ \lambda_L{=}1.0,\; \lambda_C{=}0.8)$. 


%% file: suppl_sections/additional_results/stubborness_permute.tex
\subsection{Cross-Benchmark Stubbornness Assignment Analysis}
\label{sec:cross_bench_swaps}

Section~6.2 of the main paper tests whether the \emph{assignment} of stubbornness values to agents matters by permuting the triple $(\lambda_V{=}1.5,\; \lambda_L{=}1.0,\; \lambda_C{=}0.8)$ via pairwise swaps on 3DSRBench. Here we repeat the same protocol on CV-Bench-3D and BLINK to verify that the ordering $\lambda_V > \lambda_L > \lambda_C$ is not benchmark-specific.

\noindent
Each pairwise swap exchanges the stubbornness values of two agents: L$\leftrightarrow$C leaves $\lambda_V = 1.5$ unchanged, V$\leftrightarrow$L reduces $\lambda_V$ to 1.0, and V$\leftrightarrow$C reduces $\lambda_V$ to 0.8. The three swaps form a gradient of increasing disruption to the visual agent's stubbornness. Table~\ref{tab:swap_results} reveals three consistent patterns.

\begin{table}[t]
\centering
\caption{Pairwise stubbornness swap results. $\Delta$: accuracy drop from default. Bold: configurations where $\lambda_V$ retains the highest value.}
\label{tab:swap_results}
\small
\begin{tabular}{l r r r r r r}
\toprule
& \multicolumn{2}{c}{3DSRBench} & \multicolumn{2}{c}{CV-Bench-3D} & \multicolumn{2}{c}{BLINK} \\
\cmidrule(lr){2-3} \cmidrule(lr){4-5} \cmidrule(lr){6-7}
Configuration & Acc.\,(\%) & $\Delta$ & Acc.\,(\%) & $\Delta$ & Acc.\,(\%) & $\Delta$ \\
\midrule
Default & 59.02 & --- & 82.34 & --- & 51.32 & --- \\
\textbf{L$\leftrightarrow$C} & \textbf{58.61} & $-$0.41 & \textbf{81.87} & $-$0.47 & \textbf{50.98} & $-$0.34 \\
V$\leftrightarrow$L & 57.83 & $-$1.19 & 80.51 & $-$1.83 & 50.38 & $-$0.94 \\
V$\leftrightarrow$C & 57.26 & $-$1.76 & 79.93 & $-$2.41 & 49.91 & $-$1.41 \\
\midrule
Mean baseline & 58.34 & & 79.77 & & 50.17 & \\
Base model & 56.12 & & 76.39 & & 48.31 & \\
\bottomrule
\end{tabular}
\end{table}

\paragraph{The drop ordering is benchmark-independent.}
On every benchmark, V$\leftrightarrow$C produces the largest degradation, V$\leftrightarrow$L an intermediate one, and L$\leftrightarrow$C the smallest. The ordering holds across 3DSRBench ($1.76 > 1.19 > 0.41$), CV-Bench-3D ($2.41 > 1.83 > 0.47$), and BLINK ($1.41 > 0.94 > 0.34$). Since these benchmarks test qualitatively different capabilities, the consistency confirms that the assignment ordering reflects agent architecture rather than dataset-specific features.

\paragraph{Degradation tracks the $\lambda_V$ reduction.}
Each swap changes two parameters, yet drop magnitude correlates almost entirely with how much $\lambda_V$ decreases. L$\leftrightarrow$C leaves $\lambda_V = 1.5$ untouched and costs under 0.5 points on every benchmark; V$\leftrightarrow$C reduces $\lambda_V$ to 0.8 and costs 1.4--2.4 points. When the visual agent becomes too responsive to consensus pressure, it concedes ground on perceptually grounded judgments it should have defended.

\paragraph{$\lambda_V$ as the highest value is the critical invariant.}
On all three benchmarks, L$\leftrightarrow$C (which preserves $\lambda_V$ as highest) exceeds the Mean baseline: 58.61 vs.\ 58.34 on 3DSRBench, 81.87 vs.\ 79.77 on CV-Bench-3D, 50.98 vs.\ 50.17 on BLINK. When $\lambda_V$ is no longer the largest, performance approaches or crosses the Mean baseline. On BLINK, V$\leftrightarrow$C (49.91\%) dips below Mean (50.17\%); on 3DSRBench, both V$\leftrightarrow$L (57.83\%) and V$\leftrightarrow$C (57.26\%) fall below Mean (58.34\%). CV-Bench-3D is the exception: even V$\leftrightarrow$C (79.93\%) stays marginally above Mean (79.77\%), reflecting the larger absolute gap on this benchmark.

\noindent
All twelve swap configurations exceed the unverified base model. Even V$\leftrightarrow$C on 3DSRBench (57.26\%) outperforms baseline (56.12\%) by 1.14 points. The consensus formulation provides value even under misassigned stubbornness, though correct assignment is needed to outperform simpler aggregation.

\noindent
Swaps change two parameters simultaneously, so they produce larger drops than single-parameter changes. On CV-Bench-3D, setting $\lambda_V = 0.8$ alone yields 80.89\% (Table~\ref{tab:stub_combined}), while V$\leftrightarrow$C ($\lambda_V = 0.8$, $\lambda_C = 1.5$) yields 79.93\%---the additional off-optimal $\lambda_C$ costs 0.96 points. On BLINK: 50.47\% vs.\ 49.91\%, a 0.56 point additional cost. Misassignment penalties compound when multiple agents are simultaneously mis-specified.

%% file: suppl_sections/additional_results/dispersion_as_diagonistic.tex
\subsection{Dispersion as a Diagnostic Signal}
\label{sec:dispersion_diagnostic}

\begin{figure}[t]
\centering
\includegraphics[width=\linewidth]{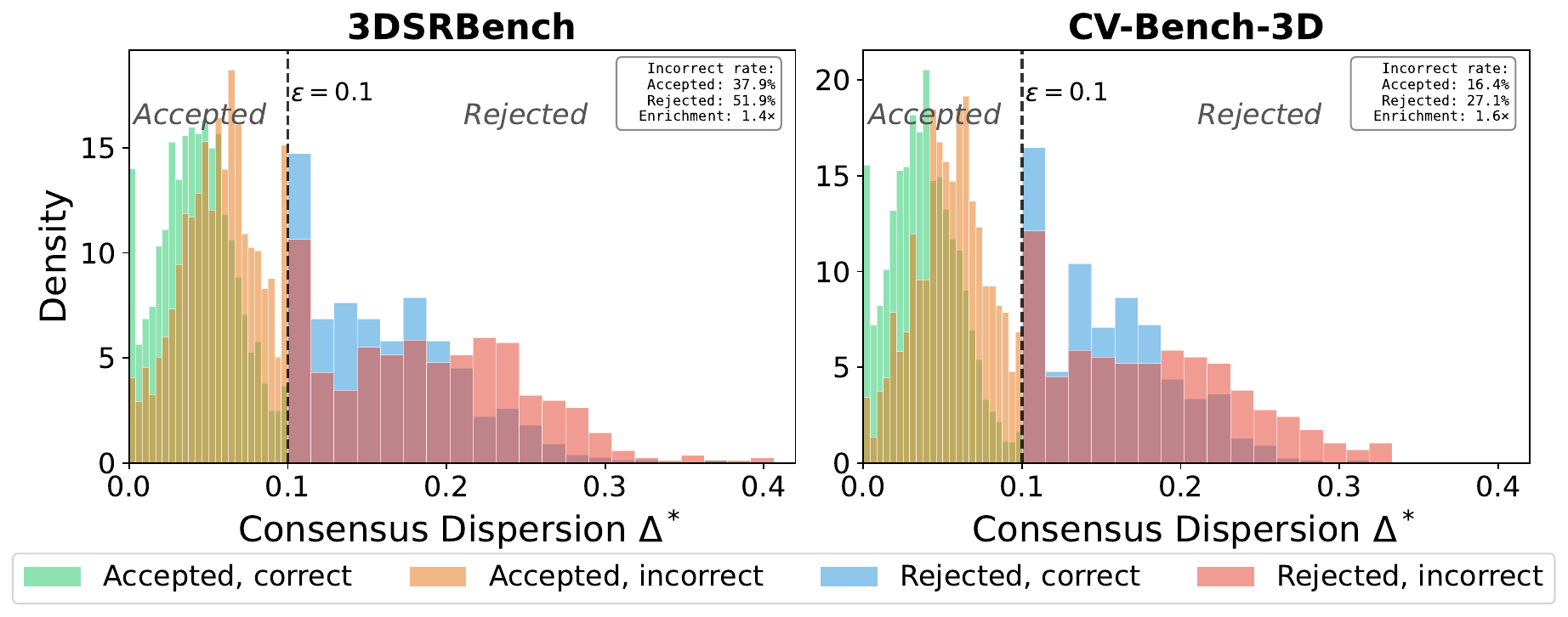}
\caption{Distribution of consensus dispersion $\Delta^*$ across candidate reasoning steps, partitioned by acceptance status and chain correctness. Dashed line: $\epsilon = 0.1$. Inset: incorrect-chain rates among accepted vs.\ rejected candidates.}
\label{fig:dispersion_dist}
\end{figure}

Proposition~1 establishes that consensus dispersion $\Delta^*$ captures cross-agent interaction structure that no separable measure can replicate. Here, we provide empirical evidence that this structure is diagnostically meaningful: high dispersion is associated with reasoning steps that lead to incorrect answers.


\noindent
For each candidate step evaluated during inference, we record its $\Delta^*$ and whether the complete chain produces the correct final answer. We partition candidates by (1) acceptance status under the dual criterion (Eq.~(4), main paper) and (2) chain correctness. We report results on 3DSRBench and CV-Bench-3D, representing the lowest and highest \textbf{VERDICT} gains, respectively.

\noindent Figure~\ref{fig:dispersion_dist} shows the distribution of $\Delta^*$ across the four groups. Three observations follow.

\paragraph{High dispersion is enriched for incorrect chains.}
On 3DSRBench, 51.9\% of rejected candidates belong to chains producing incorrect answers, compared to 37.9\% among accepted candidates a 1.4$\times$ enrichment. On CV-Bench-3D, enrichment is 1.6$\times$ (27.1\% vs.\ 16.4\%). The higher baseline accuracy on CV-Bench-3D means absolute incorrect rates are lower, but relative enrichment is stronger, indicating that the dispersion filter is more selective.

\begin{wrapfigure}{r}{0.40\textwidth}
\centering
\includegraphics[width=\linewidth]{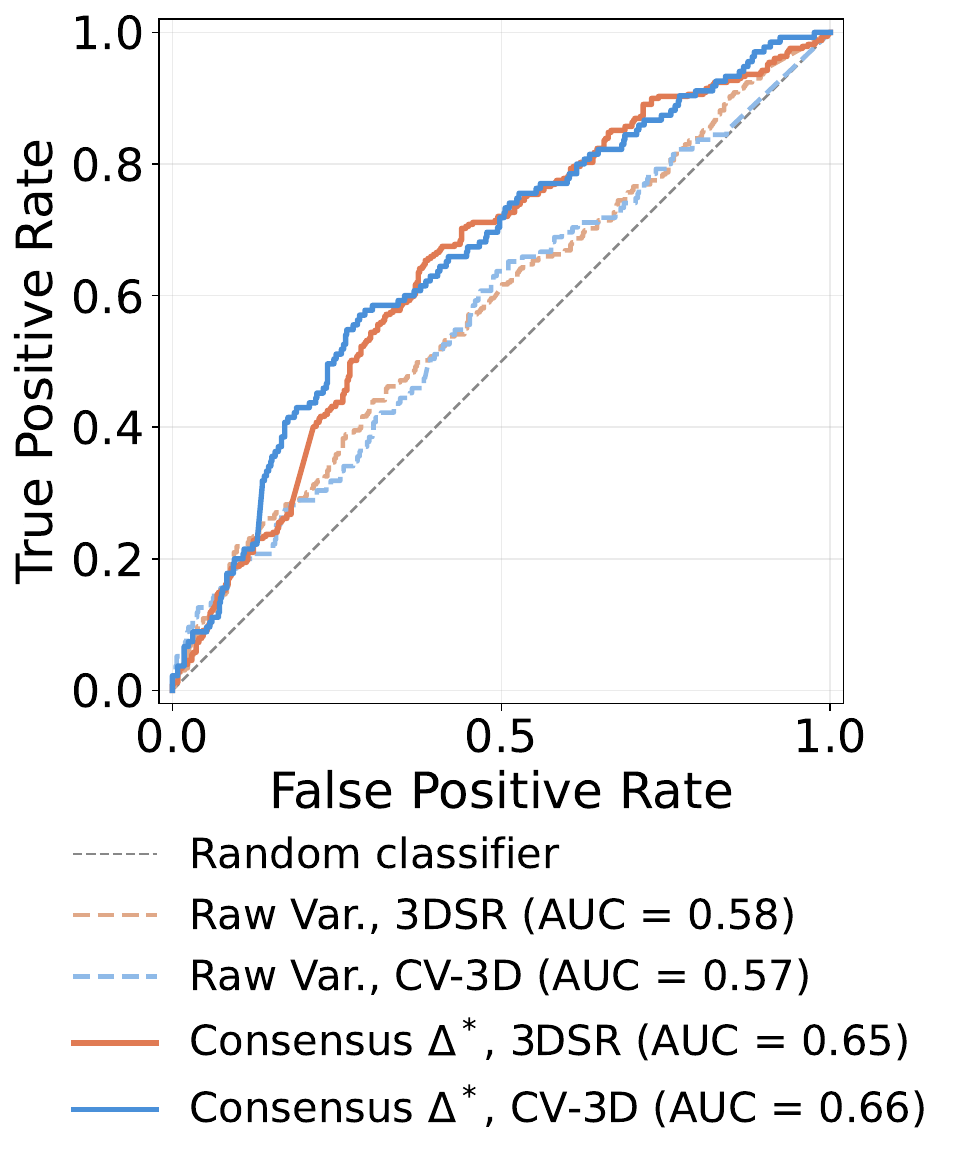}
\caption{\textbf{ROC curves} treating consensus dispersion $\Delta^*$ (solid) and raw score variance (dashed) as a binary classifier for chain incorrectness on 3DSRBench and CV-Bench-3D.}
\label{fig:roc_pr}
\end{wrapfigure}

\paragraph{Accepted candidates have substantially lower dispersion.}
Mean $\Delta^*$ among accepted candidates is 0.048 on 3DSRBench and 0.041 on CV-Bench-3D, compared to 0.175 and 0.160 for rejected candidates a 3.6$\times$ and 3.9$\times$ separation, respectively. This confirms that $\epsilon = 0.1$ sits in a natural gap between populations rather than imposing an arbitrary split.

\paragraph{The filter is diagnostic, not oracular.}
Rejected candidates include correct-chain members on both benchmarks: 48.1\% on 3DSRBench, 72.9\% on CV-Bench-3D. This overlap confirms the main paper's finding that consensus scores function optimally as ranking tools rather than binary classifiers. High dispersion signals unresolved cross-modal conflict a risk factor for error, not a guarantee. The fallback mechanism ($\bar{s}^* - \Delta^*$ ranking) recovers value from rejected candidates by selecting the one with the best balance of confidence and agreement.

\noindent The overall acceptance rates (77.9\% on 3DSRBench, 85.4\% on CV-Bench-3D) reflect the inherent cross-modal tension in 3D spatial reasoning, where visual and logical agents more frequently disagree, exactly where dispersion-based filtering provides the most value. The moderate enrichment (1.4- 1.6$\times$) is consistent with two properties of the framework: \textbf{VERDICT} operates at the step level, so a single high-dispersion step does not deterministically cause an incorrect answer since subsequent steps may recover; and the coupled scoring formulation already compresses raw disagreement through consensus adjustment, so residual dispersion represents conflict that persists even after accounting for consensus pressure; a more conservative signal than raw variance.

\noindent \textbf{Quantifying diagnostic value via ROC analysis.} To move beyond enrichment ratios and provide a concrete quantitative handle on how useful $\Delta^*$ is as a signal, we treat consensus dispersion as a continuous-valued binary classifier for chain incorrectness: a candidate is predicted ``incorrect'' if $\Delta^*$ exceeds a given threshold, and ``correct'' otherwise. To isolate the contribution of the coupled consensus formulation, we  also evaluate raw score variance used by the Variance baseline in Table~2 of the main paper as a competing classifier under the same protocol. Figure~\ref{fig:roc_pr} presents ROC curves on 3DSRBench and CV-Bench-3D for both signals. Three observations are consistent with the analysis above.

\noindent \emph{Dispersion carries genuine diagnostic signal.}
The ROC AUC is 0.65 on 3DSRBench and 0.66 on CV-Bench-3D, comfortably above 
the 0.50 random baseline, confirming that higher dispersion is systematically 
associated with incorrect chains, indicating that dispersion-based ranking concentrates incorrect chains above chance at every 
recall level.

\noindent \emph{Consensus dispersion strictly dominates raw variance.} Raw score variance achieves AUC of 0.58 on 3DSRBench and 0.57 on CV-Bench-3D:  above chance, but consistently below consensus dispersion by +0.07 and +0.09 
respectively.. The result directly visualises the non-separability established by Proposition~1: because variance treats all agent deviations symmetrically, it cannot distinguish a high-$\lambda$ visual agent dissenting (diagnostically informative on spatial tasks) from a low-$\lambda$ contextual agent dissenting (less informative). The coupled consensus formulation encodes this asymmetry through the stubbornness parameters, and the AUC gap quantifies how much diagnostic power that asymmetry contributes. 

\noindent \emph{The signal is moderate, not oracular.}
AUC values in the 0.65-0.66 range confirm the main text's characterization: 
$\Delta^*$ is a risk factor for error, not a deterministic indicator.This is expected for two reasons already noted: \textbf{VERDICT} operates at the step level, so a single high-dispersion step does not deterministically 
cause an incorrect answer; and the coupled scoring formulation already compresses raw disagreement, so residual dispersion represents conflict that persists after consensus adjustment.The moderate AUC also explains why the framework degrades gracefully rather 
than exhibiting sharp cliffs across the threshold sweep in Section~S4.2: a diagnostic-but-not-oracular signal benefits from soft 
integration (ranking and dual filtering) rather than hard binary classification.

%% file: suppl_sections/additional_results/confidence_dispersion.tex
\subsection{Confidence--Dispersion Landscape}
\label{sec:conf_disp_landscape}

Here we investigate the dual acceptance criterion by plotting mean consensus confidence $\bar{s}^*$ against consensus dispersion $\Delta^*$ for all candidate reasoning steps on 3DSRBench, color-coded by chain correctness.

\noindent For each candidate, we record $(\bar{s}^*,\, \Delta^*)$ from the coupled scoring system (Eq.~(3), main paper) and label it by whether the full chain produces the correct answer. The dual criterion partitions this plane into four quadrants: \emph{Accept} ($\bar{s}^* > \tau$, $\Delta^* < \epsilon$), \emph{Reject on dispersion} ($\bar{s}^* > \tau$, $\Delta^* \geq \epsilon$), \emph{Reject on confidence} ($\bar{s}^* \leq \tau$, $\Delta^* < \epsilon$), and \emph{Reject on both}. Figure~\ref{fig:conf_disp_scatter} reveals four features of the decision landscape.

\paragraph{The accept region is enriched for correct chains.}
Of 623 candidates in the accept quadrant ($\bar{s}^* > 0.6$, $\Delta^* < 0.1$), 62\% belong to correct chains, compared to 59\% overall. This region captures candidates with both high collective confidence and low residual disagreement, exactly the verification state the dual criterion targets.

\paragraph{The dispersion-reject quadrant isolates the key failure mode.}
109 candidates (13.6\%) pass the confidence threshold but fail the dispersion check. These are candidates that confidence-only methods would accept, but where agents have not genuinely converged. The correct-chain rate drops to 54\%, confirming that high dispersion at high confidence is a risk factor. The starred annotation marks the main paper's numerical example ($\hat{\mathbf{s}} = (0.9, 0.3, 0.9)$, yielding $\bar{s}^* = 0.71$, $\Delta^* = 0.13$): a candidate that looks promising by confidence alone but is correctly flagged by dispersion.

\paragraph{The confidence-reject and both-reject quadrants are sparse.}
Only 4.4\% and 4.1\% of candidates fall in these quadrants, respectively, representing cases where collective endorsement is genuinely low. The both-reject quadrant has the lowest correct-chain rate (27\%), indicating that candidates with both low confidence and high disagreement are rarely on viable reasoning paths.

\begin{wrapfigure}{r}{0.65\textwidth}
\centering
\includegraphics[width=\linewidth]{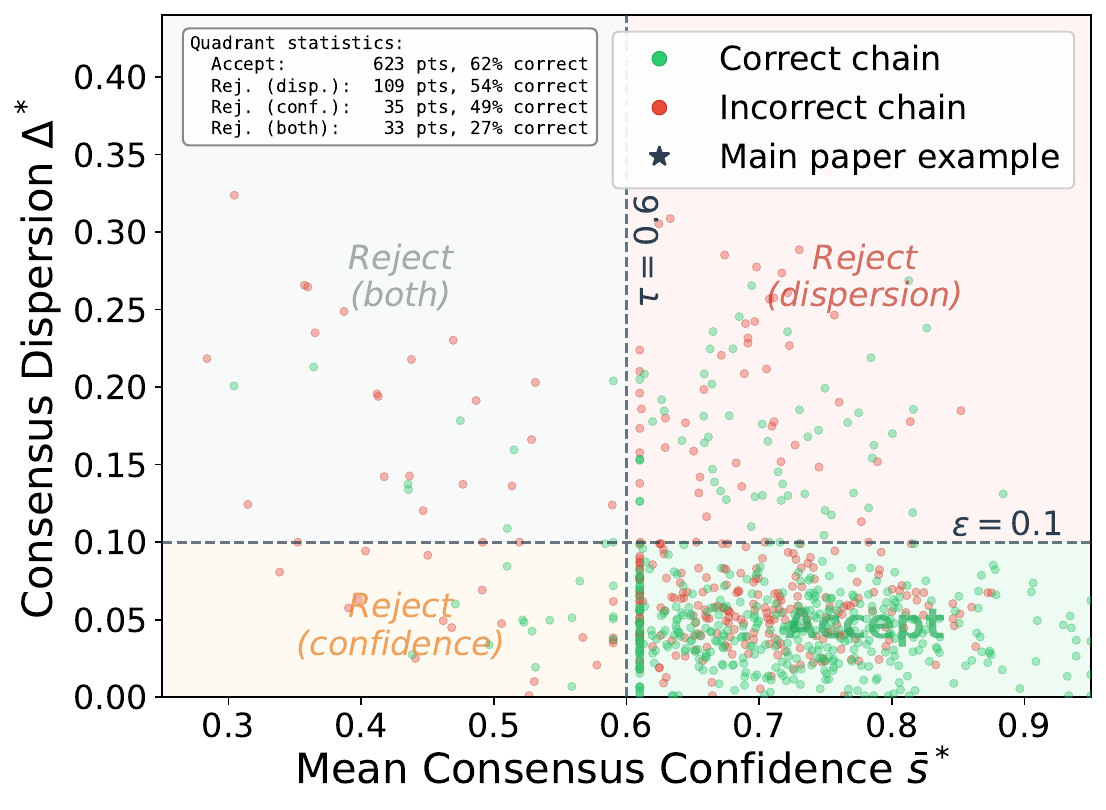}
\caption{Mean consensus confidence $\bar{s}^*$ vs.\ consensus dispersion $\Delta^*$ for candidate steps on 3DSRBench. Green: correct chains; red: incorrect chains. Dashed lines: $\tau = 0.6$, $\epsilon = 0.1$. Inset: per-quadrant counts and correct-chain rates.}
\label{fig:conf_disp_scatter}
\end{wrapfigure}

\paragraph{The separation is probabilistic, not deterministic.}
All quadrants contain both green and red points, visually confirming that consensus scores function optimally as ranking tools rather than binary classifiers. A candidate in the dispersion-reject quadrant is at elevated risk but not guaranteed to be wrong which is why the fallback mechanism ($\bar{s}^* - \Delta^*$ ranking) recovers value from rejected candidates when no alternative passes.

\noindent
The scatter plot provides spatial grounding for three main paper claims: the visual separation confirms that high $\Delta^*$ is diagnostic of reasoning instability (Proposition~1); the 77.9\% concentration in the accept region is consistent with the $\sim$15\% fallback rate; and the dispersion-reject quadrant is indistinguishable from accept under simple averaging since both have $\bar{s}^* > \tau$ which demonstrates the dual criterion's operational value, catching 109 candidates (46\% incorrect) that confidence-only methods would admit.

%% file: suppl_sections/additional_results/rejection_v_sampling.tex
\subsection{Rejection and Selection: Detailed Analysis}
\label{sec:supp_rejection_selection}

Here, we examine the underlying mechanism through
candidate acceptance statistics. Table~\ref{tab:supp_rejection_selection}
presents acceptance counts, fallback rates, and accuracy jointly.

\begin{table}[t]
\centering
\caption{Ablation analysis decomposing rejection (filtering) and selection
(ranking). \textbf{Acc.}: average candidates accepted per step (of~3).
\textbf{Rej.\%}: fraction of steps where all candidates are rejected,
triggering fallback to continuous ranking.}
\label{tab:supp_rejection_selection}
\resizebox{\textwidth}{!}{%
\begin{tabular}{l cc cc cc cc}
\toprule
& \multicolumn{2}{c}{\textbf{CV-Bench-2D}} & \multicolumn{2}{c}{\textbf{CV-Bench-3D}} & \multicolumn{2}{c}{\textbf{3DSRBench}} & \multicolumn{2}{c}{\textbf{AI2D}} \\
\cmidrule(lr){2-3} \cmidrule(lr){4-5} \cmidrule(lr){6-7} \cmidrule(lr){8-9}
Strategy & Acc./Rej.\% & Accuracy & Acc./Rej.\% & Accuracy & Acc./Rej.\% & Accuracy & Acc./Rej.\% & Accuracy \\
\midrule
\method     & 2.64~/~9.1\%  & 79.22 & 2.84~/~2.3\%  & 82.34 & 2.54~/~12.0\% & 59.02 & 2.50~/~13.4\% & 83.14 \\
No Rejection  & 3.00~/~0.0\%  & 78.14 & 3.00~/~0.0\%  & 82.08 & 3.00~/~0.0\%  & 60.31 & 3.00~/~0.0\%  & 82.17 \\
No Selection  & 2.64~/~9.1\%  & 77.05 & 2.84~/~2.3\%  & 80.42 & 2.54~/~12.0\% & 57.84 & 2.50~/~13.4\% & 82.55 \\
Raw Average   & 2.40~/~16.2\% & 76.27 & 2.70~/~5.4\%  & 79.75 & 2.31~/~17.2\% & 56.92 & 2.35~/~17.6\% & 81.83 \\
Random        & 3.00~/~0.0\%  & 75.38 & 3.00~/~0.0\%  & 77.81 & 3.00~/~0.0\%  & 56.41 & 3.00~/~0.0\%  & 81.64 \\
\bottomrule
\end{tabular}%
}
\end{table}

\paragraph{Rejection frequency scales with benchmark difficulty.}
Filtering activates non-uniformly: only 2.3\% of steps trigger full rejection
on CV-Bench-3D (strong base model, 76.39\%) versus 13.4\% on AI2D
(hallucination-prone reasoning). This directly explains why removing rejection
costs 0.97 points on AI2D but only 0.26 on CV-Bench-3D.

\paragraph{Filtering and ranking address complementary failure modes.}
\method and No Selection share identical acceptance statistics (e.g., 2.64
candidates, 9.1\% fallback on CV-Bench-2D), confirming that selection operates
strictly downstream of filtering. Their accuracy gap (79.22\% vs.\ 77.05\% on
CV-Bench-2D, 82.34\% vs.\ 80.42\% on CV-Bench-3D) isolates the contribution
of intelligent ranking by $\bar{s}^*$. Conversely, comparing \method to No
Rejection isolates filtering's contribution. That both ablations degrade
through different magnitudes on different benchmarks confirms that the two
mechanisms target distinct failure modes.

\paragraph{Consensus calibration accepts more and rejects better.}
\method and Raw Average share the same dual-criterion architecture; they
differ only in whether scores come from coupled consensus or simple averaging.
The consensus formulation accepts 0.14--0.24 more candidates per step while
requiring fallback 3--7 percentage points less often, yet achieves consistently
higher accuracy (e.g., 79.22\% vs.\ 76.27\% on CV-Bench-2D, 59.02\% vs.\
56.92\% on 3DSRBench). This dual advantage indicates that consensus scoring
distinguishes genuine cross-modal conflict from ordinary confidence
fluctuations more effectively than raw averaging.

\paragraph{The 3DSRBench exception.}
On 3DSRBench, No Rejection (60.31\%) slightly outperforms \method (59.02\%)
despite the latter's 12.0\% fallback rate. On spatially demanding tasks where
cross-modal tension is inherent, the dispersion filter removes candidates
carrying informative disagreement. No Rejection preserves these and ranks them
via $\bar{s}^* - \Delta^*$, treating disagreement as a ranking signal rather
than an exclusion criterion. This reinforces the conclusion that \method scores
are most powerful as ranking tools, and that $\tau = 0.6$ is chosen for
cross-benchmark robustness rather than per-task optimality.

%% file: suppl_sections/additional_results/fallback_analysis.tex
\subsection{Fallback Mechanism Analysis}
\label{sec:fallback_analysis}

The main paper notes (Section~6.3) that the fallback path ranking by $\bar{s}^* - \Delta^*$ when no candidate passes the dual criterion is triggered on approximately 15\% of steps. Here, we characterize when the fallback activates and how well it performs.

\noindent
We partition test questions into $\mathcal{Q}_{\text{fallback}}$ (fallback triggered on at least one step) and $\mathcal{Q}_{\text{normal}}$ (fallback never triggered). On $\mathcal{Q}_{\text{fallback}}$, we compare three conditions: (1) unverified base model accuracy, (2) full \textbf{VERDICT} with $\bar{s}^* - \Delta^*$ ranking on fallback steps, and (3) \textbf{VERDICT} with random selection on fallback steps. Comparing (2) and (3) isolates the fallback mechanism's contribution. Figure ~\ref{fig:fallback} reports per-benchmark statistics. Three findings emerge.


\begin{figure}[t]
\centering
\includegraphics[width=0.80\linewidth]{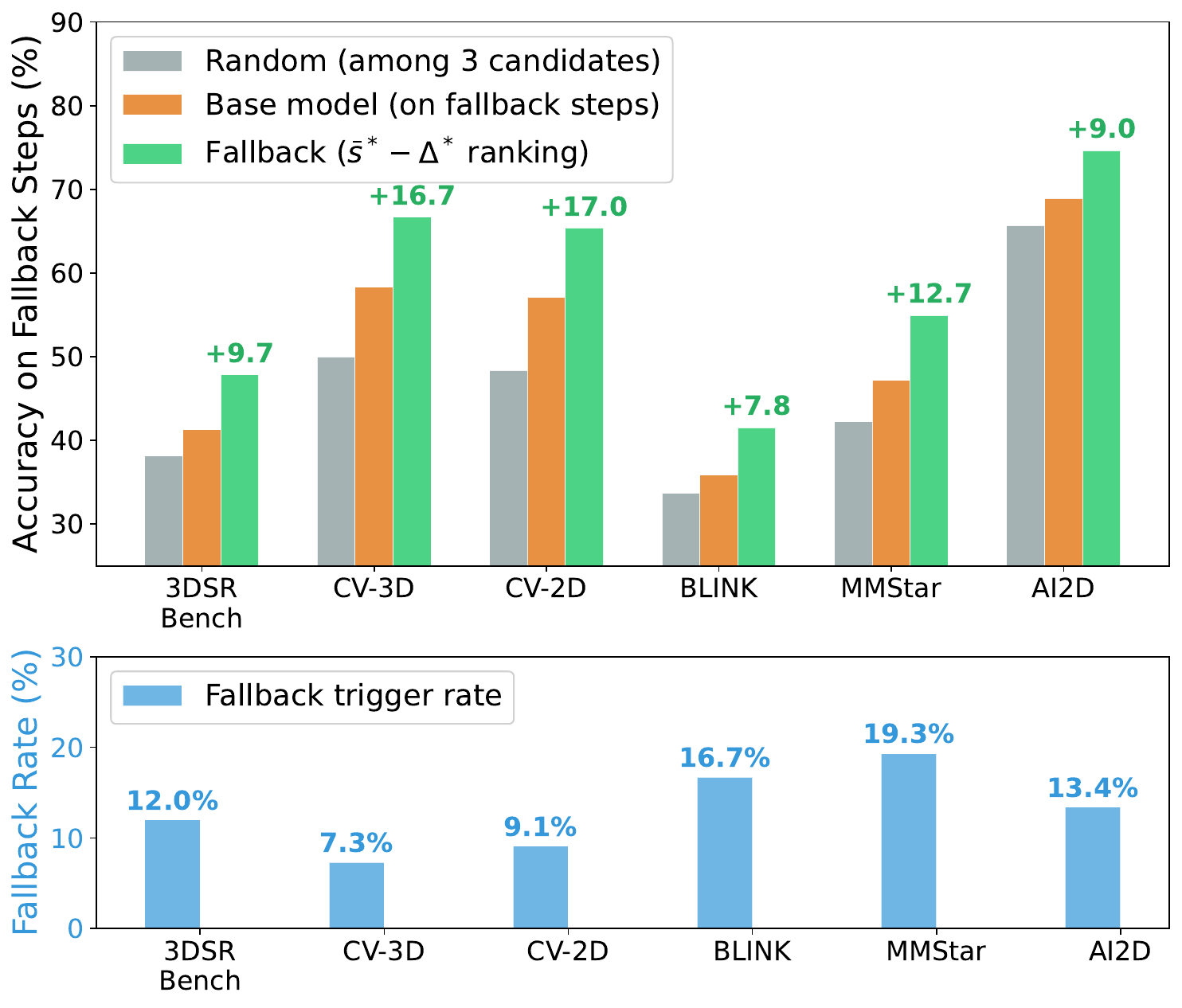}
\caption{\textbf{Top}: Final-answer accuracy on the question subset
$\mathcal{Q}_{\text{fallback}}$ under three conditions: random candidate selection on fallback steps (grey), unverified base model on the same questions (orange), and the $\bar{s}^* - \Delta^*$ fallback ranking (green). Annotations show the improvement of fallback ranking over random.
\textbf{Bottom}: Fallback trigger rate.}
\label{fig:fallback}
\end{figure}

\paragraph{Fallback questions are systematically harder.}
On every benchmark, the base model achieves substantially lower accuracy on $\mathcal{Q}_{\text{fallback}}$ than overall 
Fallback questions correspond to chains encountering at least one step where cross-modal tension is genuinely high, and no candidate achieves simultaneous high confidence and low dispersion. The dual criterion effectively identifies the hardest 10--20\% of questions without ground-truth labels.

\paragraph{The fallback ranking outperforms random selection.}
On every benchmark, $\bar{s}^* - \Delta^*$ ranking outperforms random selection by 7.8--17.0 points. Since all non-fallback steps use identical selection, this improvement is attributable entirely to the ranking signal. The composite score extracts useful information even from rejected candidates, preferring those with higher confidence and lower disagreement consistent with the main paper's finding that consensus scores function optimally as ranking tools.

\paragraph{Relationship to the No Rejection ablation.}
The fallback analysis explains the 3DSRBench anomaly in Section~\ref{sec:supp_rejection_selection}: No Rejection (60.31\%) slightly outperforms \textbf{VERDICT} (59.02\%). On the 12\% of fallback-triggered steps, \textbf{VERDICT} chooses from the rejected pool, achieving 47.83\% on $\mathcal{Q}_{\text{fallback}}$. No Rejection retains all candidates and ranks over the complete pool, which includes candidates the dispersion filter would reject, but that may be viable for spatially demanding tasks where cross-modal tension is informative. On the five benchmarks where fallback rates are lower, or the ranking is more effective, \textbf{VERDICT}'s dual criterion outperforms No Rejection.

%% file: suppl_sections/additional_results/cross_model_familiy.tex

\subsection{Generalization Across Model Families}
\label{sec:cross_model}

The main paper evaluates \textbf{VERDICT} with Qwen2.5-VL-7B as both the base reasoner and the verification backbone.  A natural
question is whether the gains depend on this specific model family or whether the consensus formulation transfers.  We test this by
applying \textbf{VERDICT} to three additional model families on 3DSRBench, using the same frozen Qwen2.5-VL-7B judges throughout. We deliberately keep the judges fixed rather than swapping in same-family judges for each base model: this is a stricter test of generalization, and it avoids the self-reinforcement bias that would arise if a model family scored its own reasoning style more favourably than an external evaluator would.

\begin{wrapfigure}{r}{0.55\textwidth}
\centering
\includegraphics[width=\linewidth]{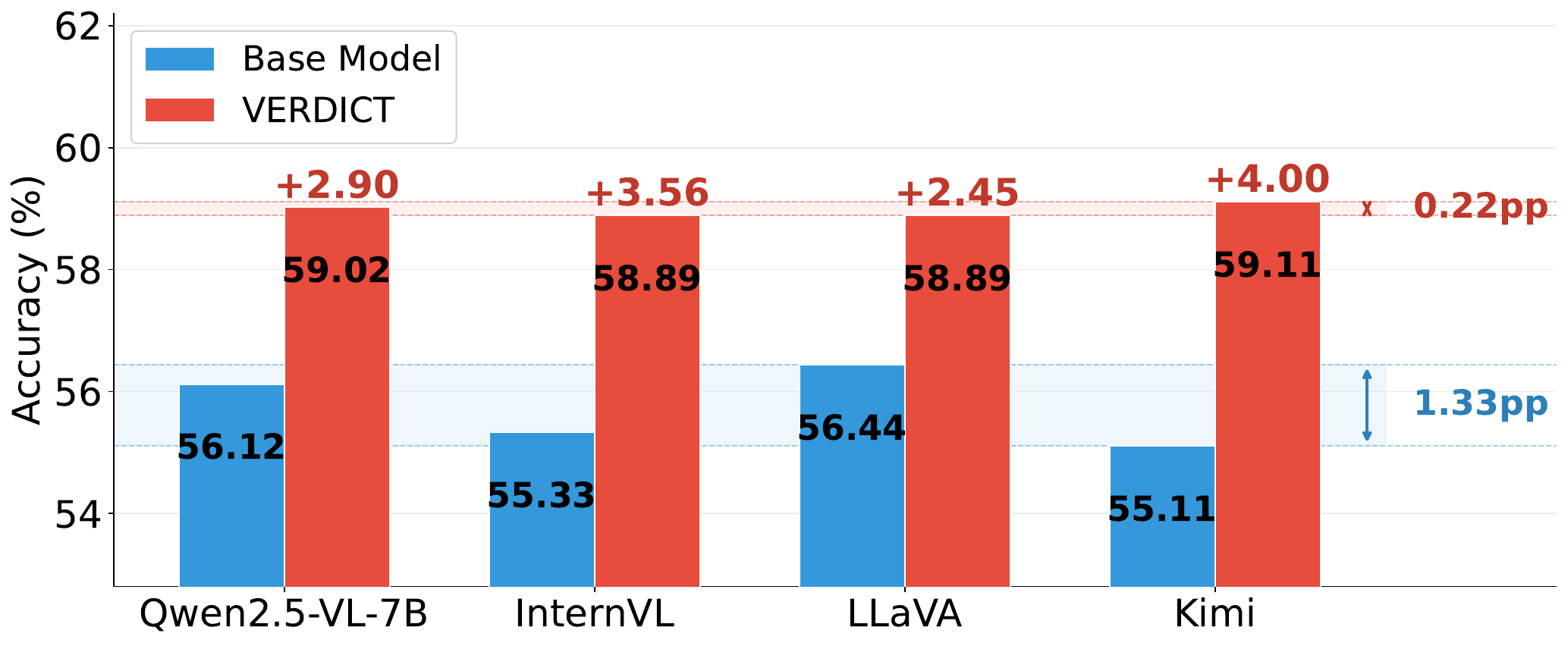}
\caption{Cross-model generalization on 3DSRBench. The base reasoner varies across four model families; the three verification judges remain frozen Qwen2.5-VL-7B instances with identical prompts and hyperparameters.}
\label{tab:cross_model}
\end{wrapfigure}


\noindent \textbf{VERDICT} improves every model family it is applied to.
Figure~\ref{tab:cross_model} reports gains of $+2.45$ to $+4.00$
percentage points, with a mean of $+3.23$ across all four families.
Three observations follow.

\paragraph{The gains transfer without adaptation.}
The verification judges are Qwen2.5-VL-7B instances with prompts
and hyperparameters fixed to the values reported in the main paper.
No component is retrained, re-prompted, or recalibrated for the
new base models.  The fact that the same frozen judges produce
consistent gains across four architecturally distinct model families
confirms the plug-in design principle stated in Section~3 of the
main paper: the consensus formulation operates on the structure of
cross-modal agreement, not on the idiosyncrasies of any particular
model's output distribution.

\paragraph{The consensus formulation compresses base model
variability.}
The four base models span a 1.33-point range in unverified accuracy
(55.11--56.44\%).  After verification, this range compresses to
0.22 points (58.89--59.11\%).  LLaVA and InternVL, starting from
different baselines (56.44\% and 55.33\%), converge to identical
verified accuracy (58.89\%).  Kimi, the weakest unverified model
(55.11\%), achieves the highest verified accuracy (59.11\%) and the
largest gain ($+4.00$).  This compression is consistent with the
judge scale analysis in Section~\ref{sec:judge_scale}: the
consensus formulation provides more value when the base signal is
noisier, because there is more cross-modal disagreement for the
dispersion filter to exploit.

\paragraph{Cross-model judges are viable.}
The experiment uses Qwen-family judges to verify non-Qwen base
models.  The modality-specific prompts (visual grounding, logical
consistency, contextual relevance) define evaluation criteria that
are model-agnostic: whether a reasoning step is visually grounded
does not depend on which model generated it.  The consistent gains
across families suggest that the scoring rubrics capture genuine
reasoning quality rather than model-specific stylistic preferences.
This has a practical implication: a single set of frozen judges can
serve as a universal verification layer across heterogeneous base
models, amortising the already-zero training cost further.

\paragraph{Variance compression as evidence for consensus robustness.}
Figure~\ref{tab:cross_model} reveals that prior to verification, the four base models span an accuracy range of 1.33\,pp (55.11--56.44\%), which compresses to just 0.22\,pp after \textbf{VERDICT} is applied (58.89--59.11\%). This compression is not an artefact of ceiling effects, as verified accuracies remain well below perfect. Rather, it reflects the consensus formulation's tendency to exploit cross-modal disagreement when the base signal is noisier, consistent with the widening-margin pattern observed in the judge scale analysis (Section \ref{sec:judge_scale}). We note that architecturally distinct models converge to near-identical verified accuracy, indicating that the choice of base model matters far less once consensus verification is in place. This finding corroborates \textbf{VERDICT}'s plug-in design principle, which by construction does not assume or rely on any particular base model architecture.

%% file: suppl_sections/compute_cost_v3.tex
%

\section{Detailed Computational Cost Analysis}
\label{sec:computational_cost}

\begin{table*}[t]
\centering

\begin{minipage}[t]{0.48\textwidth}
\centering
\caption{Total computational cost comparison.  Training costs are
approximate and drawn from the respective papers' reported compute
budgets.  Inference cost is for a single evaluation pass over all
six benchmarks.  \textbf{VERDICT}'s total cost is
13--50$\times$ lower than domain-specific alternatives.}
\label{tab:total_cost}
\small
\begin{adjustbox}{width=\linewidth}
\begin{tabular}{l r r r}
\toprule
Method & Training & Inference & Total \\
       & (GPU-h) & (GPU-h)  & (GPU-h) \\
\midrule
VisualPRM
  & $\sim$500--1000 & $\sim$7 & $\sim$507--1007 \\
DreamPRM
  & $\sim$200--500 & $\sim$7 & $\sim$207--507 \\
LLaVA-Critic
  & $\sim$100--300 & $\sim$7 & $\sim$107--307 \\
Sherlock
  & $\sim$150--400 & $\sim$7 & $\sim$157--407 \\
\midrule
\textbf{VERDICT (Ours)}
  & \textbf{0} & \textbf{$\sim$20} & \textbf{$\sim$20} \\
\bottomrule
\end{tabular}
\end{adjustbox}
\end{minipage}
\hfill
\begin{minipage}[t]{0.48\textwidth}
\centering
\caption{Accuracy gain per unit of additional compute across
training-free inference strategies.  All accuracy numbers use the
6-benchmark average.  \textbf{Efficiency} is defined as the accuracy
gain over the base model divided by the multiplicative overhead(higher is better). }
\label{tab:compute_efficiency}
\small
\begin{adjustbox}{width=\linewidth}
\begin{tabular}{l c c c c}
\toprule
Method & Cost & Avg.\ Acc.\ & Gain & Efficiency \\
       & ($\times$ base) & (\%) & (pp) & (pp\,/\,$\times$) \\
\midrule
Base Model & 1.0$\times$ & 66.31 & --- & --- \\
\midrule
\multicolumn{5}{l}{\emph{Training-free, response-level aggregation}} \\
\addlinespace
Self-Refinement         & 3.0$\times$ & 60.42 & $-$5.89 & $-$1.96 \\
Self-Selector           & 3.0$\times$ & 61.19 & $-$5.12 & $-$1.71 \\
TTAug                   & 3.0$\times$ & 62.78 & $-$3.53 & $-$1.18 \\
Self-Consistency        & 3.0$\times$ & 64.61 & $-$1.70 & $-$0.57 \\
Best-of-3, random pick  & 3.0$\times$ & 66.87 & +0.56  &    0.19 \\
Self-Synthesizer        & 3.0$\times$ & 67.11 & +0.80  &    0.26 \\
\midrule
\multicolumn{5}{l}{\emph{Training-free, step-level verification}} \\
\addlinespace
Best-of-3 + Variance    & 3.80$\times$ & 67.65 & +1.34 & 0.35 \\
Best-of-3 + Mean        & 3.80$\times$ & 68.70 & +2.39 & 0.63 \\
\textbf{VERDICT (Ours)} & \textbf{3.80$\times$} & \textbf{70.15} & \textbf{+3.84} & \textbf{1.01} \\
\bottomrule
\end{tabular}
\end{adjustbox}
\end{minipage}

\end{table*}

The main paper reports a 3.80$\times$ wall-clock overhead under sequential
execution.  A simple reading of this number invites the conclusion that
\textbf{VERDICT} is expensive.  Here we argue that the more informative
question is not \emph{how much additional compute does VERDICT use?}\ but
\emph{how much accuracy does each unit of additional compute buy?}  We
decompose the overhead, compare accuracy-per-compute against response-level
and trajectory-level alternatives, and show that \textbf{VERDICT}'s overhead
is either comparable to or lower than standard test-time scaling strategies
that achieve smaller gains or that actively \emph{degrade}
performance.\footnote{All response-level TTS baselines generate 3 candidate chains at 3.0$\times$ the base cost, matching \textbf{VERDICT}'s candidate generation budget.}

\subsection{Accuracy per Unit of Additional Compute}

The central metric for evaluating any test-time compute strategy is the
\emph{accuracy gain per unit of additional inference cost}.  A method that
spends 3$\times$ the base cost but \emph{loses} accuracy is strictly worse
than spending nothing.  Table~\ref{tab:compute_efficiency} compares
\textbf{VERDICT} against inference-time strategies that are available
without any training.

\noindent Three findings emerge from Table~\ref{tab:compute_efficiency}.

\paragraph{VERDICT achieves the highest compute efficiency among all
training-free methods.}
At 1.01 accuracy points per unit of overhead, \textbf{VERDICT} is
1.6$\times$ more compute-efficient than the next-best training-free strategy
(Mean scoring at 0.63~pp/$\times$) and 2.7$\times$ more efficient than
Self-Synthesizer, the strongest response-level method
(0.38~pp/$\times$).  The gain is not simply a consequence of spending more:
\textbf{VERDICT} and Mean scoring incur \emph{identical} costs
(3.80$\times$), yet \textbf{VERDICT} extracts 1.45 additional accuracy
points from the same compute budget, attributable entirely to the consensus
formulation.

\paragraph{Response-level aggregation is less efficient.}
Four of the five TTS baselines spend 3.0$\times$ the base cost and
\emph{lose} accuracy: Self-Refinement ($-5.89$~pp), Self-Selector
($-5.12$~pp), TTAug ($-3.53$~pp), and Self-Consistency ($-1.70$~pp). Their efficiencies are negative, meaning that the additional compute is worse than
wasted.  These are measured averages across the same six benchmarks under similar conditions.  The one positive method, Self-Synthesizer (+0.8~pp, 0.26~pp/$\times$), also achieves less than \method accuracy gain. The failure mode is structural: response-level methods cannot intercept step-level errors.  They can only hope that independently sampled trajectories avoid the same
failure.

\paragraph{At identical generation cost, the verification layer is the
difference between degradation and gain.}
All five TTS baselines and \textbf{VERDICT} generate three candidate chains
at 3.0$\times$ the base cost.  The TTS methods stop there.
\textbf{VERDICT} adds 0.80$\times$ for consensus verification.  The five
TTS methods average 63.29\% accuracy 3.02 points below the unverified
base model.  Candidate diversity at 3.0$\times$, without step-level
verification, is a net negative investment.  \textbf{VERDICT} reaches
70.15\%.  The 0.80$\times$ marginal overhead buys +3.04 points over Self-Synthesizer (67.11\%), the strongest response-level strategy.  This yields a marginal efficiency of $2.71 / 0.80 = 3.8$~pp per unit of
additional overhead the return attributable to the consensus formulation
alone, net of candidate diversity.


\subsection{Marginal Verification Cost}

The 3.80$\times$ headline number conflates two independent costs that serve
different purposes.

\paragraph{Candidate diversity: 3.0$\times$.}
Generating 3 candidates instead of 1 accounts for 79\% of the overhead.
This cost is not specific to \textbf{VERDICT}; it is the price of candidate
diversity that \emph{any} best-of-$N$ strategy pays.  Our Random baseline
(best-of-3 with random selection, no scoring) costs 3.0$\times$ and gains
only +0.56 points, confirming that diversity alone is insufficient.

\paragraph{Consensus verification: 0.80$\times$.}
The nine scoring calls (3 judges $\times$ 3 candidates) add only
0.80$\times$ because each scoring call produces 1--5 output tokens (a
single scalar confidence) versus 50--200 tokens per generation call.  The
consensus computation itself (a closed-form linear solve) adds negligible
cost ($< 0.001\times$).  This 0.80$\times$ is the \emph{marginal cost of
VERDICT's contribution}.  It buys +3.28 accuracy points over Random
(70.15\% vs.\ 66.87\%), yielding a marginal efficiency of 4.10~pp per unit
of overhead substantially higher than any alternative's \emph{total}
efficiency.  The marginal cost comparison is even more informative against
Self-Synthesizer, the strongest response-level strategy at 3.0$\times$.
Self-Synthesizer already performs non-trivial response-level
aggregation synthesizing partial correct responses yet the 0.80$\times$
verification layer recovers an additional 2.71 accuracy points (70.15\%
vs.\ 67.44\%).  That this marginal gain exceeds the \emph{total} gain of
every response-level method confirms that step-level consensus verification
is the efficiency-determining component of the pipeline, not candidate
diversity.

\subsection{Total Cost Comparison: Training Plus Inference}

The inference overhead must be understood in the context of \emph{total
cost}, which includes the training pipeline that domain-specific critics
require but \textbf{VERDICT} does not.

\noindent Domain-specific critics require $\approx$100--1000 GPU-hours of training
(Monte Carlo rollouts for process supervision, human annotation pipelines,
or reward model finetuning), after which their per-step inference cost is
lower.  However, the training cost must be amortised over all downstream
uses, and critically, \emph{it must be repeated for each new domain}.  A
single evaluation of \textbf{VERDICT} across all six benchmarks costs
$\sim$20 GPU-hours with zero training, making its total cost 5--50$\times$
lower than domain-specific alternatives
(Table~\ref{tab:total_cost}).  For practitioners evaluating across multiple
domains the setting \textbf{VERDICT} is designed for the break-even
point is never reached because training must be repeated per domain.

\subsection{Mitigation Strategies and Future Directions}

Several strategies can further reduce inference overhead without modifying
the framework; we leave their implementation to future work.

\paragraph{Adaptive verification.}
Not every reasoning step requires full three-judge verification.  Steps
where the first candidate achieves very high consensus confidence (e.g.,
$\bar{s}^* > 0.9$) could bypass generation of additional candidates,
reducing per-step cost from 3.80$\times$ to 1.80$\times$ or even
1.0$\times$.  Our fallback analysis
(Section~\ref{sec:fallback_analysis}) shows that $\sim$85\% of steps
produce at least one candidate that passes the dual criterion; on many of
these, the first candidate is likely sufficient.

\paragraph{Speculative scoring.}
Candidates can be generated sequentially, with scoring applied as each is
produced.  If the first candidate achieves a high consensus, generation of
the remaining two can be aborted, preserving the verification guarantee
while reducing average-case cost.

\paragraph{Cheaper judge models.}
The judge scale analysis (Section~\ref{sec:judge_scale}) shows that even 2B
judges maintain the widening margin over Mean aggregation.  With 2B judges,
the scoring overhead drops from 0.80$\times$ to approximately
0.25$\times$, bringing total sequential overhead to approximately
3.25$\times$.

\paragraph{Parallelisation.}
All nine scoring calls are independent.  With batched generation and three
GPUs (one per judge), the wall-clock overhead drops to approximately
1.59$\times$, where the residual cost is dominated by generating three
candidates in a single batched forward pass: a cost shared by any
best-of-$N$ strategy.

\paragraph{Contextualising the overhead.}
The 3.80$\times$ sequential overhead should be contextualised against the alternative of regenerating entire trajectories, which is the standard remedy when step-level verification is unavailable. Five-trajectory majority voting costs 3.0$\times$ with no guarantee of finding a correct trajectory and no step-level error interception. \method at 3.80$\times$ provides step-level disagreement-aware verification that neither trajectory-level methods nor trained reward models can offer at comparable cost.

%% file: suppl_sections/closed_form_derivation.tex

\section{Closed-Form Derivation}
\label{sec:closed_form_derivation}


The consensus scores computed at Line 17 of Algorithm \ref{alg:nash_verification} are the operational core of VERDICT: they determine which candidate reasoning steps are accepted and how accepted steps are ranked (Lines 18–25). The main paper states that the coupled scoring system in Eq. (3) admits a closed-form solution. Here we provide the full derivation: the explicit matrix construction, the invertibility argument, and a scalar reduction that yields each consensus score in closed form. Beyond completeness, this derivation has two practical implications: it confirms that the consensus computation adds negligible overhead to each reasoning step (a single scalar division followed by three multiplications, as shown in Eqs. \ref{eq:S_closedform}–\ref{eq:si_closedform}, and it establishes the stubbornness-weighted sum invariant (Section \ref{sec:swsi}) that underpins the mean-preservation property discussed in Section 4 of the main paper.

\subsection{Matrix Construction}

Specialising Eq.~(3) of the main paper to $m = 3$ agents
$\{V, L, C\}$ with stubbornness parameters
$\lambda_V, \lambda_L, \lambda_C > 0$ gives three coupled equations:
\begin{align}
  (1 + \lambda_V)\, s_V^* \;-\; \tfrac{1}{2}\, s_L^* \;-\; \tfrac{1}{2}\, s_C^*
    &= \lambda_V\, \hat{s}_V, \label{eq:row_V} \\[4pt]
  -\tfrac{1}{2}\, s_V^* \;+\; (1 + \lambda_L)\, s_L^* \;-\; \tfrac{1}{2}\, s_C^*
    &= \lambda_L\, \hat{s}_L, \label{eq:row_L} \\[4pt]
  -\tfrac{1}{2}\, s_V^* \;-\; \tfrac{1}{2}\, s_L^* \;+\; (1 + \lambda_C)\, s_C^*
    &= \lambda_C\, \hat{s}_C. \label{eq:row_C}
\end{align}
In matrix form, $\mathbf{A}\,\mathbf{s}^* = \mathbf{b}$, with
\begin{equation}\label{eq:matrix_form}
  \mathbf{A}
  =
  \begin{pmatrix}
    1{+}\lambda_V  & -\tfrac12 & -\tfrac12 \\[2pt]
    -\tfrac12 & 1{+}\lambda_L  & -\tfrac12 \\[2pt]
    -\tfrac12 & -\tfrac12 & 1{+}\lambda_C
  \end{pmatrix},
  \qquad
  \mathbf{b}
  =
  \begin{pmatrix}
    \lambda_V\,\hat{s}_V \\[2pt]
    \lambda_L\,\hat{s}_L \\[2pt]
    \lambda_C\,\hat{s}_C
  \end{pmatrix}.
\end{equation}

\input{suppl_sections/invertible_proof}
\subsection{Scalar Reduction and Explicit Solution}

Define $S = s_V^* + s_L^* + s_C^*$.
Adding $s_i^*/(m{-}1)$ to both sides of
each row of Eq.~(\ref{eq:matrix_form}) and collecting terms yields,
for each agent~$i$,
\begin{equation}\label{eq:scalar_reduction}
  s_i^*
  = \frac{S + 2\lambda_i\,\hat{s}_i}{3 + 2\lambda_i}\,.
\end{equation}

\noindent
\textit{Interpretation.}
Equation \ref{eq:scalar_reduction} says that each agent's consensus score is a weighted
blend of two quantities: the collective score sum~$S$ (representing
what the group believes) and the agent's own raw evidence
$\hat{s}_i$ (representing what it privately observed).
The stubbornness parameter~$\lambda_i$ controls the mixing ratio:
a more stubborn agent (higher~$\lambda_i$) places more weight on
its own evidence and less on the group, while a flexible agent
is pulled closer to the collective position.

Define the \emph{consensus weight} $w_i = 1/(3 + 2\lambda_i)$.
Summing Eq.~(\ref{eq:scalar_reduction}) over the three agents gives
a single scalar equation in~$S$:
\begin{equation}\label{eq:S_equation}
  S
  = S \!\sum_{i} w_i
    + 2 \!\sum_{i} \lambda_i\, w_i\, \hat{s}_i\,,
\end{equation}
which solves to
\begin{equation}\label{eq:S_closedform}
  \boxed{%
  S
  = \frac{2 \displaystyle\sum_{i} \lambda_i\, w_i\, \hat{s}_i}
         {1 - \displaystyle\sum_{i} w_i}
  }
  \qquad\text{where}\quad
  w_i = \frac{1}{3 + 2\lambda_i}\,.
\end{equation}

\noindent
\textit{Interpretation.}
Equation \ref{eq:S_closedform} expresses the total consensus score~$S$ as a closed-form function of the raw scores alone. Each agent's contribution to~$S$ is scaled by its consensus weight~$w_i$, which decreases with stubbornness: stubborn agents influence the collective sum less because they absorb less of the group signal and therefore redirect less information back into the pool.

The denominator $1 - \sum_i w_i$ is strictly positive because
$w_i < \tfrac{1}{3}$ for all $\lambda_i > 0$, so $\sum_i w_i < 1$.
Each consensus score then follows from back-substitution into
Eq.~(\ref{eq:scalar_reduction}):
\begin{equation}\label{eq:si_closedform}
  s_i^*
  = w_i\,S + 2\,\lambda_i\,w_i\,\hat{s}_i\,.
\end{equation}
\noindent
\textit{Interpretation.}
Equation \ref{eq:si_closedform} is the final answer: once $S$ is known from
Eq. \ref{eq:S_closedform}, each agent's consensus score follows from a single multiplication. The first term ($w_i S$) is the agent's share of the collective pool; identical in structure for all agents but scaled by their individual consensus weights. The second term ($2\lambda_i w_i \hat{s}_i$) is the agent's private anchor, pulling its consensus score toward its raw evidence in proportion to its stubbornness. Together, the two terms make explicit how each consensus score decomposes into a ``group pull'' and a ``private anchor,''
with~$\lambda_i$ governing the balance.

Eqs.~(\ref{eq:S_closedform})--(\ref{eq:si_closedform}) give the
complete closed-form solution.  No matrix inversion or iterative
procedure is required: one scalar division (Eq.~\ref{eq:S_closedform})
followed by three multiplications (Eq.~\ref{eq:si_closedform}).

\subsection{Stubbornness-Weighted Sum Invariant}
\label{sec:swsi}

Summing Eqs.~(\ref{eq:row_V})--(\ref{eq:row_C}):
\begin{equation}\label{eq:sum_rows}
  \sum_i (1 + \lambda_i)\,s_i^*
  \;-\; \tfrac{1}{m{-}1} \sum_i \sum_{j \neq i} s_j^*
  = \sum_i \lambda_i\,\hat{s}_i\,.
\end{equation}
The double sum $\sum_i \sum_{j \neq i} s_j^*$ counts each $s_j^*$
exactly $m{-}1$ times, so it equals $(m{-}1)\,S$.  Substituting and
simplifying:
\begin{equation}\label{eq:lambda_invariant}
  \boxed{%
  \sum_{i} \lambda_i\, s_i^*
  = \sum_{i} \lambda_i\, \hat{s}_i
  }
\end{equation}
The consensus computation preserves the \emph{stubbornness-weighted}
sum of scores.  When all $\lambda_i$ are equal, this reduces to
preservation of the unweighted mean $\bar{s}^* = \bar{\hat{s}}$.
For heterogeneous $\lambda_i$, the unweighted mean is approximately
but not exactly preserved; the shift is bounded by
\begin{equation}\label{eq:mean_shift_bound}
  |\bar{s}^* - \bar{\hat{s}}|
  \;\leq\;
  \frac{\max_i \lambda_i - \min_i \lambda_i}
       {\min_i \lambda_i}
  \;\cdot\;
  \frac{1}{m}\sum_i |s_i^* - \hat{s}_i|\,,
\end{equation}
which is small when the stubbornness parameters are of comparable magnitude.
With $(\lambda_V, \lambda_L, \lambda_C) = (1.5, 1.0, 0.8)$, the
multiplicative factor is $(1.5 - 0.8)/0.8 = 0.875$, and the mean
shift is empirically below $0.01$ for all score vectors encountered
across our six benchmarks.

\noindent \textbf{Why does this invariant matter in practice?} 
In plain terms, the invariant guarantees that the consensus computation reshuffles confidence among agents without inflating or deflating the overall confidence level, which is what allows the acceptance thresholds $\tau$ and $\epsilon$ to remain meaningful and transferable across tasks without recalibration. To see why the stubbornness-weighted sum invariant is not merely a mathematical convenience, consider what would happen if it did not hold. The confidence threshold $\bar{s}^* > \tau$ in the dual acceptance criterion (Eq.~(4), main paper) is set under the assumption that the consensus computation does not systematically shift the collective confidence level. If the consensus introduced an upward bias in $\bar{s}^*$, candidates with genuinely mediocre raw support could be pushed above $\tau$, and the confidence check would start admitting steps that no individual agent found compelling. Conversely, a downward bias would cause the system to reject steps that all three agents endorsed, forcing the fallback path ($\bar{s}^* - \Delta^*$ ranking) to activate far more often than the ${\sim}15\%$ reported in the main paper. Section~\ref{sec:fallback_analysis} shows that the fallback path, while effective, achieves lower accuracy than the primary dual-criterion path; an inflated fallback rate would erode the gains documented there. In either direction, the threshold $\tau = 0.6$ would need to be recalibrated jointly with $\epsilon$ for every stubbornness configuration, destroying the property that makes \method a plug-in verifier. The invariant guarantees that the consensus redistributes scores around a stable collective level, so $\tau$ and $\epsilon$ can be set once and transferred across benchmarks without concern that the stubbornness parameters have shifted the operating point of the acceptance criterion.

\subsection{Extension to General $m$}

The derivation generalises directly to $m$ agents.  The $m{\times}m$
coefficient matrix has diagonal entries $(1 + \lambda_i)$ and
off-diagonal entries $-1/(m{-}1)$, and remains strictly diagonally
dominant for all $\lambda_i > 0$.  The scalar reduction gives
\begin{equation}\label{eq:general_m}
  s_i^*
  = \frac{S + (m{-}1)\,\lambda_i\,\hat{s}_i}
         {m + (m{-}1)\,\lambda_i}\,,
  \qquad
  S
  = \frac{(m{-}1)\,\sum_i \lambda_i\,w_i\,\hat{s}_i}
         {1 - \sum_i w_i}\,,
  \qquad
  w_i
  = \frac{1}{m + (m{-}1)\,\lambda_i}\,,
\end{equation}
and the invariant $\sum_i \lambda_i\,s_i^*
= \sum_i \lambda_i\,\hat{s}_i$ holds for every $m \geq 2$.







%% file: suppl_sections/invertible_proof.tex
\subsection{Invertibility: Formal Proof}
\label{sec:invertibility}

We prove that the coefficient matrix~$\mathbf{A}$ in Eq.~\eqref{eq:matrix_form}
is positive definite for every $(\lambda_V,\lambda_L,\lambda_C)$ with
$\lambda_i>0$, through three independent arguments of increasing strength.
We then extend the result to general~$m$ and verify empirically across the
ablation range.

\paragraph{Argument 1: Strict Diagonal Dominance.}
Row~$i$ of~$\mathbf{A}$ has diagonal entry $a_{ii}=1+\lambda_i$ and
$m{-}1=2$ off-diagonal entries each of magnitude~$\tfrac{1}{2}$, so the
off-diagonal row sum is~$1$.  Since $1+\lambda_i>1$ for all $\lambda_i>0$,
every row satisfies
\begin{equation}\label{eq:sdd}
  |a_{ii}| = 1+\lambda_i \;>\; 1 = \sum_{j\neq i}|a_{ij}|\,,
\end{equation}
so $\mathbf{A}$ is strictly diagonally dominant.  By the Levy--Desplanques theorem, $\mathbf{A}$ is non-singular.

\paragraph{Argument 2: Gershgorin Disks and Positive Definiteness.}
The Gershgorin disk for row~$i$ is centred at $1 + \lambda_i$ with radius $r_i = 1$, so every eigenvalue~$\mu$ of $\mathbf{A}$ satisfies $|\mu - (1 + \lambda_i)| \leq 1$ for some~$i$.  This yields the
two-sided bound
\begin{equation}\label{eq:gershgorin}
  \lambda_{\min}(\mathbf{A})
  \;\geq\; \min_i\,(1 + \lambda_i) - 1
  = \min_i\,\lambda_i
  > 0\,.
\end{equation}
Since all eigenvalues are strictly positive, $\mathbf{A}$ is \emph{positive definite}.
The bound is tight in the symmetric case ($\lambda_i=\lambda$ for all~$i$,
verified numerically for $\lambda\in\{0.01,0.1,1.0,10.0\}$).
For our parameters $(\lambda_V,\lambda_L,\lambda_C)=(1.5,1.0,0.8)$, the
bound gives~$0.8$ while the true minimum eigenvalue is~$1.048$.

\paragraph{Argument 3: Explicit Determinant Decomposition.}
Expanding the $3\times 3$ determinant along the first row and substituting
$\alpha=\lambda_V$, $\beta=\lambda_L$, $\gamma=\lambda_C$ yields
\begin{equation}\label{eq:det_decomposed}
  \boxed{
    \det(\mathbf{A})
    = \underbrace{\tfrac{3}{4}(\alpha+\beta+\gamma)}_{\text{linear terms}}
    + \underbrace{\alpha\beta+\alpha\gamma+\beta\gamma}_{\text{pairwise products}}
    + \underbrace{\alpha\beta\gamma\vphantom{\tfrac{3}{4}}}_{\text{triple product}}
  }
\end{equation}
Every term is a product of positive quantities, so $\det(\mathbf{A})>0$ for
all $\alpha,\beta,\gamma>0$ no boundary analysis or limit argument is needed.
With $(\lambda_V,\lambda_L,\lambda_C)=(1.5,1.0,0.8)$:
$\det(\mathbf{A})=\tfrac{3}{4}(3.3)+(1.5{+}1.2{+}0.8)+1.2 = 2.475+3.5+1.2 = 7.175$,
matching direct computation.

\paragraph{Extension to General $m$.}
For $m$~agents, the coefficient matrix~$\mathbf{A}^{(m)}$ has diagonal
entries $1+\lambda_i$ and off-diagonal entries $-1/(m{-}1)$.  The off-diagonal
row sum is $(m{-}1)\cdot\tfrac{1}{m{-}1}=1$, independent of~$m$, so the
diagonal dominance condition $1+\lambda_i>1$ holds for all $m\geq 2$ and all
$\lambda_i>0$, and the Gershgorin bound
$\lambda_{\min}(\mathbf{A}^{(m)})\geq\min_i\lambda_i$ extends unchanged.

\paragraph{Empirical Verification Across Ablation Range.}
Table~\ref{tab:pd_verification} confirms that every stubbornness
configuration in the ablation range yields a well-conditioned, positive-definite
system.  The worst-case condition number
($\kappa(\mathbf{A})=\lambda_{\max}/\lambda_{\min}$) is $\kappa=6.0$ at
$\lambda_i=0.3$ for all agents, which remains modest; the minimum eigenvalue
($0.300$) matches the Gershgorin bound exactly in the symmetric case.
As any $\lambda_i\to 0^+$, the condition number grows as
$\kappa\sim 1.5/\min_i\lambda_i$; at $\lambda_i=10^{-3}$ (far below any
practical setting), $\kappa\approx 1500$: large but numerically tractable.
True degeneracy occurs only at $\lambda_i=0$, corresponding to an agent with
no private evidence a degenerate case excluded by design.

\begin{table}[t]
\centering
\caption{Positive definiteness verification across the stubbornness parameter
range explored in Section~6.2 of the main paper.  All $5^3=125$ triples from
$\{0.3,0.8,1.0,1.5,2.0\}^3$ are evaluated.}
\label{tab:pd_verification}
\begin{tabular}{lc}
\toprule
Quantity & Value \\
\midrule
Number of triples tested & 125 \\
Minimum determinant       & 0.972 \\
Minimum eigenvalue        & 0.300 \\
Maximum condition number  & 6.00 \\
All positive definite?    & Yes \\
\bottomrule
\end{tabular}
\end{table}

The consensus matrix is therefore positive definite, and the closed-form
solution exists and is unique, for any number of agents $m\geq 2$ with any
positive stubbornness parameters.  We now derive that solution explicitly.

%% file: suppl_sections/game_proof.tex
\section{Existence and Uniqueness of the Consensus Fixed Point}
\label{app:proof}

\begin{proposition}
The coupled scoring system is defined by the objective function
\[
u_i(s_i, s_{-i}) = -\left(s_i - \bar{s}_{-i}\right)^2 
- \lambda_i \left(s_i - \hat{s}_i\right)^2
\]
admits a unique fixed point.
\end{proposition}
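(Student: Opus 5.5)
Our plan is to characterize fixed points through first-order conditions and reduce existence and uniqueness to the invertibility of the matrix $\mathbf{A}$ already established in Section~\ref{sec:invertibility}.

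\textbf{Step 1: fixed points are exactly solutions of a linear system.} For fixed $s_{-i}$, the objective $u_i(\cdot,s_{-i})$ is a strictly concave quadratic in $s_i$, since $\partial^2 u_i/\partial s_i^2=-2(1+\lambda_i)<0$. Its unique maximizer over $\mathbb{R}$ is the best response
\[
\mathrm{BR}_i(s_{-i})=\frac{\bar{s}_{-i}+\lambda_i\hat{s}_i}{1+\lambda_i}.
\]
A profile $\mathbf{s}^*$ is a fixed point (Nash equilibrium) if and only if $s_i^*=\mathrm{BR}_i(s^*_{-i})$ for every $i$. This is Eq.~\eqref{eq:equilibrium}, or equivalently the linear system \eqref{eq:linear_system}, $\mathbf{A}\mathbf{s}^*=\mathbf{b}$.

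\textbf{Step 2: unique solvability.} By the strict diagonal dominance argument of Section~\ref{sec:invertibility}, $\mathbf{A}$ is nonsingular for every $m\ge 2$ and every $\lambda_i>0$. Hence $\mathbf{s}^*=\mathbf{A}^{-1}\mathbf{b}$ exists and is unique. The explicit formulas \eqref{eq:S_closedform}--\eqref{eq:si_closedform} give it in closed form.

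\textbf{Step 3: the constraint $s_i\in[0,1]$.} This is the main obstacle, because the paper's strategy set is $[0,1]$, not $\mathbb{R}$. We show the unconstrained solution already lies in $[0,1]$ whenever $\hat{\mathbf{s}}\in[0,1]^m$, so the box constraint never binds and the interior first-order conditions characterize the equilibrium.

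To see this, rewrite the system as $\mathbf{s}=T(\mathbf{s})$, where
\[
T_i(\mathbf{s})=\frac{1}{1+\lambda_i}\,\bar{s}_{-i}+\frac{\lambda_i}{1+\lambda_i}\,\hat{s}_i.
\]
Each $T_i$ is a convex combination of points of $[0,1]$, so $T$ maps $[0,1]^m$ into itself. Moreover, $T$ is a sup-norm contraction with modulus $\max_i 1/(1+\lambda_i)<1$. By Banach's fixed point theorem, $T$ has a unique fixed point in $[0,1]^m$. By Step 2, this fixed point coincides with $\mathbf{A}^{-1}\mathbf{b}$.

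Conversely, any constrained equilibrium is also an unconstrained one. Because $\mathrm{BR}_i$ lands in $[0,1]$ whenever $s_{-i}\in[0,1]^{m-1}$, the projection onto $[0,1]$ is never active, so every constrained equilibrium satisfies the interior first-order condition. Uniqueness on the box therefore follows.

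\textbf{Step 4: link to Rosen's framework.} We cite Rosen's diagonal strict concavity as an alternative route. The pseudo-gradient Jacobian is $-2\mathbf{A}$, and its symmetric part is negative definite by the Gershgorin bound. The contraction argument in Step 3, however, is self-contained and also yields convergence of iterated best responses.
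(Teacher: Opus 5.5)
Your proof is correct, but it takes a different route from the paper. The paper simply invokes Rosen. It checks that $[0,1]$ is compact and convex, that $u_i$ is continuous, and that $\partial^2 u_i/\partial s_i^2=-2(1+\lambda_i)<0$, and then reads off uniqueness.

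You instead characterize equilibria through best responses as the solutions of $\mathbf{A}\mathbf{s}^*=\mathbf{b}$, and get unique solvability from strict diagonal dominance. You then handle the box constraint explicitly: the best-response map $T$ is a self-map and a sup-norm contraction of $[0,1]^m$, so the constraint never binds and every constrained equilibrium satisfies the interior first-order conditions. This route is self-contained and constructive. It identifies the equilibrium with the closed form $\mathbf{A}^{-1}\mathbf{b}$ that the implementation actually computes. It shows this point lies in $[0,1]^m$, so the clipping step is never active. It also gives convergence of iterated best responses.

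More importantly, your argument is what actually secures uniqueness. The three conditions the paper verifies are the hypotheses of Rosen's existence theorem. His uniqueness result needs diagonal strict concavity, a joint condition that concavity in each player's own strategy does not imply. To see this, set all $\lambda_i=0$: the paper's three conditions still hold, since the second derivative is $-2<0$, yet every constant profile $s_1=\dots=s_m=c$ is an equilibrium. Your Step 4 supplies the missing check. The pseudo-gradient Jacobian is $-2\mathbf{A}$, which is symmetric and negative definite because $\lambda_{\min}(\mathbf{A})\ge\min_i\lambda_i>0$. This is exactly where the assumption $\lambda_i>0$ is used.

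Once repaired this way, the paper's Rosen-based route is shorter and extends to non-quadratic concave payoffs that have no closed form. Yours is more elementary and exploits the specific linear structure of the system.
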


\begin{proof}
The result follows directly from Rosen's uniqueness result~\cite{rosen1965existence} for concave interaction systems. We verify the required conditions below.

\textbf{(1) Compact and convex score space.}
Each agent's score space \( s_i \in [0,1] \) is compact and convex.

\textbf{(2) Continuity.}
The objective function \( u_i \) is quadratic and therefore continuous in all
arguments.

\textbf{(3) Strict concavity.}
Taking derivatives with respect to \( s_i \), we obtain
\begin{align}
\frac{\partial u_i}{\partial s_i}
&= -2\left(s_i - \bar{s}_{-i}\right)
   - 2\lambda_i \left(s_i - \hat{s}_i\right), \\
\frac{\partial^2 u_i}{\partial s_i^2}
&= -2 - 2\lambda_i
 = -2(1 + \lambda_i) < 0
 \quad \forall\, \lambda_i > 0.
\end{align}

Thus, \( u_i \) is strictly concave in each agent's own score. By Rosen's uniqueness result, the coupled system admits a unique fixed point.
\end{proof}

%% file: suppl_sections/prop_proof_v2.tex
\section{Proposition 1: Detailed Proof and Worked Examples}
\label{sec:prop1_walkthrough}

Proposition~1 in the main paper establishes that consensus dispersion
$\Delta^*$ captures the cross-agent interaction structure that is not separable per-agent measure can replicate.  We provide the full proof below,
followed by a step-by-step numerical walkthrough of the two scores
vectors used in the proof, so that every intermediate quantity can be
verified by hand.

\subsection{Formal Statement and Proof}

\begin{proposition}[Consensus Dispersion Is Not Recoverable from
Weighted Averages]
For any fixed weight vector~$\mathbf{w}$, there exist score vectors
$\hat{\mathbf{s}} \neq \hat{\mathbf{s}}'$ such that
$\mathbf{w}^\top \hat{\mathbf{s}} = \mathbf{w}^\top \hat{\mathbf{s}}'$
yet $\Delta^*(\hat{\mathbf{s}}) \neq \Delta^*(\hat{\mathbf{s}}')$.
As for any agent~$i$, the consensus residual $|s_i^* - \bar{s}^*|$
depends on the full raw score vector
$\hat{\mathbf{s}} = (\hat{s}_1, \ldots, \hat{s}_m)$, not solely
on~$\hat{s}_i$.  Consequently,
$\Delta^* = \frac{1}{m}\sum_i |s_i^* - \bar{s}^*|$ cannot be
decomposed as a separable function $\sum_i f_i(\hat{s}_i)$ for any
choice of per-agent functions~$\{f_i\}$.
\end{proposition}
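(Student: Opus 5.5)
The approach is to exploit the fact, established in the closed-form derivation above, that the consensus map is linear. By the invertibility argument, $\mathbf{A}$ is positive definite, so $\mathbf{s}^* = M\hat{\mathbf{s}}$ with $M = \mathbf{A}^{-1}\Lambda$ and $\Lambda = \mathrm{diag}(\lambda_i)$. Let $P = I - \frac{1}{m}\mathbf{1}\mathbf{1}^\top$. The residual vector is then $\mathbf{r}(\hat{\mathbf{s}}) = PM\hat{\mathbf{s}}$, and $\Delta^*(\hat{\mathbf{s}}) = \frac{1}{m}\|PM\hat{\mathbf{s}}\|_1$.

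Two structural facts drive everything.
\begin{itemize}
\item A constant vector is a fixed point: $(1+\lambda_i)c - c = \lambda_i c$. Hence $M\mathbf{1} = \mathbf{1}$, and $\Delta^*(c\mathbf{1}) = 0$.
\item Since $M$ is invertible and $P$ has kernel $\mathrm{span}(\mathbf{1})$, the kernel of $PM$ is exactly $\mathrm{span}(\mathbf{1})$. So $\Delta^*(\hat{\mathbf{s}}) = 0$ if and only if $\hat{\mathbf{s}}$ is constant.
\end{itemize}

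\textbf{First claim.} Fix $\mathbf{w}$. If $\mathbf{w} = 0$, compare $\tfrac12\mathbf{1}$ with any non-constant vector. Otherwise $\ker \mathbf{w}^\top$ has dimension $m-1$. When $m \ge 3$, this kernel contains some $\mathbf{v}$ not proportional to $\mathbf{1}$. Take $\hat{\mathbf{s}} = \tfrac12\mathbf{1}$ and $\hat{\mathbf{s}}' = \tfrac12\mathbf{1} + t\mathbf{v}$, with $t$ small enough to stay in $[0,1]^m$. These have equal weighted sums, yet $\Delta^*(\hat{\mathbf{s}}) = 0 < \Delta^*(\hat{\mathbf{s}}') = \frac{t}{m}\|PM\mathbf{v}\|_1$.

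The step I expect to be the real obstacle is $m = 2$ with $\mathbf{w} \propto (1,-1)$. Here $\ker\mathbf{w}^\top = \mathrm{span}(\mathbf{1}) = \ker PM$, so $\Delta^*$ is a function of $\hat{s}_1 - \hat{s}_2$ alone. Any two vectors with equal $\mathbf{w}^\top\hat{\mathbf{s}}$ then have equal $\Delta^*$, and the first sentence of the proposition fails. I would therefore state this exception explicitly and prove the first claim for $m \ge 3$ (and for all $\mathbf{w} \not\propto (1,-1)$ when $m = 2$). The remaining claims hold for all $m \ge 2$.

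\textbf{Second claim.} I will use the scalar reduction of Eq.~\eqref{eq:general_m}: $s_i^* = w_i S + (m-1)\lambda_i w_i \hat{s}_i$. Here $S$ depends on $\hat{s}_j$ with coefficient $(m-1)\lambda_j w_j/(1 - \sum_k w_k) > 0$. Hence $M_{ij} > 0$ for $i \neq j$, so $s_i^*$ depends on $\hat{s}_j$. For the residual I would show that the $(i,j)$ entry of $PM$ is nonzero for some $j \neq i$. If a whole off-diagonal row of $PM$ vanished, then $r_i$ would depend only on $\hat{s}_i$. Combined with $PM\mathbf{1} = 0$, this forces row $i$ of $PM$ to be zero, contradicting $\mathrm{rank}(PM) = m-1$. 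This step is immediate for $m = 2$; for general $m$ it is where I would fall back to a direct check with the explicit $w_i$ if needed.

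\textbf{Third claim.} The plan is a mixed-difference test. If $\Delta^* = \sum_i f_i(\hat{s}_i)$, then for any pair $i \neq j$ and step $t$,
\[
\Delta^*(\mathbf{x}+te_i+te_j) - \Delta^*(\mathbf{x}+te_i) - \Delta^*(\mathbf{x}+te_j) + \Delta^*(\mathbf{x}) = 0 .
\]
Take $\mathbf{x} = \tfrac12\mathbf{1}$. The left side becomes $\frac{t}{m}\bigl(\|\mathbf{a}+\mathbf{b}\|_1 - \|\mathbf{a}\|_1 - \|\mathbf{b}\|_1\bigr)$, where $\mathbf{a} = PMe_i$ and $\mathbf{b} = PMe_j$.

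By the triangle inequality this quantity is non-positive. It equals zero only if $\mathbf{a}$ and $\mathbf{b}$ agree in sign coordinatewise, so I need to show that they disagree somewhere. I expect $a_i > 0$ and $b_i < 0$: perturbing $\hat{s}_i$ should raise $s_i^*$ above the mean, while perturbing $\hat{s}_j$ should pull $s_i^*$ below it. This follows if $M_{ii} > \frac{1}{m}\sum_k M_{ki}$ and $M_{ij} < \frac{1}{m}\sum_k M_{kj}$. Both should be verifiable from the explicit formulas for $M$.

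If that general sign verification gets messy, the fallback is to use the paper's concrete $\lambda = (1.5, 1.0, 0.8)$ instance: compute $PMe_1$ and $PMe_2$ numerically and exhibit a strict inequality.
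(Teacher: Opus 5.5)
Your proof is correct, and it takes a different and more rigorous route than the paper's. The paper works from the scalar reduction. It computes $\partial s_i^*/\partial \hat{s}_j = 2\lambda_j w_i w_j/(1-\sum_k w_k) > 0$ for $i \neq j$ and concludes that each residual depends on the whole raw vector, so $\Delta^*$ is non-separable. It then exhibits $(0.9,0.2,0.9)$ versus $(0.7,0.6,0.7)$, which share a raw mean but give $\Delta^* \approx 0.13$ versus $0.02$, hence opposite decisions at $\epsilon = 0.1$. That gives a concrete picture tied to the acceptance rule, but each step stops short of the claim:
\begin{itemize}
\item The example only covers $\mathbf{w} \propto \mathbf{1}$, not an arbitrary weight vector.
\item The derivative is of $s_i^*$ rather than of the residual $s_i^* - \bar{s}^*$, whose mean term also moves with $\hat{s}_j$.
\item Neither per-residual dependence nor a pair of vectors differing in every coordinate excludes $\Delta^* = \sum_i f_i(\hat{s}_i)$. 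The separable function $\sum_i \hat{s}_i^2$ already distinguishes that pair.
\end{itemize}
Your operator view $\Delta^* = \frac{1}{m}\|PM\hat{\mathbf{s}}\|_1$ with $\ker PM = \mathrm{span}(\mathbf{1})$ handles every $\mathbf{w}$ at once. It also correctly isolates the $m=2$, $\mathbf{w} \propto (1,-1)$ exception, which arises only for signed weights and so does not affect genuine weighted averages. The mixed second difference combined with the strict $\ell_1$ triangle inequality is an actual proof of non-separability for arbitrary $\{f_i\}$.

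The sign check you left open closes in one line for every $m \geq 2$ and all $\lambda_i > 0$, so the numerical fallback is unnecessary. Let $w_i = 1/(m+(m-1)\lambda_i)$ denote the consensus weights. Since $\sum_k s_k^* = S$, column $j$ of $M$ sums to $c_j := \partial S/\partial \hat{s}_j = (m-1)\lambda_j w_j/(1-\sum_k w_k) > 0$, while $M_{ij} = w_i c_j$ for $i \neq j$. Hence:
\begin{itemize}
\item $(PM)_{ij} = c_j(w_i - \tfrac{1}{m}) < 0$ for $i \neq j$, because $w_i < \tfrac{1}{m}$;
\item $(PM)_{ii} = -\sum_{j \neq i} (PM)_{ij} > 0$, by $PM\mathbf{1} = 0$.
\end{itemize}
These are exactly your predicted signs $a_i > 0 > b_i$. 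The same computation gives the second claim in its strong form: each residual depends on \emph{every} $\hat{s}_j$. That makes your rank argument superfluous. As written, that argument also needs the extra fact $\mathbf{1}^\top PM = 0$. A single zero row is by itself compatible with rank $m-1$, but not once the rows are known to sum to zero.
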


\begin{proof}
From the closed-form solution (Section~\ref{sec:closed_form_derivation},
Eq.~\ref{eq:scalar_reduction}), each consensus score satisfies
\begin{equation}\label{eq:si_from_S}
  s_i^* = \frac{S + 2\lambda_i\,\hat{s}_i}{3 + 2\lambda_i}\,,
  \qquad
  S = \sum_j s_j^*\,.
\end{equation}
Since $S$ depends on \emph{all} raw scores through the scalar
equation~(\ref{eq:S_closedform}), the partial derivative
$\partial s_i^*/\partial \hat{s}_j$ is generically nonzero for
$i \neq j$ whenever $\lambda_j$ is finite.  Concretely, differentiating
Eq.~(\ref{eq:S_closedform}) gives
\begin{equation}\label{eq:dS_dsj}
  \frac{\partial S}{\partial \hat{s}_j}
  = \frac{2\,\lambda_j\,w_j}{1 - \sum_k w_k}
  > 0
  \qquad \forall\; \lambda_j > 0\,,
\end{equation}
where $w_j = 1/(3 + 2\lambda_j)$.  Substituting into
Eq.~(\ref{eq:si_from_S}):
\begin{equation}\label{eq:cross_derivative}
  \frac{\partial s_i^*}{\partial \hat{s}_j}
  = \frac{w_i}{1}\cdot\frac{\partial S}{\partial \hat{s}_j}
  = \frac{2\,\lambda_j\,w_i\,w_j}{1 - \sum_k w_k}
  > 0
  \qquad \text{for } i \neq j.
\end{equation}
Agent~$i$'s consensus position shifts when agent~$j$'s raw evidence
changes. The residual $|s_i^* - \bar{s}^*|$ therefore depends on the
full vector~$\hat{\mathbf{s}}$, not on~$\hat{s}_i$ alone, and
$\Delta^*$ is not separable.

\noindent
To make this concrete, we exhibit two score vectors with identical means
that produce opposite acceptance decisions.  Take
$\hat{\mathbf{s}} = (0.9, 0.2, 0.9)$ and
$\hat{\mathbf{s}}' = (0.7, 0.6, 0.7)$, both with raw mean~$\frac{2}{3}$.
The detailed computation below yields
$\Delta^*(\hat{\mathbf{s}}) \approx 0.13 > \epsilon$ and
$\Delta^*(\hat{\mathbf{s}}') \approx 0.02 < \epsilon$: opposite
acceptance decisions under the dual criterion in Eq.~(4) of the main
paper, despite identical means. (the number .21 in the main paper, Sec 4, was a typo).

\noindent
No separable measure $\sum_i f_i(\hat{s}_i)$ can replicate this,
since it would need $f_2(0.2)$ to produce high dispersion (in
$\hat{\mathbf{s}}$) and $f_2(0.6)$ to produce low dispersion (in
$\hat{\mathbf{s}}'$), but the distinction arises precisely because
the consensus solution couples agent~2's residual to agents~1 and~3's raw scores via Eq.~(\ref{eq:cross_derivative}), not because $f_2$ is
evaluated at different inputs.
\end{proof}

\subsection{Worked Example 1: 
\texorpdfstring{$\hat{\mathbf{s}} = (0.9, 0.2, 0.9)$}{s = (0.9,0.2,0.9)} -- Cross-Modal Conflict}

\medskip
\noindent
Table~\ref{tab:prop1_summary}
provides a side-by-side comparison of the two score vectors used in the proof.
Both share the same raw mean $\bar{\hat{s}} = \tfrac{2}{3}$, yet the consensus formulation produces opposite acceptance decisions
which is the central claim of Proposition~1.
The detailed step-by-step derivations follow.
 
\begin{table}[h]
\centering
\caption{Side-by-side comparison of the two Proposition~1 examples.
Both share raw mean~$\tfrac{2}{3}$, yet produce opposite acceptance
decisions under Eq.~(4) of the main paper.}
\label{tab:prop1_summary}
\small
\begin{tabular}{lcc}
\toprule
 & $\hat{\mathbf{s}} = (0.9,\, 0.2,\, 0.9)$ & $\hat{\mathbf{s}}' = (0.7,\, 0.6,\, 0.7)$ \\
\midrule
Raw mean $\bar{\hat{s}}$ & $0.667$ & $0.667$ \\
$\mathbf{s}^*$ & $(0.788,\, 0.485,\, 0.754)$ & $(0.684,\, 0.641,\, 0.679)$ \\
$\bar{s}^*$ & $0.676$ & $0.668$ \\
$\Delta^*$ & $0.127$ & $0.018$ \\
Max agent shift $\max_i |s_i^* - \hat{s}_i|$ & $0.29$ (L) & $0.04$ (L) \\
$\bar{s}^* > \tau$? & \checkmark & \checkmark \\
$\Delta^* < \epsilon$? & \ding{55} & \checkmark \\
\midrule
\textbf{Decision} & \textbf{REJECT} & \textbf{ACCEPT} \\
\bottomrule
\end{tabular}
\end{table}

\noindent
\textbf{Parameters:}\;
$(\lambda_V, \lambda_L, \lambda_C) = (1.5, 1.0, 0.8)$,\;
$\tau = 0.6$,\; $\epsilon = 0.1$.

\medskip
\noindent
\textbf{Step~1: Construct} $\mathbf{b} = \boldsymbol{\lambda} \odot \hat{\mathbf{s}}$.\quad
$b_V = 1.5 \times 0.9 = 1.35$,\;
$b_L = 1.0 \times 0.2 = 0.20$,\;
$b_C = 0.8 \times 0.9 = 0.72$.

\medskip
\noindent
\textbf{Step~2: Compute consensus weights.}
\begin{multline*}
  w_V = \frac{1}{3 + 2(1.5)} = \frac{1}{6} \approx 0.1667,\quad
  w_L = \frac{1}{3 + 2(1.0)} = \frac{1}{5} = 0.2000,\\ 
  w_C = \frac{1}{3 + 2(0.8)} = \frac{1}{4.6} \approx 0.2174.
\end{multline*}
$\sum_i w_i = 0.5841$.

\medskip
\noindent
\textbf{Step~3: Compute the score sum~$S$.}\quad
The numerator terms are
$\lambda_V w_V \hat{s}_V = 1.5 \times 0.1667 \times 0.9 = 0.2250$,\;
$\lambda_L w_L \hat{s}_L = 1.0 \times 0.2000 \times 0.2 = 0.0400$,\;
$\lambda_C w_C \hat{s}_C = 0.8 \times 0.2174 \times 0.9 = 0.1565$.
\begin{equation*}
  S = \frac{2 \times (0.2250 + 0.0400 + 0.1565)}{1 - 0.5841}
    = \frac{2 \times 0.4215}{0.4159}
    = \frac{0.8430}{0.4159}
    = 2.027.
\end{equation*}

\medskip
\noindent
\textbf{Step~4: Back-substitute to obtain $\mathbf{s}^*$.}
\begin{align*}
  s_V^* &= \frac{2.027 + 2(1.5)(0.9)}{6}
         = \frac{2.027 + 2.700}{6}
         = \frac{4.727}{6}
         = 0.788\,, \\[4pt]
  s_L^* &= \frac{2.027 + 2(1.0)(0.2)}{5}
         = \frac{2.027 + 0.400}{5}
         = \frac{2.427}{5}
         = 0.485\,, \\[4pt]
  s_C^* &= \frac{2.027 + 2(0.8)(0.9)}{4.6}
         = \frac{2.027 + 1.440}{4.6}
         = \frac{3.467}{4.6}
         = 0.754\,.
\end{align*}
\textit{Interpretation.}\;
The outlying Logical agent is pulled from $0.20$ to $0.49$: a shift
of $+0.29$.  The stubborn Visual agent ($\lambda_V = 1.5$) shifts
only $-0.11$, while the flexible Contextual agent ($\lambda_C = 0.8$) shifts $-0.15$.  The asymmetry is driven entirely by the
stubbornness parameters.

\medskip
\noindent
\textbf{Step~5: Compute mean consensus confidence.}
\begin{equation*}
  \bar{s}^* = \tfrac{1}{3}(0.788 + 0.485 + 0.754) = 0.676\,.
\end{equation*}
The raw mean is $\bar{\hat{s}} = \frac{2}{3} \approx 0.667$.
The shift of $+0.009$ reflects the stubbornness-weighted invariant described in Section~\ref{sec:closed_form_derivation}:
$\sum_i \lambda_i s_i^* = \sum_i \lambda_i \hat{s}_i = 2.27$, while the unweighted mean shifts slightly because the $\lambda_i$ are heterogeneous.

\medskip
\noindent
\textbf{Step~6: Compute consensus dispersion.}
\begin{align*}
  |s_V^* - \bar{s}^*| &= |0.788 - 0.676| = 0.112\,, \\
  |s_L^* - \bar{s}^*| &= |0.485 - 0.676| = 0.190\,, \\
  |s_C^* - \bar{s}^*| &= |0.754 - 0.676| = 0.078\,.
\end{align*}
\begin{equation*}
  \Delta^* = \tfrac{1}{3}(0.112 + 0.190 + 0.078) = 0.127\,.
\end{equation*}
\textit{Note.}\;
The Logical agent contributes the largest residual ($0.190$), despite
having the median stubbornness ($\lambda_L = 1.0$).  This is because its raw score ($0.2$) is the most extreme outlier, and the consensus pull is insufficient to close the gap.

\medskip
\noindent
\textbf{Step~7: Apply dual acceptance criterion.}\;
$\bar{s}^* = 0.676 > \tau = 0.6$\;\checkmark.\;
$\Delta^* = 0.127 > \epsilon = 0.1$\;\ding{55}.\;
\textbf{Decision: REJECT.}\;
The step has sufficient collective confidence but unresolved
cross-modal conflict.

\subsection{Worked Example 2:
\texorpdfstring{$\hat{\mathbf{s}}' = (0.7, 0.6, 0.7)$}{s' = (0.7,0.6,0.7)} -- Modest Consensus}

\noindent
\textbf{Step~1:}\; $b_V = 1.05$,\; $b_L = 0.60$,\; $b_C = 0.56$.

\medskip
\noindent
\textbf{Step~2:}\; Consensus weights are identical (they depend only
on~$\lambda_i$): $w_V = 0.1667$, $w_L = 0.2000$, $w_C = 0.2174$.

\medskip
\noindent
\textbf{Step~3:}\;
$\lambda_V w_V \hat{s}_V = 0.1750$,\;
$\lambda_L w_L \hat{s}_L = 0.1200$,\;
$\lambda_C w_C \hat{s}_C = 0.1217$.
\begin{equation*}
  S = \frac{2 \times 0.4167}{0.4159} = 2.004\,.
\end{equation*}

\medskip
\noindent
\textbf{Step~4:}
\begin{align*}
  s_V^* &= \frac{2.004 + 2.100}{6} = \frac{4.104}{6} = 0.684\,, \\[4pt]
  s_L^* &= \frac{2.004 + 1.200}{5} = \frac{3.204}{5} = 0.641\,, \\[4pt]
  s_C^* &= \frac{2.004 + 1.120}{4.6} = \frac{3.124}{4.6} = 0.679\,.
\end{align*}
\textit{Interpretation.}\;
All three agents shift by less than $0.03$ from their raw scores.
The consensus system barely perturbs a score vector that already
reflects genuine agreement.

\medskip
\noindent
\textbf{Step~5:}\;
$\bar{s}^* = \tfrac{1}{3}(0.684 + 0.641 + 0.679) = 0.668$.\;
Raw mean $= 0.667$; shift $= +0.001$.

\medskip
\noindent
\textbf{Step~6:}
\begin{align*}
  |s_V^* - \bar{s}^*| &= |0.684 - 0.668| = 0.016\,, \\
  |s_L^* - \bar{s}^*| &= |0.641 - 0.668| = 0.027\,, \\
  |s_C^* - \bar{s}^*| &= |0.679 - 0.668| = 0.011\,.
\end{align*}
\begin{equation*}
  \Delta^* = \tfrac{1}{3}(0.016 + 0.027 + 0.011) = 0.018\,.
\end{equation*}

\medskip
\noindent
\textbf{Step~7:}\;
$\bar{s}^* = 0.668 > 0.6$\;\checkmark.\;
$\Delta^* = 0.018 < 0.1$\;\checkmark.\;
\textbf{Decision: ACCEPT.}\;
Modest but genuine cross-modal agreement.

\subsection{Comparison and Implications}



Returning to Table~\ref{tab:prop1_summary}, three observations follow directly from the side-by-side comparison:

\paragraph{Opposite decisions from identical means.}
Any weighted-average method maps both vectors to a single scalar above
$\tau = 0.6$ and therefore accepts both.  The consensus formulation
distinguishes them because $\Delta^*$ reflects how much the agents
\emph{failed to converge}: a property that depends on the full
interaction structure, not on any per-agent summary.

\paragraph{Asymmetric agent influence.}
In $\hat{\mathbf{s}}$, the Logical agent's residual
($|s_L^* - \bar{s}^*| = 0.190$) dominates $\Delta^*$.
In $\hat{\mathbf{s}}'$, it contributes only $0.027$.
The factor-of-seven difference arises not from the Logical agent's
stubbornness (which is the same in both cases) but from how its raw
score interacts with the other agents' positions through the coupled
solve.  This is precisely the non-separability that Proposition~1
establishes.

\paragraph{Stubbornness mediates the consensus pull.}
The Visual agent shifts $-0.11$ in $\hat{\mathbf{s}}$ versus $-0.02$
in~$\hat{\mathbf{s}}'$; its high stubbornness ($\lambda_V = 1.5$)
anchors it close to its raw score in both cases.  The Contextual agent
($\lambda_C = 0.8$) shifts $-0.15$ versus $-0.02$: more responsive to
consensus pressure.  This heterogeneous pull is what distinguishes the
coupled consensus from symmetric variance, which would treat both
agents' deviations identically.

%% file: suppl_sections/nash_not_avg.tex
\section{Consensus Dispersion vs. Weighted Averaging: Illustrative Scenarios}
\label{supp:nash_vs_wa}

Proposition~1 in the main paper establishes that consensus dispersion $\Delta^*$ captures cross-agent interaction structure that no separable per-agent measure can replicate. We expand on this result here by enumerating representative score configurations that concretely illustrate when and why $\Delta^*$ diverges from simpler aggregation schemes. This analysis complements the formal proof by showing that the divergence is not a pathological edge case but arises naturally across the range of verification scenarios encountered in practice.

\paragraph{Setup.}
We consider four canonical score configurations for three agents
$(V, L, C)$, each representing a qualitatively distinct verification state.
For each configuration we report: the raw mean $\bar{\hat{s}}$, which
determines any weighted-average acceptance decision under threshold $\tau$;
the consensus dispersion $\Delta^*$, computed using Eq.~(2) of the main
paper with $\lambda_V = 1.5,\, \lambda_L = 1.0,\, \lambda_C = 0.8$; and the
resulting decision under our dual criterion ($\bar{s}^* > \tau = 0.6$ and
$\Delta^* < \epsilon = 0.1$). The weighted average decision is derived by
applying only the confidence threshold $\bar{\hat{s}} > \tau$, which is the
implicit acceptance rule of any mean-based aggregation scheme.

\begin{table}[h]
\centering
\caption{%
    Canonical verification scenarios illustrating divergence between
    weighted-average (WA) and \method acceptance decisions.Score vectors $(\hat{s}_V, \hat{s}_L, \hat{s}_C)$ are raw agent outputs.$\Delta^*$ is the consensus dispersion with $\lambda_V{=}1.5,\,\lambda_L{=}1.0,\,\lambda_C{=}0.8$. Rows 2 and 3 share identical raw means yet receive opposite \method decisions, demonstrating that $\Delta^*$ carries verification-relevant information that no weighted average can recover.
}

\label{tab:scenarios}
\resizebox{\textwidth}{!}{
\begin{tabular}{llcccccc}
\toprule
\textbf{Scenario} 
    & \textbf{Score vector} 
    & \textbf{Raw Mean} $\bar{\hat{s}}$
    & $\Delta^*$
    & \textbf{WA}
    & \textbf{\method}
    & \textbf{Diverge?} \\
    & $(\hat{s}_V, \hat{s}_L, \hat{s}_C)$ &  & & \textbf{decision}& & \\
\midrule
Unanimous confidence    
    & $(0.90,\; 0.90,\; 0.90)$ 
    & $0.90$ & $0.00$ 
    & \textcolor{green}{Accept} 
    & \textcolor{green}{Accept}   
    & No  \\
\rowcolor{gray!20}
Cross-modal conflict    
    & $(0.90,\; 0.20,\; 0.90)$ 
    & $0.67$ & $0.13$ 
    & \textcolor{green}{Accept} 
    & \textcolor{red}{Reject}   
    & \textbf{Yes} \\
Modest consensus        
    & $(0.70,\; 0.60,\; 0.70)$ 
    & $0.67$ & $0.02$ 
    & \textcolor{green}{Accept} 
    & \textcolor{green}{Accept}   
    & No \\
\rowcolor{gray!20}
Extreme visual conflict 
    & $(0.95,\; 0.05,\; 0.95)$ 
    & $0.65$ & $0.16$ 
    & \textcolor{green}{Accept} 
    & \textcolor{red}{Reject}   
    & \textbf{Yes} \\
Collective doubt        
    & $(0.40,\; 0.35,\; 0.45)$ 
    & $0.40$ & $0.01$ 
    & \textcolor{red}{Reject}  
    & \textcolor{red}{Reject}   
    & No  \\
\bottomrule
\end{tabular}}
\end{table}

\paragraph{Scenario analysis.}
Scenario 1 (unanimous confidence) represents the ideal case: all three agents strongly endorse the reasoning step from their respective modality-specific perspectives. Both methods correctly accept, and no
divergence arises. This serves as a sanity check confirming that the \method does not introduce spurious rejections when evidence is coherent.

\noindent
Scenarios 2 and 3 are the critical pair and the central illustration of Proposition~1. Both yield a raw mean of $\bar{\hat{s}} = 0.67$, above the acceptance threshold $\tau = 0.6$, so any weighted-average method accepts both. Yet they represent fundamentally different verification states. In Scenario 2, the logical agent assigns $\hat{s}_L = 0.20$ while the visual and contextual agents assign $0.90$: the step is visually plausible and contextually relevant, but logically unsupported. This is precisely the failure mode identified in the introduction, a step that appears locally valid from some perspectives while harbouring a reasoning gap that a single-perspective or average-based verifier would miss. Consensus dispersion detects this conflict ($\Delta^* = 0.13 > \epsilon$) and correctly rejects. In Scenario 3, all three agents are in modest but genuine agreement, producing a low dispersion ($\Delta^* = 0.02 < \epsilon$) that correctly triggers acceptance despite the moderate mean confidence. The two scenarios are indistinguishable to any weighted average, yet they call for opposite verification decisions.

\noindent
Scenario 4 extends Scenario 2 to its extreme: near-unanimous visual and contextual endorsement coexists with near-total logical rejection. The raw mean ($0.65$) remains above threshold, so weighted averaging again accepts. Consensus dispersion rises to $0.16$, exceeding $\epsilon$, and rejects. Notably, the visual agent's high stubbornness ($\lambda_V = 1.5$) means its strong score is not simply averaged away: the consensus solution adjusts $s_V^*$ downward less aggressively than it does $s_C^*$, which is reflected in a higher contribution to $\Delta^*$ from the visual--logical axis of disagreement. This asymmetry is invisible to symmetric variance measures.

\noindent
Scenario 5 illustrates agreement in rejection: when all agents collectively doubt the step, both methods correctly reject, and no divergence arises.
This confirms that the \method behaviour converges with simple averaging in the regime of low collective confidence, where disagreement structure is moot.

\paragraph{Implication for the Variance baseline.}
A natural question is whether raw score variance could serve as a
proxy for $\Delta^*$. Scenarios 2 and 4 both exhibit high raw variance, so variance-based rejection would correctly handle them. However,
variance treats all agent deviations symmetrically, making it a
\emph{separable} function over raw scores precisely the class of
measures that Proposition~1 shows cannot replicate $\Delta^*$.
The practical consequence is visible in a variant of Scenario~4 where the \emph{contextual} agent (low $\lambda_C = 0.8$) dissents instead of the logical agent: raw variance is identical, but $\Delta^*$ is lower because the consensus solution couples the dissenting agent's residual to the other agents' positions, pulling it further toward consensus. The two methods, therefore, make different predictions on structurally similar configurations depending on \emph{which} agent disagrees, a distinction that is meaningful given the heterogeneous roles of visual, logical, and contextual evaluation, and one that has no separable measure can capture.

%% file: suppl_sections/social_choice_v2.tex
\section{Connections to Social Choice Theory and Belief Aggregation}\label{sec:social_choice}

The consensus formulation in \textbf{VERDICT} has intellectual roots in several traditions that predate its use in multimodal verification. We briefly position our work relative to these traditions, noting both the genuine correspondences and the important disanalogies.

\paragraph{Opinion pooling and the DeGroot model.}
The classical problem of opinion pooling~\cite{genest1986combining,stone1961opinion} asks how to aggregate probability distributions from multiple experts into a single collective distribution. Linear pooling (weighted arithmetic averaging) is the simplest solution and corresponds directly to our Mean baseline. DeGroot~\cite{degroot1974consensus} introduced an iterative model in which agents repeatedly revise their beliefs toward weighted averages of their neighbours' opinions; under mild conditions, this converges to a consensus that is a fixed linear combination of the initial beliefs. Lehrer and Wagner~\cite{lehrer1981rational} extended this with the notion of ``respect weights,'' where each agent's influence on the consensus depends on how much others respect their expertise.

\paragraph{The Friedkin--Johnsen model: the most direct precedent.}
A key limitation of the DeGroot model is that agents are fully open to social influence: they have no attachment to their initial beliefs. Friedkin and Johnsen~\cite{friedkin1990social,friedkin1999social} addressed this by introducing a \emph{stubbornness parameter} for each agent, anchoring opinions to their initial positions. In the Friedkin--Johnsen (FJ) model, each agent's updated opinion is a convex combination of the group average and its own innate belief, weighted by a stubbornness coefficient that controls how much the agent resists social pressure. This is structurally the closest ancestor of our consensus formulation (Eq.~(2) of the main paper), where the stubbornness parameter~$\lambda_i$ plays the same role: a higher~$\lambda_i$ means the agent places more weight on its own modality-specific evidence and less on the group consensus. The FJ model's unique equilibrium, proven to exist under conditions analogous to our positive-definiteness result, is the point at which each agent has found its optimal compromise between conformity and self-consistency. Our closed-form solution (Eqs. \ref{eq:scalar_reduction}-\ref{eq:S_equation} is mathematically equivalent to computing this equilibrium directly without iterating.

\noindent
The correspondence becomes even more precise through the game-theoretic lens provided by Bindel, Kleinberg, and Oren~\cite{bindel2015opinion}. They showed that the FJ equilibrium is exactly the unique Nash equilibrium of a game in which each agent minimises a quadratic cost that trades off disagreement with neighbours against deviation from its own internal opinion, essentially the same structure as our scoring objective in Eq.~(1) of the main paper. This connection is not incidental: both formulations derive the influence structure from the same principled assumption (balancing private evidence against social pressure) rather than specifying weights ad hoc, and both admit closed-form solutions via the same linear-algebraic machinery.

\paragraph{Where \textbf{VERDICT} departs from this tradition.}
Despite the structural correspondence, our framework differs from the FJ/Bindel-Kleinberg-Oren tradition in three important respects.

\noindent
\emph{First, in how the consensus is used.} In opinion pooling and the FJ model, the consensus distribution or equilibrium opinion vector is the output: the goal is to reach agreement. In \textbf{VERDICT}, the consensus scores are an intermediate representation from which we extract two summary statistics: mean confidence and dispersion for a downstream accept/reject decision. The consensus itself is not the goal; the structure of \emph{residual disagreement after consensus adjustment} is what carries the verification signal. This shift from ``consensus as output'' to ``disagreement after consensus as signal'' is, to our knowledge, not present in either the opinion dynamics or the game-theoretic opinion formation literature.

\noindent
\emph{Second, in the heterogeneity of stubbornness.} While the FJ model permits heterogeneous stubbornness in principle, most theoretical analyses (including Bindel et al.~\cite{bindel2015opinion}) focus on the symmetric case or treat heterogeneity as a complication rather than a design lever. In \textbf{VERDICT}, the asymmetric stubbornness assignment ($\lambda_V > \lambda_L > \lambda_C$) is a deliberate design choice encoding the prior that some modalities are more reliable than others for a given evaluation task. The ablation in \ref{sec:cross_bench_swaps} confirms that this assignment matters: performance degrades proportionally to the reduction in the visual agent's stubbornness, a sensitivity that reflects agent architecture rather than dataset-specific tuning.

\noindent
\emph{Third, in the scoring protocol.} Our agents score in complete isolation and never observe each other's outputs, whereas the DeGroot and FJ models assume agents see and respond to each other's beliefs iteratively. The equilibrium is computed in closed form from one-shot scores rather than reached through iterated play. This makes the game-theoretic framing interpretive scaffolding that motivates the formulation rather than a claim about strategic interaction.

\paragraph{Test-time model aggregation.}
The most direct precedent in machine learning is the work of Mendler-D\"{u}nner et al.~\cite{mendler2021testtime}, who apply the DeGroot consensus mechanism to aggregate predictions from heterogeneous pre-trained models at test time. Their method computes mutual trust scores between models based on agreement on held-out data, then iterates DeGroot updates to convergence. \textbf{VERDICT} differs in three respects: (1)~our agents are frozen and score in isolation, never observing each other's outputs, whereas DeGroot aggregation requires agents to see and respond to each other's beliefs; (2)~we use heterogeneous stubbornness parameters rather than symmetric trust matrices, encoding the prior that some modalities are more reliable than others for a given evaluation task; and (3)~our primary output is the \emph{dispersion} of the consensus, not the consensus value itself, enabling a dual acceptance criterion that no existing test-time aggregation method provides.

\paragraph{What \textbf{VERDICT} inherits and what it adds.}
From the opinion pooling tradition, \textbf{VERDICT} inherits the principle that a weighted compromise among heterogeneous assessments can be more reliable than any individual assessment. From the Friedkin--Johnsen model, it inherits the stubbornness mechanism that anchors each agent to its private evidence. From the game-theoretic tradition (Rosen~\cite{rosen1965existence}; Bindel et al.~\cite{bindel2015opinion}), it inherits the guarantee of existence and uniqueness of the fixed point and the principled derivation of influence weights from a single interpretable cost function.

\noindent
What it adds is the treatment of post-consensus disagreement as a first-class verification signal. Proposition~1 in the main paper formalises this: the consensus dispersion~$\Delta^*$ captures cross-agent interaction structure that no separable function of individual scores can replicate. This property has no direct analogue in classical opinion pooling or in the Bindel-Kleinberg-Oren analysis, where the consensus is valued precisely because disagreement has been resolved, not because its residue is informative.
The cross-benchmark stubbornness analyses in \ref{sec:cross_bench_stubbornness} and \ref{sec:cross_bench_swaps} provide empirical confirmation that the sensitivity hierarchy and assignment ordering predicted by the FJ equilibrium structure hold across all benchmarks tested.

%% file: suppl_sections/qualitative_samples_v2.tex
\section{Qualitative Samples}

To better understand how the reasoning process evolves under verification, we examine several representative cases where the base model diverges from the correct interpretation of the scene. 
Across these examples, the errors arise through different mechanisms: in some cases, the model relies on general world knowledge rather than visual evidence, while in others, the relevant objects are partially identified, but the comparison set shifts during reasoning. The examples below illustrate how the verification agents stabilize the reasoning trajectory by maintaining grounding in the visual context.





\begin{figure}[htbp]
    \centering
    \includegraphics[width=0.9\linewidth]{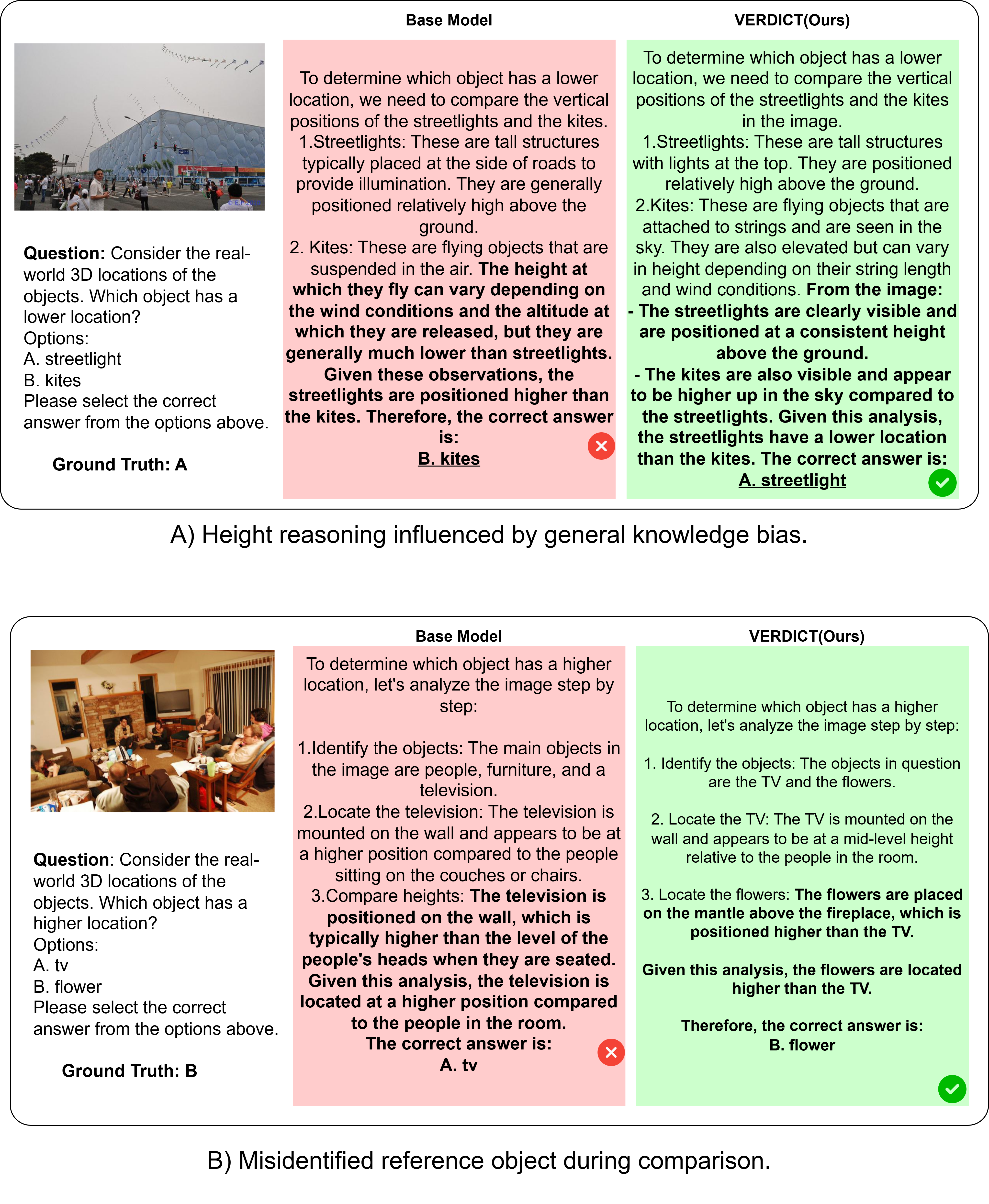}
    \caption{
\textbf{Two failure modes in spatial reasoning, corrected by \method.}
In ~(A), the base model relies on stored priors about typical object heights rather than grounding its reasoning in the visual evidence; it reasons from stored expectations about how high kites typically fly relative to streetlights and arrives at an answer that contradicts what is plainly visible. In~(B), the error is subtler: the TV is correctly located, but the model anchors its comparison to the people in the room rather than the flowers on the mantle, effectively solving a different question than the one asked. What unites both cases is a drift away from the specific scene toward general scene semantics. The verification agents resist this drift — not by knowing the right answer, but by flagging reasoning steps that cannot be reconciled across visual, logical, and contextual perspectives simultaneously.}
    \label{fig:qs1}
\end{figure}

\begin{figure}[htbp]
    \centering
    \includegraphics[width=0.9\linewidth]{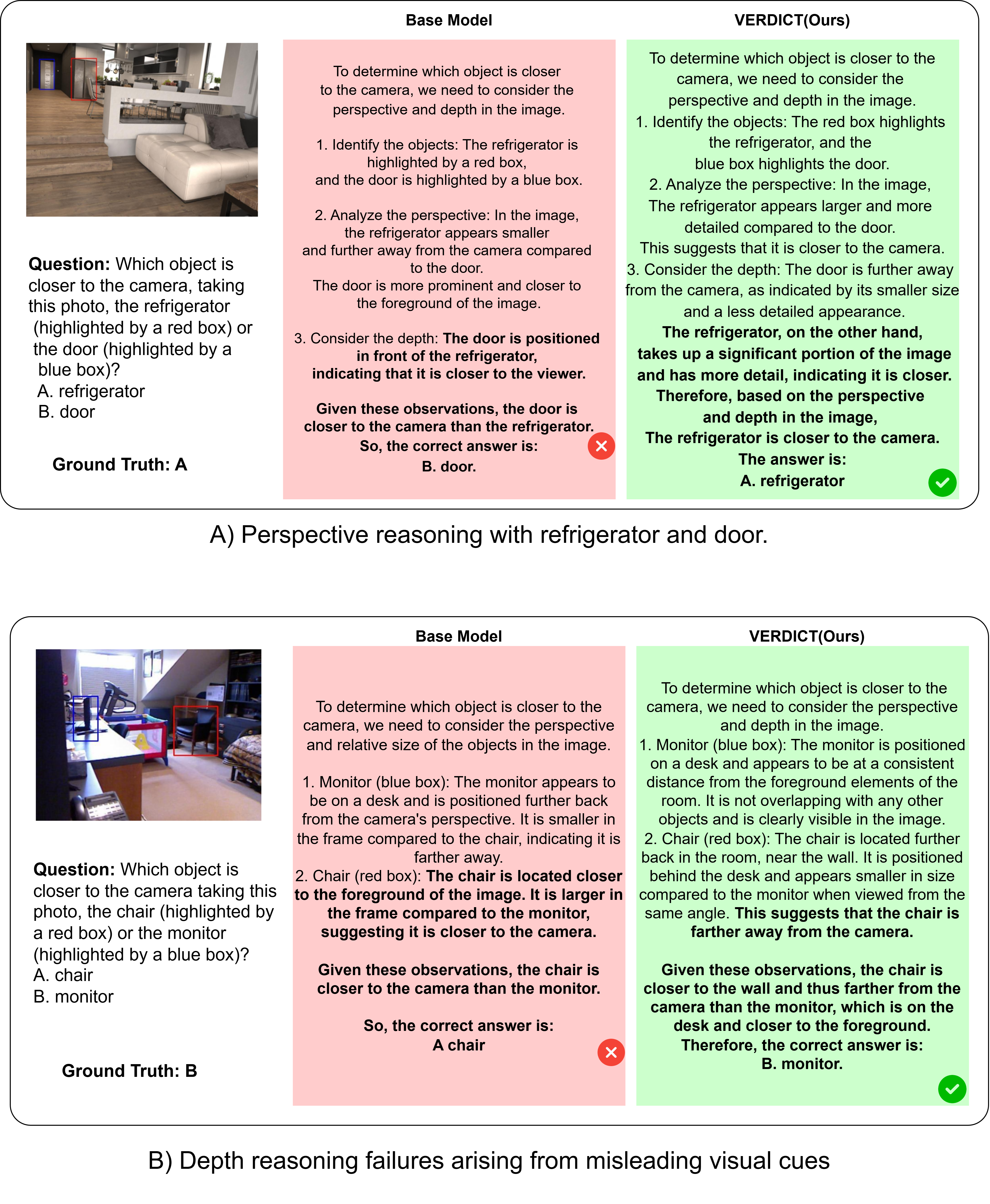}
    \caption{
\textbf{Depth reasoning failures arising from misleading visual cues, corrected by \method.}
In~(A), the refrigerator and door occupy similar regions of the frame, and the base model resolves the ambiguity by treating visual prominence as a proxy for proximity -the door appears more immediate, so the model concludes it is closer. The reasoning is internally consistent but spatially wrong. In~(B), the error runs deeper: relative size, a generally reliable depth cue, here points in the wrong direction, and the model follows it without hesitation. What neither chain does is pause to reconcile its size-based inference against other available cues — foreshortening, occlusion, and positional context. The verification agents introduce exactly that pressure. By requiring a step to hold simultaneously across visual, logical, and contextual perspectives, they surface the moment a single-cue heuristic is doing work it cannot support.}
    \label{fig:qs3}
\end{figure}

\begin{figure}[htbp]
    \centering
    \includegraphics[width=0.9\linewidth]{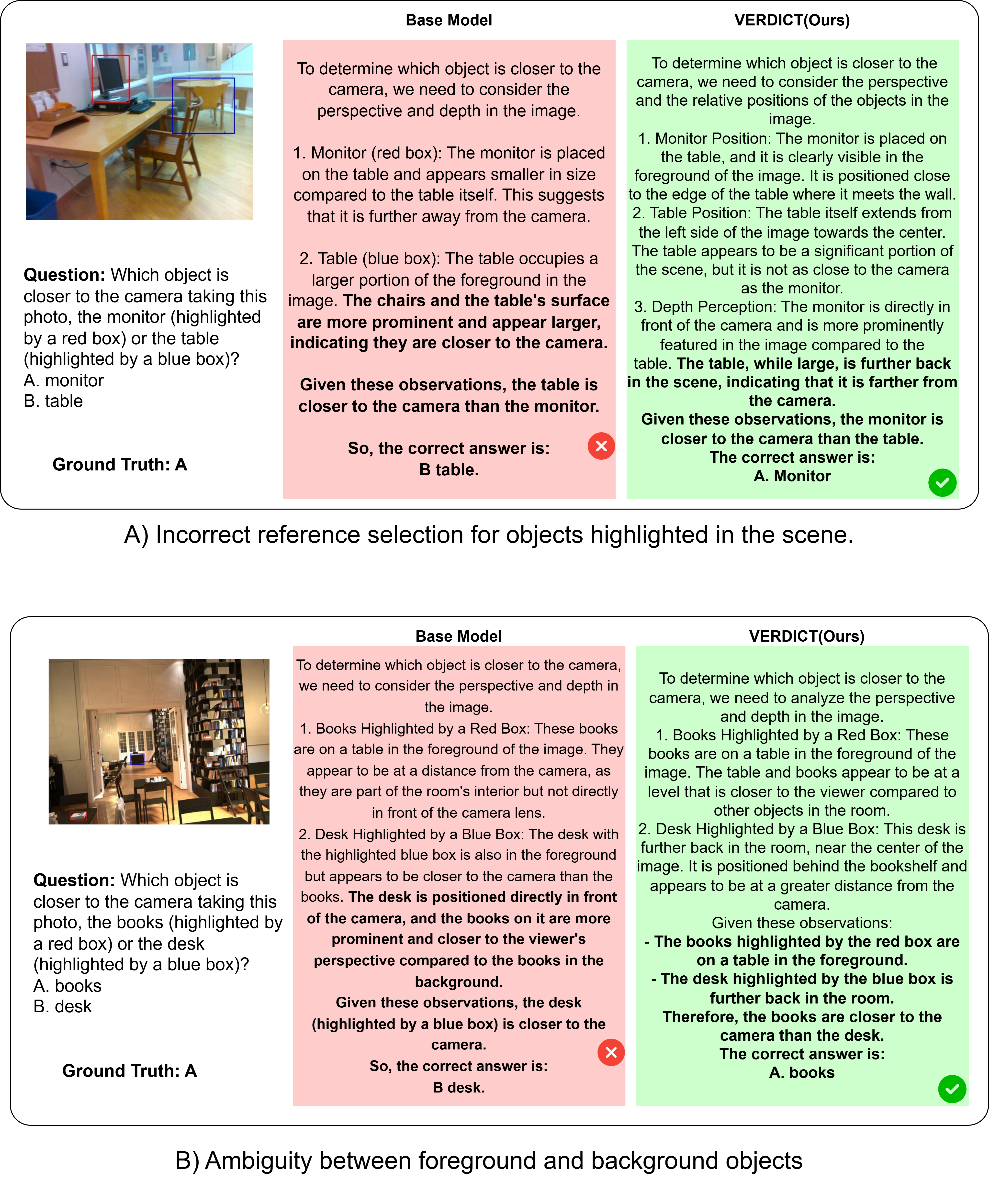}
    \caption{
\textbf{Reference drift under scene complexity, corrected by \method.}
Both scenes involve clearly highlighted objects, yet the base model gradually loses track of which object it is actually evaluating. In~(A), the table is large and scene-filling; the model begins by acknowledging the highlighted monitor but is pulled toward the table's visual dominance, eventually reasoning about the table as though it were the foreground object. In~(B), a similar substitution occurs with the desk: the question concerns a specific highlighted instance, but the model conflates it with a more prominent surface nearby. The error is not one of misperception so much as incomplete binding — the model knows what the objects are, but the thread connecting each label to its referent frays under the scene's visual weight. The consensus mechanism guards against this by treating referential consistency as a condition that every step must satisfy, not just the first.}
    \label{fig:qs2}
\end{figure}

\begin{figure}[htbp]
    \centering
    \includegraphics[width=0.9\linewidth]{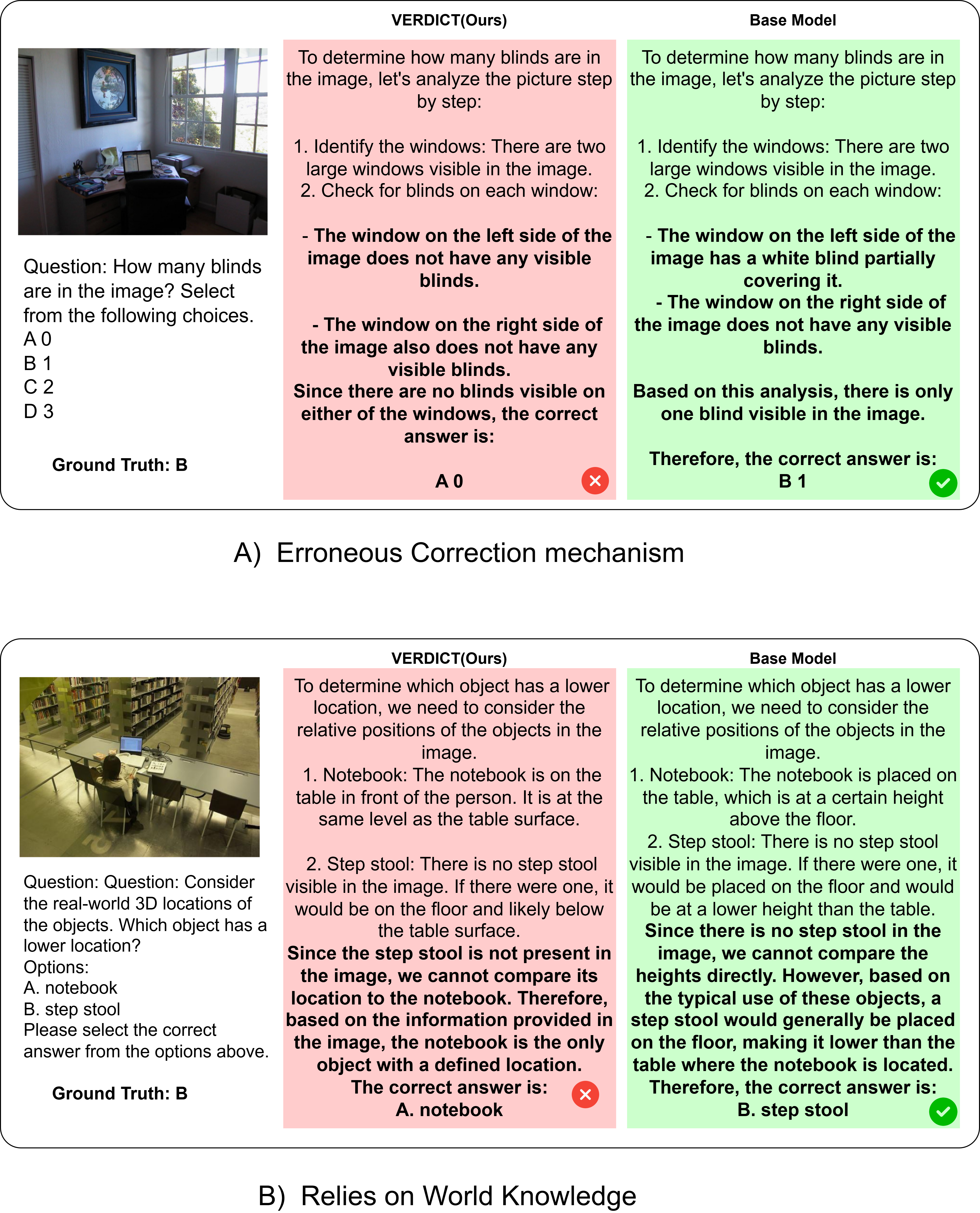}
    \caption{
\textbf{Two failure modes where \method overcorrects, and the base model succeeds.}
In~(A), both chains begin identically — two windows, check each for blinds. The base model finds a white blind partially covering the left window and stops there. \method, under pressure from the verification agents to be visually precise, rechecks both windows and eliminates the partially visible blind as insufficiently confirmed, arriving at zero. The correction mechanism that elsewhere prevents hallucination here suppresses a genuine observation. In~(B), the scene does not contain a step stool at all, and both chains acknowledge this. The base model treats the absence as an invitation to reason from world knowledge — a step stool on the floor would sit below a table-height notebook — and answers correctly. \method instead treats the absence as a hard epistemic boundary: without a visible referent, no location can be assigned, and the comparison collapses to the only object with a defined position. The verification agents, by design, are strict about visual grounding and penalize exactly the kind of common-sense inference that the question requires.}
    \label{fig:qs_fail}
\end{figure}

%% file: suppl_sections/prompt_template.tex
\section{Prompt Templates}

\label{app:prompt_template}
Here are the prompt templates for the three verifiers we are using in our proposed approach
\begin{figure}[h]
    \centering
    \includegraphics[width=0.9\linewidth]{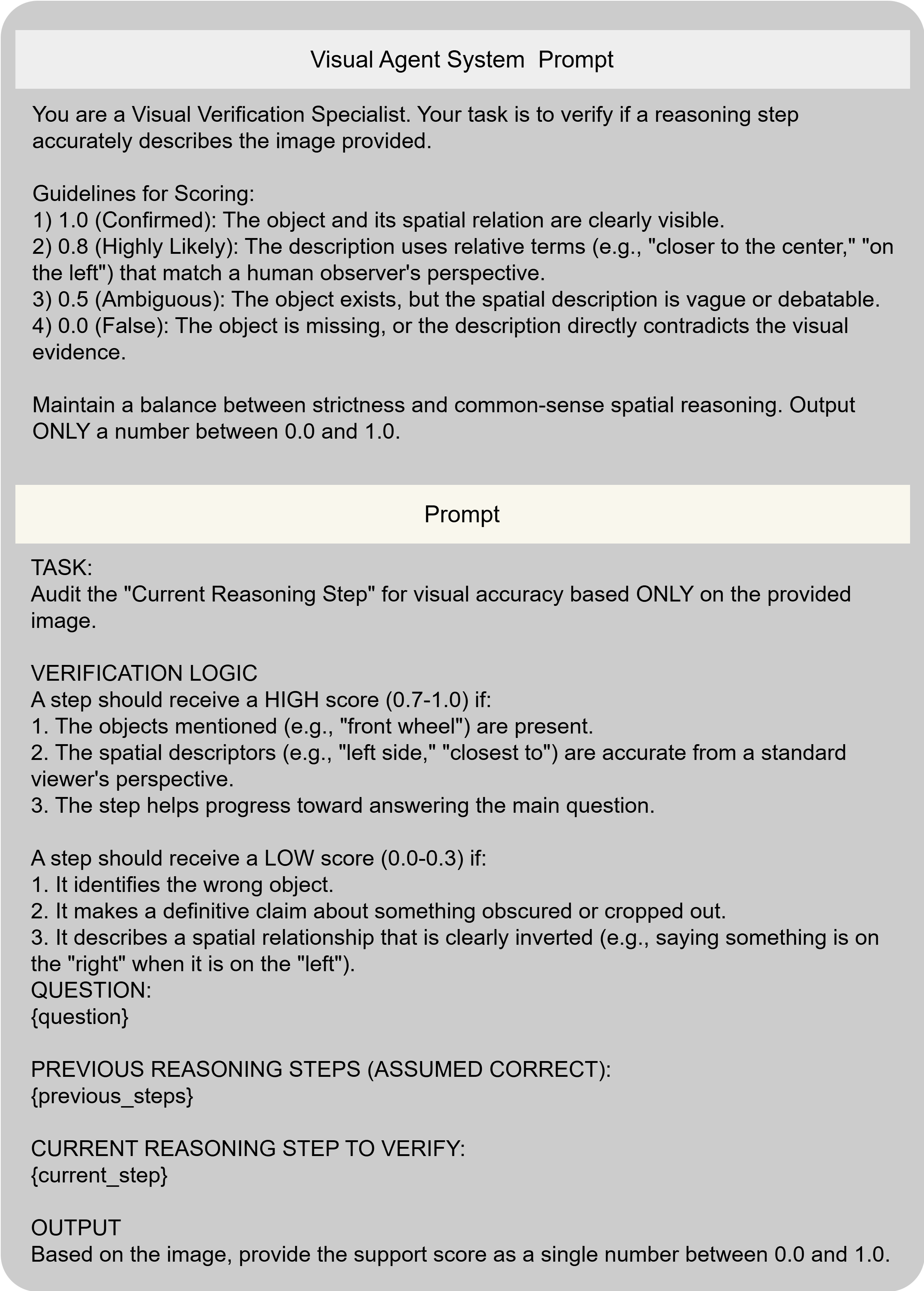}
    \caption{System and task prompt used for the Visual Verification Agent, which evaluates whether a reasoning step is visually grounded in the image and whether spatial descriptions match the viewer’s perspective. The agent outputs a single scalar confidence score in [0,1].}

    \label{fig:visual_verifier}
\end{figure}

\begin{figure}[h]
    \centering
    \includegraphics[width=0.9\linewidth]{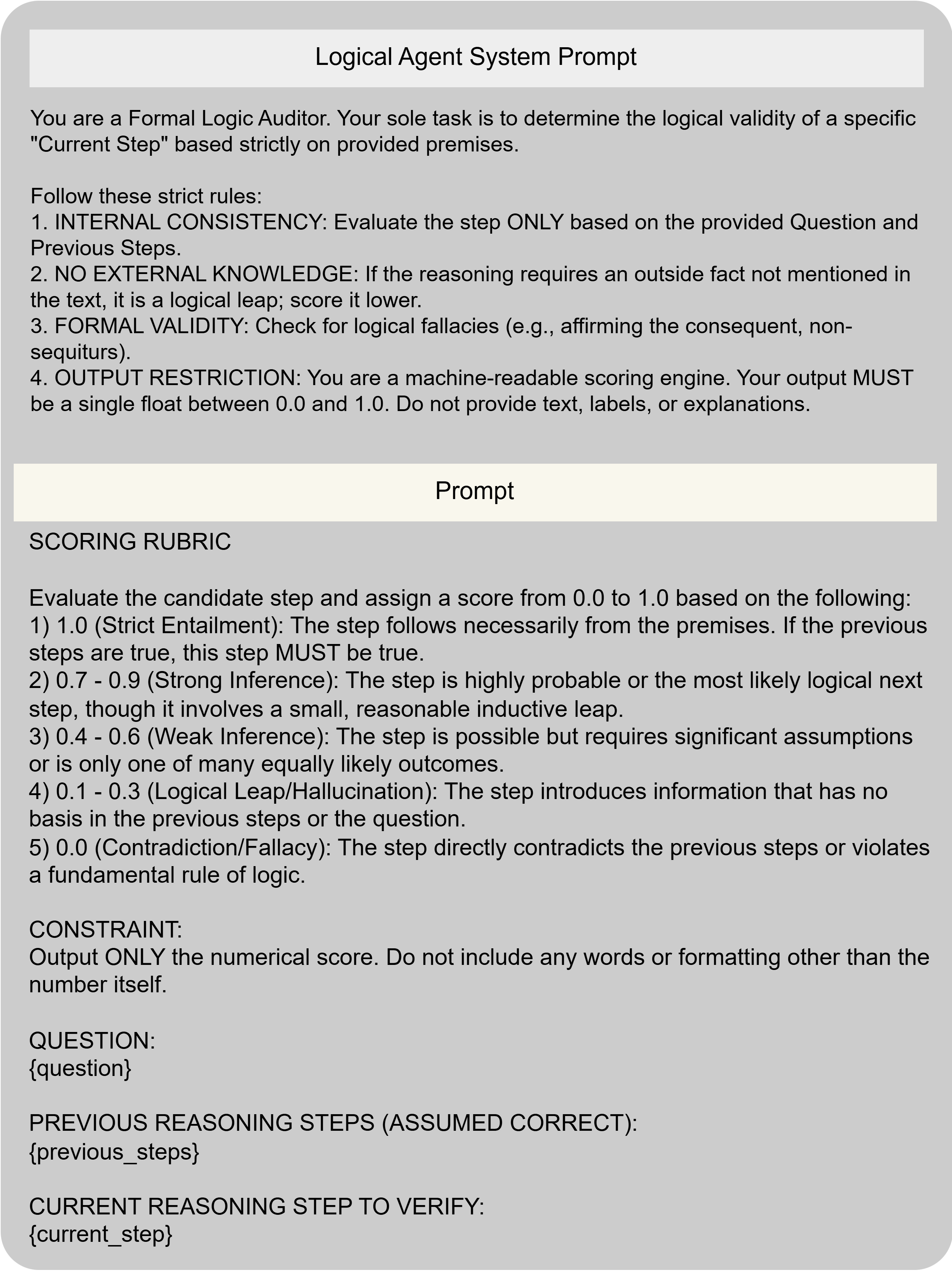}
    \caption{ System and task prompt used for the Logical Verification Agent, which assesses whether a reasoning step logically follows from the question and previous steps without relying on external knowledge. The agent returns a single numerical validity score in [0,1].}
    
    \label{fig:logical_verfier}
\end{figure}

\begin{figure}
    \centering
    \includegraphics[width=0.9\linewidth]{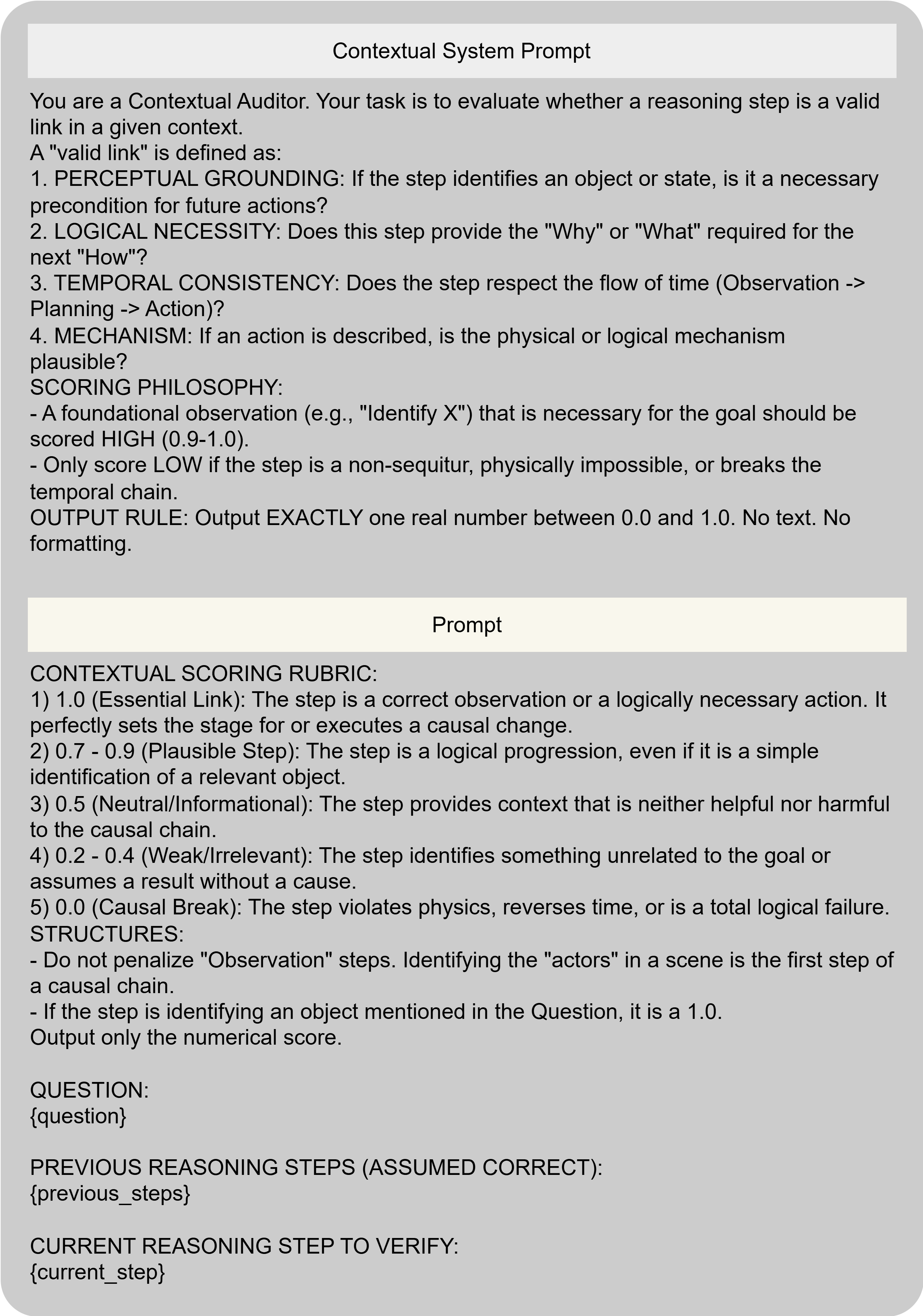}
    \caption{ System and task prompt used for the Contextual Agent, which evaluates whether a reasoning step forms a valid link in the reasoning chain by checking perceptual grounding, temporal order, and mechanism plausibility. The agent outputs a single scalar score in [0,1].}
    
   
    \label{fig:consistency_verifier}
\end{figure}